\documentclass[11pt]{article}
\usepackage{amsmath,amsthm,amssymb,mathtools,mathrsfs}
\usepackage[a4paper,margin=1in]{geometry}
\mathtoolsset{showonlyrefs}
\usepackage{graphicx,placeins}
\usepackage[ruled,vlined,algo2e]{algorithm2e}
\usepackage[numbers,sort&compress]{natbib}
\usepackage{booktabs,array,multirow,xcolor}
\usepackage[nopatch=eqnum]{microtype}
\usepackage[shortlabels]{enumitem}
\usepackage{xurl}

\usepackage[normalem]{ulem}

\usepackage[colorlinks=true,
            linkcolor=blue,
            citecolor=blue,
            urlcolor=blue]{hyperref}
\hypersetup{
 pdftitle={Differentially Private Synthetic Data via Generalized Moment Matching},
 pdfauthor={Rundong Ding, Yiyun He, Yizhe Zhu}}

\newtheorem{theorem}{Theorem}[section]
\newtheorem{definition}[theorem]{Definition}
\newtheorem{lemma}[theorem]{Lemma}
\newtheorem{remark}[theorem]{Remark}

\newtheorem{corollary}[theorem]{Corollary}
\newenvironment{keywords}{\par\smallskip\noindent\textbf{Keywords: }}{\par}

\DeclareMathOperator{\MMD}{MMD}
\DeclareMathOperator{\SW}{SW}
\DeclareMathOperator{\conv}{conv}
\DeclareMathOperator{\tr}{tr}

\DeclareMathOperator{\sgn}{sgn}

\newcommand{\R}{\mathbb R}
\newcommand{\E}{\mathbb E}
\newcommand{\Prob}{\mathbb P}
\newcommand{\eps}{\varepsilon}
\newcommand{\cH}{\mathcal H}
\newcommand{\cP}{\mathcal P}
\newcommand{\norm}[1]{\lVert #1\rVert}
\newcommand{\ip}[2]{\langle #1,#2\rangle}

\newcommand{\revision}[1]{#1}

\newcommand{\etail}{e_{\mathrm{tail}}}
\newcommand{\egrid}{e_{\mathrm{grid}}}
\newcommand{\eopt}{e_{\mathrm{opt}}}
\newcommand{\epriv}{e_{\mathrm{priv}}}

\newcommand{\Neff}{N_{\eps,\delta}}

\SetKwInput{KwIn}{Input}
\SetKwInput{KwOut}{Output}
\DontPrintSemicolon
\title{Differentially Private Synthetic Data via Generalized Moment Matching}
\author{
Rundong Ding\thanks{Department of Mathematics, University of Southern California, rundongd@usc.edu}
\and Yiyun He\thanks{Department of Mathematics, University of California San Diego, yih130@ucsd.edu}
\and Yizhe Zhu\thanks{Department of Mathematics, University of Southern California, yizhezhu@usc.edu}}

\date{October 19, 2026}

\begin{document}
\maketitle
\begin{abstract}
We study $(\eps,\delta)$-differentially private synthetic data for arbitrary databases of $n$ records in $[-1,1]^d$ under a specified integral probability metric (IPM). We introduce generalized moment matching, which estimates moments privately, fits a probability measure, and samples synthetic records. A general error theorem applies when moment error controls the target IPM. This framework extends Chebyshev moment matching for $1$-Wasserstein distance and smooth queries \cite{musco2025,ding2026}.
{We apply the framework to maximum mean discrepancy with
Mat\'ern and Laplace kernels and a two-layer ReLU neural tangent kernel.
We also obtain guarantees for sliced Wasserstein-$1$ and the IPM induced
by norm-bounded ReLU networks. With sufficiently many synthetic records,
these rates are minimax optimal up to logarithmic factors for fixed
dimension.}
\end{abstract}
\begin{keywords}
differential privacy, synthetic data, integral probability metrics, maximum mean discrepancy, sliced Wasserstein distance
\end{keywords}

\section{Introduction}\label{sec:introduction}

Differential privacy (DP) controls how much the output distribution of a
statistical analysis can change when one record is replaced, protecting
individual contributions even in the presence of auxiliary information
\cite{dwork2014}. It has been deployed in the 2020 U.S. Census
\cite{abowd2019census,hawes2020implementing,hauer2021differential} and by
technology companies \cite{cormode2018privacy,dwork2019differential}.

Much of the existing DP literature focuses on designing task-specific mechanisms tailored to a predefined workload, such as a fixed set of queries or a particular model class. While often highly accurate, these approaches can require substantial expertise, ranging from sensitivity analysis and mechanism selection to careful privacy accounting across iterative computations. A complementary and increasingly practical alternative is differentially private synthetic data generation, where one releases a synthetic dataset that approximates statistical properties of the original data while satisfying differential privacy \cite{hardt2012,bellovin2019privacy,wasserman2010,barak2007}. By the postprocessing property of differential privacy \cite{dwork2014}, analysts can subsequently perform a wide range of downstream tasks on the released synthetic dataset without incurring additional privacy loss.

\subsection{Private synthetic data and query accuracy}

Let $\Omega\coloneqq[-1,1]^d$, and let $X\coloneqq(x_1,\ldots,x_n)\in\Omega^n$ be an arbitrary database. We seek an $(\eps,\delta)$-private synthetic dataset $Y\coloneqq(Y_1,\ldots,Y_m)\in\Omega^m$, for a publicly chosen integer $m\ge1$, whose empirical measure approximates that of $X$:
\begin{equation}\label{eq:empirical-measures}
 p_X\coloneqq\frac1n\sum_{i=1}^n\delta_{x_i},
 \qquad
 p_Y\coloneqq\frac1m\sum_{r=1}^m\delta_{Y_r}.
\end{equation}
Every synthetic record has weight $1/m$, and records may repeat. Our goal is to approximate the observed database, with no sampling assumption on its records.

Statistical queries average functions over records; moments and empirical risks are examples. Private release methods use contingency tables \cite{barak2007}, learning-theoretic constructions \cite{blum2013,bousquet2020}, and multiplicative-weights mechanisms \cite{hardt2010,hardt2012,ullman2015}. Other methods alternate private measurement with synthesis \cite{liu2021}, infer distributions from noisy marginals \cite{mckenna2019}, or use private sampling \cite{boedihardjo2022sampling} and Gibbs exponential mechanisms \cite{asadi2023gibbs}.

On continuous domains, accuracy depends on the query class. Uniform accuracy over all $1$-Lipschitz queries is measured by $1$-Wasserstein distance \cite{villani2009}. Spatial constructions \cite{boedihardjo2024,he2023} and Chebyshev moments \cite{musco2025} achieve private error of order $\widetilde O((n\eps)^{-1/d})$ on the cube, which is slow in high dimension. Faster rates are possible through smoothness \cite{wang2016,ding2026}, low-dimensional or intrinsic geometry \cite{he2025lowdim,he2026intrinsic,zaremoayedi2026geometry}, or sparse Lipschitz queries \cite{donhauser2024}. Related work studies instance-dependent density estimation \cite{feldman2024wasserstein}.

Computational feasibility poses a separate challenge in high dimension, even for two-way marginals. Assuming one-way functions exist, the results of \cite{ullman2020} rule out DP algorithms running in $\operatorname{poly}(n,d)$ time that, for every Boolean database with $n=\operatorname{poly}(d)$, produce synthetic Boolean records preserving all two-way marginals to sufficiently small constant additive error with probability at least $2/3$. This holds for fixed $\eps>0$ and negligible $\delta$ in $n$.

Since prior utility guarantees control errors over specified classes of test functions, we seek a general algorithmic framework for private synthetic data generation with utility measured by a specified \textit{integral probability metric} (IPM) \cite{sriperumbudur2009integral}.

For probability measures $p,q$ on $\Omega$ and any nonempty class $\mathcal F$ of measurable functions integrable with respect to both, the IPM is defined by
\begin{equation}\label{eq:general-ipm}
 d_{\mathcal F}(p,q)\coloneqq
 \sup_{h\in\mathcal F}\left|\int h\,dp-\int h\,dq\right|.
\end{equation}
 Our applications include maximum mean discrepancy (MMD), sliced Wasserstein-$1$, and the IPM induced by norm-bounded two-layer ReLU networks.

 \paragraph{Maximum mean discrepancy (MMD)}
 For a continuous positive definite kernel $K$ with reproducing kernel Hilbert space (RKHS) $\cH_K$, MMD uses its unit ball:
\begin{equation}\label{eq:intro-mmd}
 \MMD_K(p,q)\coloneqq\sup_{\norm{h}_{\cH_K}\le1}\left|\int h\,dp-\int h\,dq\right|.
\end{equation}
Thus small MMD controls every query in the RKHS unit ball. Equivalently, MMD is the distance between the kernel means of $p$ and $q$ \cite{gretton2012}, which connects query accuracy to matching averages of kernel features.

Different kernels express different statistical requirements. Polynomial kernels preserve finitely many moments, Gaussian kernels specify very smooth queries \cite{rasmussen2006}, and Mat\'ern kernels span a range of Sobolev smoothness \cite{wendland2005}. Neural tangent kernels (NTKs) use parameter-gradient features to connect MMD with neural-network training \cite{bietti2019ntk}. Both Gaussian and Mat\'ern kernels are characteristic, so zero MMD identifies a distribution \cite{sriperumbudur2010}; their private synthesis rates can nevertheless differ.

Private kernel-mean synthesis has consistency guarantees \cite{balog2018}, and private dataset reduction has empirical MMD bounds \cite{liu2022thesis}. Existing methods use Fourier or Hermite features \cite{harder2021,vinaroz2022}, Nystr\"om approximations \cite{lee2024}, or spatial partitions \cite{kreacic2023}. Methods using public perceptual or neural tangent features have excess-MMD guarantees under idealized optimization \cite{harder2023mepf,yang2024ntk}. Random features apply to many kernels \cite{rahimi2007}, including Laplace and Mat\'ern \cite{ahir2025features}.

Kernel-mean estimation without privacy \cite{muandet2017,sriperumbudur2012,tolstikhin2017} and population estimation under MMD \cite{singh2018} are well studied. Related results for Besov IPMs \cite{uppal2019} and Sobolev losses under local privacy \cite{albert2026} impose density assumptions. Our guarantees concern an arbitrary observed database; population guarantees additionally require control of sampling and generalization error \cite{bartlett2002}.

\paragraph{Sliced Wasserstein distance} Sliced Wasserstein-$1$ averages one-dimensional transport costs over projections \cite{rabin2012,bonneel2015}. Its projected transport problems reduce to sorting and support generative modeling \cite{deshpande2018}. On the cube, $m$ independent samples have expected sliced error $O_d(m^{-1/2})$, faster than the worst-case $m^{-1/d}$ rate for $1$-Wasserstein when $d\ge3$ \cite{nadjahi2020}. This is an IPM beyond MMD.

Previous private methods use noisy projected records \cite{rakotomamonjy2021}, sliced Wasserstein objectives \cite{liu2025sliced}, smoothed sliced $f$-divergences \cite{greenewald2024slicing}, Gaussian-smoothed sliced Wasserstein-$2$ gradient flows \cite{sebag2025}, and private sliced Wasserstein gradients \cite{rodriguez2026}. Our bounds concern sliced Wasserstein-$1$ relative to arbitrary input databases, with matching polynomial lower bounds, and improve the Wasserstein exponent for $d\ge2$ (Theorem~\ref{thm:sliced-wasserstein}).

\paragraph{IPM induced by ReLU networks} Our bound controls the expected largest change in average output across norm-bounded two-layer ReLU networks of any width when original data are replaced by DP synthetic data (Theorem~\ref{thm:neural-ipm}). Neural-network IPMs underlie GAN objectives \cite{arora2017gan}; related ReLU metrics support covariate balancing and fair representation learning \cite{park2025relu}.

Designing separate methods for these diverse query classes is challenging. We aim to answer the following question:
\emph{Can a general algorithmic framework generate DP synthetic data with utility bounds adapted to the query class?}

\subsection{Generalized moment matching and main results}\label{sec:main-results}

Generalized moment matching unifies earlier private synthesis methods and our constructions: estimate public moments privately, fit a probability measure, and sample equally weighted records. The general theorem converts moment-estimation error into IPM error. Our main technical contributions are the moment comparisons, private estimators, and lower bounds yielding sharp polynomial rates for Mat\'ern MMD, sliced Wasserstein-$1$, and ReLU IPMs.

\begin{definition}[Generalized moment matching]
For a public bounded Borel map $f:\Omega\to\R^D$ and a norm $\|\cdot\|_*$
chosen for the target IPM,
let $\theta(p)\coloneqq\int f\,dp$ be the generalized moment vector of a target
probability measure $p$ on $\Omega$.
The recovery input $z\in\R^D$ estimates $\theta(p)$; in
Algorithm~\ref{alg:features}, $p=p_X$ and
$z=\widehat\theta=\mathcal A(X)$ is a differentially private estimate.
For a nonempty finite public candidate set $G\subset\Omega$,
define the convex hull of $f(G)=\{f(g):g\in G\}$ by
\[
 \mathcal C_G\coloneqq\conv f(G)
 =\left\{\sum_{g\in G}a_gf(g):
 a_g\ge0,\ \sum_{g\in G}a_g=1\right\}.
\]
Thus $\mathcal C_G$ consists exactly of the moment vectors of probability
measures supported on $G$. Generalized moment matching recovers
$q\in\cP(G)$ with
\[
 \theta(q)\in\arg\min_{v\in \mathcal C_G}\|v-z\|_*.
\]
In the Euclidean norm, the minimizing moment vector is unique and denoted $\Pi_{\mathcal C_G}(z)$.
Algorithm~\ref{alg:features} chooses a measure for sampling and allows fitting tolerance $\eopt$.
\end{definition}

For example, when $d=1$, choose $f(x)=(x,x^2)$. Then
\[
 \theta(p)=\left(\int x\,dp(x),\int x^2\,dp(x)\right)
\]
contains the mean and second moment of $p$, or their empirical
counterparts when $p=p_X$. Given private estimates of these
quantities, recovery chooses probability weights on $G$ so that the
weighted mean $\sum_{g\in G}a_g g$ and second moment
$\sum_{g\in G}a_g g^2$ are as close as possible to the estimates in the
chosen norm.

Unless otherwise specified, all applications below use the Euclidean norm
for $\|\cdot\|_*$. For all probability measures $p,q$, assume moments $\theta(q)$ control query error:
\begin{equation}\label{eq:moment-comparison}
 d_{\mathcal F}(p,q)\le \etail+C
 \|\theta(p)-\theta(q)\|_*,
 \qquad \etail\ge0,\quad C>0.
\end{equation}
Here $\etail,C$ are public constants for approximation and comparison. With
\[\operatorname{dist}_*(v,\mathcal C_G)\coloneqq\inf_{u\in \mathcal C_G}\|v-u\|_*,\] assume the error from the finite grid is bounded:
\begin{equation}\label{eq:moment-geometry}
 \sup_p\operatorname{dist}_*(\theta(p),\mathcal C_G)\le\egrid.
\end{equation}
The hull is compact even for Borel moments, such as partition indicators.

\begin{algorithm2e}[H]
\caption{Differentially private generalized moment matching}\label{alg:features}
\small
\setlength{\abovedisplayskip}{3pt}
\setlength{\belowdisplayskip}{3pt}
\setlength{\abovedisplayshortskip}{0pt}
\setlength{\belowdisplayshortskip}{3pt}
\KwIn{{Database $X$;\newline
public moment map $f$, norm $\|\cdot\|_*$, and nonempty finite public
candidate set $G\subset\Omega$ satisfying
\eqref{eq:moment-comparison}--\eqref{eq:moment-geometry};\newline
specified randomized $(\eps,\delta)$-DP moment estimator $\mathcal A$;\newline
public fitting tolerance $\eopt\ge0$ and output size $m\ge1$.}}
\revision{Run $\widehat\theta\leftarrow\mathcal A(X)$.}

{1. Fit $v_q\coloneqq\sum_{g\in G}a_gf(g)$ with $a_g\ge0$
and $\sum_{g\in G}a_g=1$, satisfying
\[
 \|v_q-\widehat\theta\|_*\le
 \min_{v\in\mathcal C_G}\|v-\widehat\theta\|_*+\eopt.
\]
Choose the weights by a fixed measurable rule depending only on
$\widehat\theta$ and the public inputs.}

2. Form the probability measure
$q\coloneqq\sum_{g\in G}a_g\,\delta_g$,
where $\delta_g$ is the point mass at the candidate $g$.

3. Draw $Y_1,\ldots,Y_m$ independently from
$q$, conditional on $q$.

\KwOut{The synthetic dataset $Y=(Y_1,\ldots,Y_m)$, with weight $1/m$ per record.}
\end{algorithm2e}

For a specified IPM, one must choose $f$ and $G$ satisfying
\eqref{eq:moment-comparison}--\eqref{eq:moment-geometry}, together with a
differentially private estimator $\mathcal A$ of $\theta(p_X)$ whose error
is controlled in $\|\cdot\|_*$.

Define the private estimation and sampling errors by
\begin{align}
 \epriv&\coloneqq\sup_X\E\|\widehat\theta-
 \theta(p_X)\|_*,\label{eq:moment-risk}\\
 s_{\mathcal F}(m)&\coloneqq\sup_q\E d_{\mathcal F}
 \left(q,\frac1m\sum_{r=1}^m\delta_{U_r}\right),\qquad U_r\stackrel{\mathrm{i.i.d.}}{\sim} q.
 \label{eq:ipm-sampling}
\end{align}

The following theorem bounds the synthesis error.
\begin{theorem}[Generalized moment matching]\label{thm:moment-synthesis}
Let $\varnothing\ne\mathcal F\subset C(\Omega)$ satisfy
$\sup_{h\in\mathcal F}\|h\|_\infty<\infty$.
Under \eqref{eq:moment-comparison}--\eqref{eq:moment-geometry}, Algorithm~\ref{alg:features}
with an $(\eps,\delta)$-DP estimator $\mathcal A$ is $(\eps,\delta)$-DP, and
\begin{align}
 \sup_X\E d_{\mathcal F}(p_X,q)
 &\le \etail+C(\epriv+\egrid+\eopt),
 \label{eq:moment-oracle}\\
 \sup_X\E d_{\mathcal F}(p_X,p_Y)
 &\le \etail+C(\epriv+\egrid+\eopt)+s_{\mathcal F}(m).
 \label{eq:moment-dataset}
\end{align}
Here $C$ incorporates the comparison constant in \eqref{eq:moment-comparison} and absolute numerical factors.
\end{theorem}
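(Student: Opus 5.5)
The proof splits into three parts: privacy, the oracle bound \eqref{eq:moment-oracle}, and the dataset bound \eqref{eq:moment-dataset}.

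\emph{Privacy.} This follows from postprocessing. After $\widehat\theta=\mathcal A(X)$ is computed, the data are never accessed again. The weights $(a_g)$ are produced by a fixed measurable rule applied to $\widehat\theta$ and public inputs, so $q$ is a measurable function of $\widehat\theta$. The samples $Y$ are then drawn using independent randomness given $q$. Hence $Y$ is a randomized postprocessing of the $(\eps,\delta)$-DP output $\widehat\theta$, and is itself $(\eps,\delta)$-DP.

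\emph{Oracle bound.} Fix $X$ and a realization of $\widehat\theta$. Apply \eqref{eq:moment-comparison} with $p=p_X$ and the fitted $q$:
\[
d_{\mathcal F}(p_X,q)\le\etail+C\|\theta(p_X)-v_q\|_*,
\]
where $\theta(q)=v_q$ because $q=\sum_g a_g\delta_g$. The triangle inequality gives
\[
\|\theta(p_X)-v_q\|_*\le\|\theta(p_X)-\widehat\theta\|_*+\|v_q-\widehat\theta\|_*.
\]
By the fitting tolerance, $\|v_q-\widehat\theta\|_*\le\operatorname{dist}_*(\widehat\theta,\mathcal C_G)+\eopt$.

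Distance to a set is $1$-Lipschitz, so
\[
\operatorname{dist}_*(\widehat\theta,\mathcal C_G)\le\|\widehat\theta-\theta(p_X)\|_*+\operatorname{dist}_*(\theta(p_X),\mathcal C_G)\le\|\widehat\theta-\theta(p_X)\|_*+\egrid,
\]
using \eqref{eq:moment-geometry}. Combining,
\[
d_{\mathcal F}(p_X,q)\le\etail+C\bigl(2\|\widehat\theta-\theta(p_X)\|_*+\egrid+\eopt\bigr).
\]
Taking expectations and the supremum over $X$ gives \eqref{eq:moment-oracle}, with the factor $2$ absorbed into $C$ as the statement permits.

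\emph{Dataset bound.} The IPM is a pseudometric, so the triangle inequality gives $d_{\mathcal F}(p_X,p_Y)\le d_{\mathcal F}(p_X,q)+d_{\mathcal F}(q,p_Y)$. Conditional on $q$ (equivalently on $\widehat\theta$), the $Y_r$ are i.i.d.\ from $q$. Therefore $\E[d_{\mathcal F}(q,p_Y)\mid q]\le s_{\mathcal F}(m)$, since $s_{\mathcal F}(m)$ is a supremum over all $q$. The tower property then finishes the argument.

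\emph{Main obstacle: measurability.} The only delicate step is justifying these expectations. The quantity $d_{\mathcal F}$ is a supremum over a possibly uncountable class, and the expectation must be defined. Because $\mathcal F\subset C(\Omega)$ is uniformly bounded, $d_{\mathcal F}(p_X,q)$ is a bounded function of the finitely many weights $(a_g)$, which lie in a finite-dimensional simplex. For fixed $h$, the map $(a_g)\mapsto\int h\,dq$ is linear, so $d_{\mathcal F}(p_X,q)$ is a supremum of continuous functions of $(a_g)$. It is therefore lower semicontinuous, hence Borel measurable in $(a_g)$.

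The same reasoning applies to $d_{\mathcal F}(q,p_Y)$ as a function of $(q,Y)$. Here $Y$ takes values in the finite set $G^m$, so for fixed $Y$ the map is lower semicontinuous in $(a_g)$, and $Y$ ranges over finitely many values. This makes conditioning on $q$ legitimate. Boundedness ensures all expectations are finite, and the constant $C$ absorbs only the factor $2$.
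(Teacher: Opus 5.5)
Your proof is correct and follows the paper's argument essentially step for step. Privacy comes from postprocessing. The triangle inequality together with the $1$-Lipschitz property of $\operatorname{dist}_*(\cdot,\mathcal C_G)$ gives the bound $2\|\widehat\theta-\theta(p_X)\|_*+\egrid+\eopt$, with the factor two absorbed into $C$, and conditioning on $q$ handles the sampling term. Your measurability remark is a sound addition that the paper leaves implicit; the paper also notes that in the Euclidean case the factor two drops to one by nonexpansiveness of projection, which the statement as given does not require.
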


{Here $\etail$ is the query-approximation error, $\egrid$ measures the
restriction to the candidate set $G$, $\epriv$ is the private
moment-estimation error, and $\eopt$ is the numerical fitting error.
The final term $s_{\mathcal F}(m)$ comes from sampling $m$ synthetic
records from the fitted measure $q$.}

\paragraph{Computational cost.}
Using the public resolutions and inverse-polynomial fitting tolerances
specified in the proofs, the concrete upper-bound constructions in
Section~\ref{sec:features} have polynomial arithmetic cost in $n$ and $m$
for fixed dimension, kernel or query parameters, and privacy parameters,
assuming efficient evaluation of the public feature functions.
The polynomial degree and constants may depend on these fixed parameters;
in particular, the cost is polynomial in $n$ when $m=\operatorname{poly}(n)$.

To see this, the candidate sets, moment vectors, and finite ReLU
query lists, including PREM's binary threshold queries, all have
polynomial size in $n$. Euclidean fitting minimizes
a quadratic function of the candidate weights, subject to nonnegativity
and total mass one. An unconstrained least-squares solution need not
satisfy these probability constraints, so its closed-form formula does
not solve the fit in general. Frank--Wolfe \cite{jaggi2013} gives a simple
constructive method for this constrained problem. Starting from a point
mass on $G$,
each iteration mixes the current distribution with a point mass at a
candidate chosen by linear minimization. Every update therefore preserves
nonnegative weights with total mass one. With our chosen tolerances,
the method uses polynomially many iterations, each scanning the candidate
set.
Fitting in the $\ell_\infty$ norm is a linear program.
The ReLU estimator additionally uses PREM, whose running time is
polynomial in the explicit domain and query-list sizes \cite{ghazi2025prem},
and matrix factorizations that can be found by semidefinite programming
\cite[Section~2.1]{matousek2020factorization}.

\paragraph{Utility rates.}

For the upper bounds based on the Gaussian mechanism
\cite[Theorem~3.22]{dwork2014}, assume $0<\eps\le1$ and
$0<\delta<1$, and define the effective sample size
\begin{equation}\label{eq:effective-size}
 \Neff\coloneqq\frac{n\eps}{\sqrt{2\log(1.25/\delta)}}.
\end{equation}
Table~\ref{tab:rates} summarizes the moment choices and rates under the privacy
regimes and sufficient output sizes specified in Section~\ref{sec:features},
with $\Neff\ge2$ for the upper bounds.
The Gaussian and fixed-degree polynomial exponent-one upper bounds also follow
from existing smooth-query results \cite{ding2026}
(see Remark~\ref{app:smooth-query-reduction}); our feature constructions give
direct specifications.

\begin{table}[t]
\centering
\caption{Moment choices and minimax rates for fixed dimension and query
parameters, up to logarithmic factors in $n\eps$ and $1/\delta$.}\label{tab:rates}
\small
\setlength{\tabcolsep}{4pt}
\begin{tabular}{@{}
>{\raggedright\arraybackslash}p{0.32\linewidth}
>{\raggedright\arraybackslash}p{0.30\linewidth}
>{\raggedright\arraybackslash}p{0.34\linewidth}
@{}}
\toprule
Kernel or IPM & Rate up to logarithmic factors & Moment choice\\
\midrule
Gaussian; polynomial, fixed degree & $(n\eps)^{-1}$
& Orthogonal feature projections\\
Mat\'ern, $\nu>0$ & $(n\eps)^{-\min\{1,\,\frac12+\frac\nu d\}}$
& \multirow{3}{=}{%
  $\left.\begin{array}{@{}c@{}}
   \rule{0pt}{2.1ex}\\[-0.4ex]
   \rule{0pt}{2.1ex}\\[-0.4ex]
   \rule{0pt}{2.1ex}
  \end{array}\right\}$\,Wavelet moments}\\
Laplace and ReLU NTK & $(n\eps)^{-\frac{d+1}{2d}}$ & \\
Sliced Wasserstein-$1$, $d\ge2$ & $(n\eps)^{-\frac{d+1}{2d}}$ & \\
ReLU IPMs & $(n\eps)^{-\min\{1,\frac{d+3}{2d}\}}$
& ReLU moments on a public net\\
\bottomrule
\end{tabular}
\end{table}

\paragraph{Earlier private synthesis mechanisms.}
The Private Signed Measure Mechanism (PSMM) and Private Measure Mechanism (PMM) in \cite{he2023}, as well as Chebyshev moment matching for $1$-Wasserstein distance and smooth queries
\cite{musco2025,ding2026} are specifications of this framework.
Appendix~\ref{app:general-norm} explains their connections.

\section{Setup and notation}\label{sec:setup}

Equip $\Omega=[-1,1]^d$, $d\ge1$, with Euclidean distance and
$\cP(\Omega)$ with the Borel sigma-algebra of the weak topology.

\begin{definition}[Differential privacy]\label{def:dp}
Two databases $X,X'\in\Omega^n$ are adjacent, written $X\sim X'$, if one row is replaced. For $\eps\ge0$ and $0\le\delta<1$, a randomized mechanism $\mathcal M$ is $(\eps,\delta)$-differentially private if, for every adjacent pair and measurable event $B$,
\[
 \Prob(\mathcal M(X)\in B)\le e^\eps\Prob(\mathcal M(X')\in B)+\delta.
\]
When $\delta=0$, the guarantee is called pure differential privacy.
\end{definition}

\paragraph{Notation.}
We write $\E$, $\Prob$, $\delta_x$, and $I_r$ for expectation, probability, a Dirac measure, and the identity matrix. Vector norms are Euclidean unless specified; $\norm{\cdot}_\infty$ is the uniform norm. The symbols $\conv(A)$, $P_V$, and $\Pi_C$ denote the convex hull and orthogonal and Euclidean projections, respectively. Write $\operatorname{dist}(v,C)=\inf_{u\in C}\norm{v-u}_2$. Logarithms are natural. The notation $a\lesssim_\theta b$ means $a\le C_\theta b$; $a\asymp_\theta b$ means bounds in both directions. Unless a dependence is specified, constants are independent of $n,m,\eps,\delta$ and truncation levels; they may depend on $d$, fixed query parameters, and the chosen public basis.

Throughout our results, the query class satisfies $\varnothing\ne\mathcal F\subset C(\Omega)$
and $\sup_{h\in\mathcal F}\norm{h}_\infty<\infty$; its IPM is defined in
\eqref{eq:general-ipm}.

\paragraph{Kernels.}
{A symmetric kernel $K:\Omega\times\Omega\to\R$ is positive definite if $\sum_{i,j}b_i b_jK(z_i,z_j)\ge0$ for every finite choice of points and real coefficients; equivalently, all its Gram matrices are positive semidefinite.} For a continuous positive definite kernel $K$, its RKHS is a Hilbert space of functions \cite{aronszajn1950} with the reproducing property $f(x)=\ip{f}{K(x,\cdot)}_{\cH_K}$. {The canonical feature map and kernel mean embedding are $\Phi(x)\coloneqq K(x,\cdot)$ and $\mu_p\coloneqq\int\Phi(x)\,dp(x)$.} Then
\begin{equation}\label{eq:mmd}
 \MMD_K(p,q)=\norm{\mu_p-\mu_q}_{\cH_K},\qquad
 \MMD_K(p,q)^2=\iint K(x,y)\,d(p-q)(x)\,d(p-q)(y).
\end{equation}
The first identity follows from Hilbert-space duality and is equivalent
to the query definition \eqref{eq:intro-mmd}. A kernel is characteristic
if zero MMD implies $p=q$. Otherwise, distinct measures can agree on every
query in the class, and MMD is a pseudometric. Put $\kappa\coloneqq\sup_x\sqrt{K(x,x)}$, which is finite by continuity and compactness.

We consider the Gaussian kernel $K_\sigma(x,y)\coloneqq\exp(-\norm{x-y}_2^2/(2\sigma^2))$, $\sigma>0$, and polynomial kernels $K_{p,c}(x,y)\coloneqq(c+x^\top y)^p$, $c\ge0$, $p\in\mathbb N$, $p\ge1$.

For smoothness $\nu>0$ and length scale $\ell>0$, put $\alpha\coloneqq\sqrt{2\nu}/\ell$. The normalized isotropic Mat\'ern kernel on $\R^d$ is $K_{\nu,\ell}(x,y)\coloneqq k_{\nu,\ell}(x-y)$, where
\begin{equation}\label{eq:matern-kernel}
 k_{\nu,\ell}(h)\coloneqq
 \begin{cases}
 \displaystyle\frac{2^{1-\nu}}{\Gamma(\nu)}
 (\alpha\norm{h}_2)^\nu\mathsf K_\nu(\alpha\norm{h}_2),&h\ne0,\\[4pt]
 1,&h=0.
 \end{cases}
\end{equation}
Here $\Gamma$ is the gamma function and $\mathsf K_\nu$ is the modified Bessel function of the second kind of order $\nu$. The value at the origin is the continuous extension.
Its RKHS has Sobolev order $\nu+d/2$ \cite[Chapter~10]{wendland2005};
Appendix~\ref{app:wavelet-proofs} gives the spectral density and Fourier convention.
This normalization gives $K_{\nu,\ell}(x,x)=1$. For $\nu=1/2$, the kernel is $\exp(-\norm{x-y}_2/\ell)$, the Euclidean Laplace kernel.

\paragraph{Minimax risk.}
We measure worst-case accuracy relative to the observed database.
For a query class $\mathcal F$, define the minimax risk over private
probability-measure outputs by
\begin{equation}\label{eq:minimax}
 \mathcal R_{\mathcal F}(n,\eps,\delta)\coloneqq
 \inf_{\mathcal M:\, (\eps,\delta)\text{-DP}}\sup_{X\in\Omega^n}
 \E d_{\mathcal F}(p_X,\mathcal M(X)),
\end{equation}
where the infimum is over measurable randomized mechanisms with values in $\cP(\Omega)$.
The empirical measure of a private synthetic dataset is private, and hence
\[
 \mathcal R_{\mathcal F}(n,\eps,\delta)
 \le\sup_X\E d_{\mathcal F}(p_X,p_Y).
\]
For the RKHS unit ball
$\mathcal F_K\coloneqq\{h\in\cH_K:\norm{h}_{\cH_K}\le1\}$ corresponding to the kernel $K$, we have
$d_{\mathcal F_K}=\MMD_K$ and write
$\mathcal R_K\coloneqq\mathcal R_{\mathcal F_K}$.
Expectations are taken over the randomness of the mechanisms.

\section{Specifications and minimax rates}\label{sec:features}

We now specify the moment maps, candidate sets, and private estimators in
Algorithm~\ref{alg:features} and derive the corresponding minimax rates.

\subsection{Kernel feature moments}\label{sec:feature-specialization}

We first give a feature specification, then apply it to Gaussian and
polynomial kernels. For orthogonal features, Euclidean moment error equals
the error in the projected kernel mean.
Grouping coordinates into blocks lets us allocate noise according to block
dimension and sensitivity.

Let $H$ be a Hilbert space and $\Psi:\Omega\to H$ a continuous feature map with
$K(x,y)=\ip{\Psi(x)}{\Psi(y)}_H$. Choose a public finite-dimensional
subspace $V\subset H$. Let $P_V:H\to V$ be the orthogonal projection
onto $V$, and define the projected feature map by
$(P_V\Psi)(x)\coloneqq P_V(\Psi(x))$.
 Choose orthonormal coordinates
$\phi=(\phi_j)_{j\in\mathcal I}$ for $P_V\Psi$, indexed by a finite set
$\mathcal I$, with blocks
$\phi_j:\Omega\to\R^{r_j}$, $r_j\ge1$. Suppose
\begin{align}
 \sup_{x,y}\|\phi_j(x)-\phi_j(y)\|_2&\le\Delta_j
 \quad(j\in\mathcal I),\label{eq:block-diameter}\\
 \sup_{x,y}\|(I-P_V)(\Psi(x)-\Psi(y))\|_H&\le\etail,
 &\sup_x\min_{g\in G}\|\phi(x)-\phi(g)\|_2&\le\egrid.
 \label{eq:feature-errors}
\end{align}
Omit constant blocks and henceforth let $\mathcal I$ index the remaining ones;
put $D=\sum_{j\in\mathcal I}r_j$. Replacing one
record changes the block mean by at most $\Delta_j/n$.

\revision{Use Algorithm~\ref{alg:features} with the Euclidean norm and Gaussian
estimator. The private estimate $\widehat\theta$ may lie outside
$\mathcal C_G$, so it need not be the moment vector of a probability
measure on $G$. To recover such a measure, Step~1 chooses probability
weights whose moment vector approximates the Euclidean projection:
$\|v_q-\Pi_{\mathcal C_G}(\widehat\theta)\|_2\le\eopt$.
Since Euclidean projection does not increase distances, this gives
\[
 \|v_q-\theta(p_X)\|_2
 \le\|\widehat\theta-\theta(p_X)\|_2+\egrid+\eopt.
\]
Thus the private estimation error enters with coefficient one.} Choose
$f=\phi$ endowed with the Euclidean norm and set
\begin{equation}\label{eq:feature-allocation}
 \Gamma\coloneqq\sum_{j\in\mathcal I}\sqrt{r_j}\Delta_j,\quad
 \Sigma\coloneqq\bigoplus_{j\in\mathcal I}\frac{\Gamma\Delta_j}{\sqrt{r_j}}I_{r_j}.
\end{equation}
Use the private Gaussian estimator \[\mathcal A(X)=\theta(p_X)+\Sigma^{1/2}\xi/\Neff,
\quad \xi\sim\mathcal N(0,I_D).\]
The added Gaussian noise has covariance $\Sigma/\Neff^2$ and is generally
anisotropic: it is isotropic within each block, but its variance can differ
between blocks according to their dimensions and sensitivity bounds.
The omitted features contribute $\etail$ in the comparison inequality.
Here $C=1$, and the private moment-estimation error is
$\epriv\le\Gamma/\Neff$.

{For a single block of dimension $r$ with diameter bound
$\Delta$, these formulas give $\Gamma=\sqrt r\,\Delta$ and
$\Sigma=\Delta^2I_r$, recovering the usual isotropic Gaussian mechanism.}

\begin{corollary}[Error bound for orthogonal kernel features]\label{thm:feature-oracle}
Under \eqref{eq:block-diameter}--\eqref{eq:feature-errors}, the covariance
$\Sigma$ in \eqref{eq:feature-allocation} satisfies the Gaussian privacy
condition \eqref{eq:moment-privacy} and $\tr(\Sigma)=\Gamma^2$.
Consequently, Algorithm~\ref{alg:features} with the Gaussian estimator
$\mathcal A$ defined above is $(\eps,\delta)$-DP and satisfies
\begin{equation}\label{eq:feature-oracle}
 \sup_X\E\MMD_K(p_X,p_Y)
 \le\etail+\egrid+\eopt+\frac{\Gamma}{\Neff}+\frac{\kappa}{\sqrt m}.
\end{equation}
The same bound without the sampling term holds for the internal measure.
\end{corollary}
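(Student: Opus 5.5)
The plan has three parts: verify the Gaussian privacy condition, prove the comparison inequality \eqref{eq:moment-comparison} with $C=1$, and bound the sampling error $s_{\mathcal F}(m)$ by $\kappa/\sqrt m$. With these three facts, Theorem~\ref{thm:moment-synthesis} gives the result. The improved coefficient on the private error comes from the projection argument stated before the corollary.

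\emph{Privacy.} The condition \eqref{eq:moment-privacy} is, presumably, that the $\Sigma^{-1/2}$-weighted sensitivity is at most one. Concretely, for adjacent $X\sim X'$ we need $\|\Sigma^{-1/2}(\theta(p_X)-\theta(p_{X'}))\|_2\le 1/n$. Replacing one record changes block $j$ by a vector $w_j$ with $\|w_j\|_2\le\Delta_j/n$. The weighted norm then satisfies
\[
\textstyle\sum_j \frac{\sqrt{r_j}}{\Gamma\Delta_j}\|w_j\|_2^2\le \frac1{n^2}\sum_j\frac{\sqrt{r_j}\Delta_j}{\Gamma}=\frac1{n^2}.
\]
The Gaussian mechanism \cite[Theorem~3.22]{dwork2014} in whitened coordinates, with noise scale $1/\Neff$, is then $(\eps,\delta)$-DP. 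Steps 1--3 of Algorithm~\ref{alg:features} are post-processing. The trace identity is direct: $\tr\Sigma=\sum_j r_j\Gamma\Delta_j/\sqrt{r_j}=\Gamma^2$. By Jensen, $\epriv\le\sqrt{\tr\Sigma}/\Neff=\Gamma/\Neff$.

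\emph{Comparison.} For any probability measures $p,q$, write
\[
\mu_p-\mu_q=P_V(\mu_p-\mu_q)+(I-P_V)(\mu_p-\mu_q),
\]
working in $H$ via $\Psi$. This is legitimate because $\MMD_K(p,q)=\|\int\Psi\,d(p-q)\|_H$, since $K$ factors through $\Psi$. The $V$-part has norm $\|\theta(p)-\theta(q)\|_2$, because $\phi$ consists of orthonormal coordinates of $P_V\Psi$ and dropping constant blocks loses nothing (they cancel between probability measures). For the tail part, write
\[
(I-P_V)\int\Psi\,d(p-q)=\iint (I-P_V)(\Psi(x)-\Psi(y))\,dp(x)\,dq(y),
\]
whose norm is at most $\etail$ by \eqref{eq:feature-errors}. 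This gives \eqref{eq:moment-comparison} with $C=1$.

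\emph{Grid error.} Next, $\operatorname{dist}(\theta(p),\mathcal C_G)\le\egrid$. Choose a measurable nearest-point map $x\mapsto g(x)\in G$; since $G$ is finite this is easy. Push $p$ forward along it to get a measure on $G$ whose moment vector is in $\mathcal C_G$, and apply Jensen.

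\emph{Assembling the bound.} Rather than cite Theorem~\ref{thm:moment-synthesis}, whose $C$ absorbs numerical factors, I will chain the inequalities directly. By non-expansiveness of $\Pi_{\mathcal C_G}$,
\[
\|v_q-\theta(p_X)\|_2\le\eopt+\|\Pi\widehat\theta-\Pi\theta(p_X)\|_2+\|\Pi\theta(p_X)-\theta(p_X)\|_2\le\eopt+\|\widehat\theta-\theta(p_X)\|_2+\egrid.
\]
Combined with the comparison inequality and the expectation bound, this yields the internal-measure bound.

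\emph{Sampling term.} Conditionally on $q$, compute
\[
\E\MMD_K(q,p_Y)^2=\frac1m\E\|\Phi(U)-\mu_q\|^2\le\kappa^2/m.
\]
Jensen and the triangle inequality for $\MMD_K$ then give the final bound. The main obstacle is making precise that the constant blocks are genuinely constant, and that the tail estimate uses the diameter rather than the sup norm. Both are handled by the difference-of-means representation above.
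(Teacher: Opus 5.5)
Your proposal is correct and follows the paper's proof essentially step for step. It uses the same blockwise weighted-sensitivity computation for \eqref{eq:moment-privacy}, the trace identity with Jensen for $\epriv\le\Gamma/\Neff$, and the split into $P_V$ and $(I-P_V)$ parts with the double-integral representation giving $\etail$. It also matches the nonexpansive-projection chain that gives coefficient one on the private error, and the conditional variance computation of Lemma~\ref{lem:sampling-mean} for the $\kappa/\sqrt m$ term.

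The differences are cosmetic: you bound $\operatorname{dist}(\theta(p),\mathcal C_G)$ via a nearest-point pushforward rather than via convexity of the distance function. There is one small omission. The paper separately treats the degenerate case where no nonconstant block remains, so $\Gamma=0$ and no noise is added; your division by $\Gamma\Delta_j$ implicitly excludes that case.
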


To compute the projection fit, apply Frank--Wolfe to
$\frac12\|v-\widehat\theta\|_2^2$ over $v\in\mathcal C_G$; starting from a
point mass on $G$, each iteration mixes the current weights with a point
mass at a candidate chosen by linear minimization.
Appendix~\ref{app:moment-fitting} gives an explicit iteration count ensuring
the required distance $\eopt$ to the projection for every $\eopt>0$.

For a continuous positive definite kernel on the compact domain $\Omega$,
the feature map can be approximated in a finite-dimensional subspace.
Appendix~\ref{app:nystrom} constructs such coordinates by the Nystr\"om
method \cite{chatalic2022}, using public landmark points.
Appendix~\ref{app:feature-proof} proves the error bound above and shows
that the chosen covariance minimizes total noise variance within the
family of block covariances considered there.

{For the Gaussian kernel $K_\sigma$, the explicit feature expansion
\cite{cotter2011} has coordinates
\[
 \Psi_\alpha(x)=e^{-\|x\|_2^2/(2\sigma^2)}
 \frac{x^\alpha}{\sigma^{|\alpha|}\sqrt{\alpha!}},
 \qquad \alpha\in\mathbb N_0^d,
\]
where $|\alpha|=\sum_{i=1}^d\alpha_i$,
$\alpha!=\prod_{i=1}^d\alpha_i!$, and
$x^\alpha=\prod_{i=1}^d x_i^{\alpha_i}$.
We group these coordinates by total degree $|\alpha|$ and keep those with
$|\alpha|<J$ for a public cutoff $J$.
The polynomial kernel has a finite monomial expansion, so no feature
truncation is needed.}

For Gaussian and polynomial kernels, the feature constructions in
Appendix~\ref{app:smooth} keep $\Gamma$ bounded for fixed dimension and
kernel parameters and give $\etail+\egrid=O_{d,K}(\Neff^{-1})$.
Choosing $0<\eopt\le\Neff^{-1}$ and
$m\ge\lceil\Neff^2\rceil$ makes the fitting and sampling errors
$O_{d,K}(\Neff^{-1})$ as well. Corollary~\ref{thm:feature-oracle}
therefore yields the upper bound in the following result.

\begin{corollary}[Near-minimax examples]\label{cor:smooth-optimal}
Fix a Gaussian kernel or a polynomial kernel from Section~\ref{sec:setup}.
{Let $c_0=1/[8(e-1)]$.}
For $0<\eps\le1$, $0<\delta\le c_0(e^\eps-1)$, $\Neff\ge2$,
and every public integer $m\ge\left\lceil\Neff^2\right\rceil$,
there are constants $0<c_K\le C_{d,K}<\infty$ such that
\begin{equation}\label{eq:smooth-minimax}
 \frac{c_K}{n\eps}\le\mathcal R_K(n,\eps,\delta)
 \le\sup_X\E\MMD_K(p_X,p_Y)
 \le\frac{C_{d,K}}{\Neff},
\end{equation}
where $Y$ is produced by public choices in Algorithm~\ref{alg:features}.
Here $c_K,C_{d,K}$ are independent of $n,m,\eps,\delta$.
The upper bound holds for every $0<\delta<1$ with $\Neff\ge2$ and
$m\ge\left\lceil\Neff^2\right\rceil$.
\end{corollary}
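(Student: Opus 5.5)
The statement has three inequalities, and I would treat them separately. The middle one is the minimax-risk inequality $\mathcal R_K\le\sup_X\E\MMD_K(p_X,p_Y)$, noted in Section~\ref{sec:setup}. It holds because $p_Y$ is a postprocessing of an $(\eps,\delta)$-DP output, by Theorem~\ref{thm:moment-synthesis}, and so is an admissible mechanism.

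\textbf{Upper bound.} I would apply Corollary~\ref{thm:feature-oracle} with the explicit feature expansions.
\begin{itemize}[leftmargin=1.5em]
\item \emph{Polynomial kernel $K_{p,c}$.} The monomial features of degree at most $p$ span a finite-dimensional $V$, so $\etail=0$. After orthonormalizing the coordinates, they are bounded on $\Omega$, so $\Gamma=O_{d,K}(1)$.
\item \emph{Gaussian kernel.} Group the Cotter coordinates $\Psi_\alpha$ by total degree $k=|\alpha|$. The block of degree $k$ has sup norm squared
\[
\sum_{|\alpha|=k}e^{-\|x\|^2/\sigma^2}\frac{x^{2\alpha}}{\sigma^{2k}\alpha!}
= e^{-\|x\|^2/\sigma^2}\frac{\|x\|^{2k}}{\sigma^{2k}k!}\le\frac{(d/\sigma^2)^k}{k!}.
\]
Hence $\Delta_k\le 2\sqrt{(d/\sigma^2)^k/k!}$. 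The block dimension is $r_k=\binom{k+d-1}{d-1}$, which grows only polynomially in $k$. So $\Gamma=\sum_k\sqrt{r_k}\Delta_k$ converges uniformly in the cutoff $J$.
\item \emph{Tail.} The squared tail $\sum_{k\ge J}(d/\sigma^2)^k/k!$ decays superexponentially. Taking $J\asymp\log\Neff/\log\log\Neff$ therefore gives $\etail\lesssim\Neff^{-1}$.
\item \emph{Grid.} The features $\phi$ on the cube are Lipschitz with a constant depending on $J$. For a bounded dimension count this grows at most polynomially in $\log\Neff$, so a public uniform grid of mesh $h$ gives $\egrid\le L_Jh$. Choosing $h=\Neff^{-1}/L_J$ gives a grid of polynomial size.
\item \emph{Remaining terms.} Taking $\eopt\le\Neff^{-1}$ and $m\ge\Neff^2$ makes $\kappa/\sqrt m\le\Neff^{-1}$, since $\kappa=1$ for the Gaussian kernel and is bounded for the polynomial kernel.
\end{itemize}
Summing the terms in \eqref{eq:feature-oracle} gives $C_{d,K}/\Neff$ for every $\delta<1$.

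\textbf{Lower bound.} I would use a two-point packing argument combined with group privacy, which is the main obstacle. The steps are:
\begin{enumerate}[leftmargin=1.5em]
\item Fix distinct points $a,b\in\Omega$ with $\MMD_K(\delta_a,\delta_b)=\gamma>0$. Such a pair exists for both kernels, for example any $a\ne b$ for the Gaussian kernel, and points with different feature vectors for the polynomial kernel since $K_{p,c}$ is nonconstant.
\item Let $X_0=(a,\dots,a)$, and let $X_1$ be $X_0$ with $k=\lceil 1/\eps\rceil$ rows replaced by $b$. By linearity of kernel means, $\MMD_K(p_{X_0},p_{X_1})=k\gamma/n\gtrsim\gamma/(n\eps)$.
\item By group privacy, for any event $B$,
\[
\Prob(\mathcal M(X_0)\in B)\le e^{k\eps}\Prob(\mathcal M(X_1)\in B)+k e^{(k-1)\eps}\delta.
\]
Since $k\eps\le 2$ and $\delta\le c_0(e^\eps-1)\lesssim\eps$, the additive term $ke^{(k-1)\eps}\delta\le e^2c_0(e-1)\cdot O(1)$ stays below a small constant. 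This is where the choice $c_0=1/[8(e-1)]$ should be used.
\item Take $B=\{\mu:\MMD_K(\mu,p_{X_0})<\tfrac12 k\gamma/n\}$. By the triangle inequality, $B$ is disjoint from the corresponding ball around $p_{X_1}$. This is a Le Cam two-point testing argument.
\item If both expected errors were below $k\gamma/(8n)$, Markov's inequality would give probabilities at least $3/4$ for each ball. The group-privacy inequality then yields a contradiction, so one of the two errors is at least of order $\gamma/(n\eps)$.
\end{enumerate}
Hence $c_K\asymp\gamma$, independent of $n,m,\eps,\delta$. The delicate part is checking the constants in the $\delta$-accumulation term so that they close under the stated bound $\delta\le c_0(e^\eps-1)$.
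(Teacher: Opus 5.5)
Your upper bound follows the paper's route: degree-grouped Cotter blocks (or the finite monomial expansion) in Corollary~\ref{thm:feature-oracle}, a tail cutoff, a grid, $\eopt\le\Neff^{-1}$, and $m\ge\lceil\Neff^2\rceil$. There are two harmless slips. First, $J\asymp\log\Neff/\log\log\Neff$ needs a constant floor near $\Neff=2$, where $\log\log\Neff<0$; the paper uses $J=\lceil\max\{e^2d/\sigma^2,2\log\Neff\}\rceil$. Second, no $J$-dependent Lipschitz constant is needed, because truncation is a coordinate projection of $\Psi$ and $\|\Psi(x)-\Psi(y)\|_2\le\|x-y\|_2/\sigma$. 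Your lower bound is also the paper's argument unpacked: Theorem~\ref{thm:general-lower} with the single pair $(0,e_1)$ is exactly a two-point group-privacy test.

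However, Step~5 fails as written. Write $\rho=k\gamma/(2n)$ and $\delta_k=\delta\sum_{r=0}^{k-1}e^{r\eps}$. If both expected errors are below $t\rho$, then Markov's inequality, disjointness of the balls, and group privacy give
\[
 1-t<\Prob(\mathcal M(X_0)\in B)\le e^{k\eps}\Prob(\mathcal M(X_1)\in B)+\delta_k<e^{k\eps}t+\delta_k .
\]
This is a contradiction only when $t\le(1-\delta_k)/(1+e^{k\eps})$, and your choice $t=1/4$ (threshold $k\gamma/(8n)$) violates it. Since $k=\lceil1/\eps\rceil$ gives $k\eps\in[1,2)$, there are two failing cases:
\begin{itemize}
\item At $k\eps=1$ you would need $\delta_k<(3-e)/4\approx0.07$, but the hypothesis allows $\delta_k=c_0(e-1)=1/8$.
\item For $\eps$ slightly below $1$, where $k=2$, the right side is about $e^2/4>1$ even when $\delta=0$.
\end{itemize}
So the delicate constant is not the $\delta$-accumulation you flagged. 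The role of $c_0$ is only to keep $\delta_k$ away from $1$: the exact sum gives $\delta_k\le c_0(e^{k\eps}-1)<(e+1)/8$. The binding constraint is the Markov level against the factor $e^{k\eps}$.

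The fix is easy. With your $k$, take $t=1/16$ (threshold $k\gamma/(32n)$); this works because $(1-\delta_k)/(1+e^{k\eps})>(7-e)/[8(1+e^2)]>1/16$. Alternatively, do as the paper does: take $k=\lfloor1/\eps\rfloor$. Then $k\eps\le1$, $k\ge1/(2\eps)$, $\delta_k\le1/8$, and the averaged two-point error probability is at least $7/[8(e+1)]$.
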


\subsection[Matern, Laplace, and neural network kernels]{Mat\'ern, Laplace, and neural network kernels}\label{sec:wavelets}\label{sec:laplace-ntk}

The Mat\'ern theorem is a common reference point for the results below:
its connection to Sobolev smoothness supports several other examples.
The bounds in Theorem~\ref{thm:matern-rates} directly give the Laplace
case by taking $\nu=1/2$, and an equivalence of RKHS norms transfers both
upper and lower bounds to the specified ReLU NTK
(Corollary~\ref{cor:laplace-neural}). For sliced Wasserstein-$1$ in
dimension $d\ge2$, comparison with Laplace MMD gives the upper bound;
the matching lower bound requires a separate geometric argument
(Theorem~\ref{thm:sliced-wasserstein}). For the norm-bounded ReLU IPM,
comparison with Mat\'ern MMD at $\nu=3/2$ gives the lower bound, while
the upper bound uses the adaptive private estimator developed in
Section~\ref{sec:neural-ipm}
(Theorem~\ref{thm:neural-ipm}).

We control Mat\'ern MMD by matching averages of wavelet functions.
Like histogram bins at different scales, each wavelet is nonzero only
in a small region, and each record contributes to only a bounded number
of wavelet averages at any scale. This \emph{localization}, together with
bounds on wavelet values, controls both the effect of replacing one
record and the sum of the absolute wavelet averages at each scale.
The first bound determines the privacy noise; the second lets us reduce
the error in noisy fine-scale moments through denoising. Both hold for
arbitrary probability measures, with no density assumption.

\paragraph{Wavelet moments.}
{The Mat\'ern RKHS has Sobolev order $s\coloneqq\nu+d/2$.}
{Fix, independently of the data, a real
compactly supported orthonormal wavelet basis $\mathcal W$ of
$L^2(\R^d)$ with regularity and number of vanishing moments greater than
$s+d+2$.} A sufficiently regular tensor-product Daubechies basis is one
example \cite[Chapters~6--7]{daubechies1992}. At each scale, keep only
functions whose supports intersect $\Omega$, and include the scaling functions at level zero. At level $j$, the
$d_j\le C_{\mathrm{dim}}2^{jd}$ functions $(\psi_{j,k})_{k=1}^{d_j}$ have support diameters
$O(2^{-j})$ and bounded overlap. For a finite signed measure $\rho$, put
$\beta_{j,k}(\rho)\coloneqq\int\psi_{j,k}\,d\rho$ and
$\beta_j(\rho)\coloneqq(\beta_{j,k}(\rho))_{k=1}^{d_j}$. Localization gives
\begin{equation}\label{eq:wavelet-geometry}
 \norm{\beta_j(p)}_1\le R_j\coloneqq C_R2^{jd/2},\qquad
 \sup_{x,y}\norm{\beta_j(\delta_x-\delta_y)}_2
 \le\Delta_j\coloneqq C_\Delta2^{jd/2}.
\end{equation}
The constants depend on $d$ and the public wavelets.

{Choose a public resolution $J$ and positive level weights
$w_0,\ldots,w_J$ with $\sum_{j=0}^Jw_j=1$.
The factor $2^{-js}$ weights coefficient errors according to the kernel's
Sobolev smoothness. The weights $w_j$ allocate Gaussian noise across
levels: larger $w_j$ means less noise at level $j$.
Set}
\begin{align*}
 f_J(x)&\coloneqq(2^{-js}\psi_{j,k}(x))_{j\le J,k\le d_j},
 &D_J&\coloneqq\sum_{j\le J}d_j,\\
 \Sigma_J&\coloneqq\bigoplus_{j=0}^J
 2^{-2js}\frac{\Delta_j^2}{w_j}I_{d_j}.
\end{align*}
Use $f_J$, $\Sigma_J$, and the Euclidean norm in Algorithm~\ref{alg:features},
with $\theta_J(p)\coloneqq\int f_J\,dp$. The wavelet characterization of
Sobolev norms \cite[Theorem~1.20]{triebel2008}, together with localization,
gives
\begin{equation}\label{eq:wavelet-moment-certificate}
 \MMD_{K_{\nu,\ell}}(p,q)
 \le C\left(2^{-\nu J}+
 \norm{\theta_J(p)-\theta_J(q)}_2\right).
\end{equation}
Thus the omitted levels contribute $\etail=C2^{-\nu J}$ uniformly over
probability measures; $C$ depends on $d,\nu,\ell$ and the public wavelets.

\paragraph{Denoising and public tuning.}
At fine scales, we estimate many wavelet averages, and the added privacy
noise can add up. The bound in \eqref{eq:wavelet-geometry} tells us that
only a limited number of these averages can be large. We use this
information to reduce the noise: set small noisy averages to zero and
shrink the others toward zero. This procedure is called
\emph{soft thresholding} \cite{donoho1995soft}.

Let $Z\coloneqq\theta_J(p_X)+\Sigma_J^{1/2}\xi/\Neff$, where
$\xi\sim\mathcal N(0,I_{D_J})$, be the private Gaussian release.
For $0<\nu<d/2$, use the public levelwise denoising map $\mathcal D$
specified in Appendix~\ref{app:wavelet-risk} using
Lemma~\ref{lem:denoising}. For $\nu\ge d/2$, the noisy moments already
give the stated rate, so we use $Z$ directly.
The private moment estimator used in Algorithm~\ref{alg:features} is
\[
 \mathcal A(X)\coloneqq
 \begin{cases}
  \mathcal D(Z),&{0<\nu<d/2},\\
  Z,&{\nu\ge d/2}.
 \end{cases}
\]
Lemma~\ref{lem:denoising} and Appendix~\ref{app:wavelet-risk} give the
thresholds, error bounds, and public choices of resolution and level weights.

The following theorem gives upper bounds for this wavelet construction
and minimax lower bounds for Mat\'ern MMD.

\begin{theorem}[Mat\'ern rates]\label{thm:matern-rates}
Fix $\nu,\ell>0$ and a public wavelet basis $\mathcal W$ as above.
For $0<\eps\le1$, $0<\delta<1$, $\Neff\ge2$, and public integer $m\ge1$,
the wavelet specification of Algorithm~\ref{alg:features}, with the
public choices in Appendix~\ref{app:wavelet-risk}, produces an
$(\eps,\delta)$-DP dataset $Y$ satisfying
\begin{equation}\label{eq:matern-rate}
 \sup_X\E\MMD_{K_{\nu,\ell}}(p_X,p_Y)\le
 \frac1{\sqrt m}+C_{d,\nu,\ell,\mathcal W}
 \begin{cases}
 \Neff^{-1/2-\nu/d},&0<\nu<d/2,\\
 \Neff^{-1}\,\log(e\Neff),&\nu=d/2,\\
 \Neff^{-1},&\nu>d/2.
 \end{cases}
\end{equation}
For $n\eps\ge2$, $0<\eps\le1$, and
$0\le\delta\le c_0(e^\eps-1)$,
\begin{equation}\label{eq:matern-lower}
 \mathcal R_{K_{\nu,\ell}}(n,\eps,\delta)
 \ge c_{d,\nu,\ell} (n\eps)^{-\min\{1,\,1/2+\nu/d\}}.
\end{equation}
The positive constants depend only on their subscripts.
\end{theorem}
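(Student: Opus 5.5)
For the upper bound I will apply Theorem~\ref{thm:moment-synthesis} with $f=f_J$, the Euclidean norm, and comparison inequality \eqref{eq:wavelet-moment-certificate}, so that $\etail=C2^{-\nu J}$. For the candidate set $G$ I take a public grid of mesh $h$. Since $\norm{\nabla(2^{-js}\psi_{j,k})}_\infty\lesssim2^{j(d/2+1-s)}$ and bounded overlap gives $O(1)$ active indices per level, the map $f_J$ is Lipschitz with constant $\lesssim\sum_{j\le J}2^{j(1-\nu)}$. Choosing $h$ a small negative power of $\Neff$ (polynomial size) therefore gives $\egrid\lesssim\Neff^{-1}$. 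I also take $\eopt\le\Neff^{-1}$. The sampling term is handled by the kernel-mean bound: for i.i.d.\ samples from $q$, $\E\MMD\le\kappa/\sqrt m=1/\sqrt m$. The only real content is then the bound on $\epriv$.

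Privacy follows from the Gaussian mechanism with the block covariance $\Sigma_J$, as in Corollary~\ref{thm:feature-oracle}. Replacing one record changes $\theta_J$ at level $j$ by at most $2^{-js}\Delta_j/n$. The Mahalanobis norm of this change is therefore at most $\bigl(\sum_j w_j\bigr)^{1/2}/n=1/n$, so the Gaussian mechanism is $(\eps,\delta)$-DP, and the soft-thresholding map $\mathcal D$ is post-processing. Without denoising, the error is
\[
\epriv\le\Neff^{-1}\Bigl(\sum_j 2^{-2js}\Delta_j^2d_j/w_j\Bigr)^{1/2}\asymp\Neff^{-1}\Bigl(\sum_j 2^{-2j\nu}\cdot 2^{jd}/w_j\Bigr)^{1/2}.
\]
The optimal weights $w_j\propto 2^{j(d/2-\nu)}$ give $\Neff^{-1}\sum_{j\le J}2^{j(d/2-\nu)}$. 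This is $O(\Neff^{-1})$ for $\nu>d/2$. For $\nu=d/2$ it is $O(J\Neff^{-1})$, and with $2^{J}\asymp\Neff^{1/\nu}$ we get $\etail\lesssim\Neff^{-1}$ and $J\lesssim\log(e\Neff)$, which gives the first two cases.

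For $0<\nu<d/2$ I use Lemma~\ref{lem:denoising}, applied levelwise. At level $j$ the true vector $2^{-js}\beta_j(p_X)$ has $\ell_1$ norm at most $2^{-js}R_j=C2^{-j\nu}$, and the noise is i.i.d.\ Gaussian with standard deviation $\sigma_j=2^{-js}\Delta_j/(\sqrt{w_j}\Neff)$. Soft thresholding at $\sigma_j\sqrt{2\log d_j}$ gives the $\ell_1$-ball risk
\[
\E\norm{\widehat\beta-\beta}_2^2\lesssim\min\Bigl\{d_j\sigma_j^2,\ \ell_1\text{-radius}\cdot\sigma_j\sqrt{\log d_j}\Bigr\}.
\]
Unthresholded coarse levels contribute $\approx\Neff^{-1}2^{j(d/2-\nu)}$ in $\ell_2$. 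Thresholded fine levels contribute $\approx(2^{-j\nu}\Neff^{-1}2^{jd/2}\sqrt{j})^{1/2}$, which decays geometrically in $j$ after the crossover $2^{j^*d}\approx\Neff$. Balancing at $j^*$ gives
\[
\Neff^{-1}\Neff^{(d/2-\nu)/d}=\Neff^{-1/2-\nu/d}.
\]
The $\sqrt j$ factor should be absorbed by choosing the weights $w_j$ with geometric decay on both sides of $j^*$. I expect this tuning, together with pushing $J$ large enough that $2^{-\nu J}\lesssim\Neff^{-1/2-\nu/d}$ without logarithmic loss, to be the main technical obstacle. If a logarithm survives, I would remove it by using level-dependent thresholds $\sigma_j\sqrt{2\log(d_j/d_{j^*})}$ as in Lemma~\ref{lem:denoising}.

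For the lower bound \eqref{eq:matern-lower} I use a packing argument with the standard DP Fano/Assouad bound for databases differing in about $1/\eps$ rows, valid for $\delta\le c_0(e^\eps-1)$. The $(n\eps)^{-1}$ term comes from two databases, all at $x_0$ versus $\lceil 1/\eps\rceil$ rows moved to $x_1$. Their MMD is $\gtrsim(n\eps)^{-1}$ because $K(x_0,\cdot)-K(x_1,\cdot)$ has positive norm, and group privacy prevents distinguishing them.

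For the $(n\eps)^{-1/2-\nu/d}$ term I take $M\asymp n\eps$ disjoint cells of side $M^{-1/d}$. The base database spreads rows evenly over the cells. Each hypercube vertex $\tau\in\{\pm1\}^M$ moves $k\asymp n/M\asymp1/\eps$ rows within cell $i$ between two points at distance $M^{-1/d}$ according to $\tau_i$. Adjacent vertices differ in $O(1/\eps)$ rows.

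The test function is $\sum_i\tau_i b_i$ with bumps $b_i$ at scale $M^{-1/d}$, each of unit-height difference. By the Sobolev norm scaling its RKHS norm is $\lesssim M^{1/2}\cdot M^{s/d-1/2}=M^{s/d}$ for $L^\infty$ height $1$. The mean gap is $Mk/n\asymp1$ per total scaling, so after normalization the MMD contribution is $\gtrsim M^{-s/d}\cdot M^{1/2}\asymp(n\eps)^{-\nu/d-1/2}$. Assouad's lemma with private coupling then shows a constant fraction of signs are misidentified, which yields the claimed rate.
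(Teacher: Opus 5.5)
Your plan follows the paper's proof closely. The upper bound uses Theorem~\ref{thm:moment-synthesis} with the wavelet moments and the levelwise Gaussian release; for $\nu\ge d/2$ your optimal weights are exactly the paper's $\overline w_j=2^{(d-s)j}$. For $0<\nu<d/2$ you soft-threshold above the crossover $2^{j_*d}\asymp\Neff$, with weights decaying geometrically on both sides. The lower bound is a two-point argument plus an Assouad packing over $\asymp n\eps$ cells with $\asymp1/\eps$ rows each; the paper uses wavelets, via Lemma~\ref{mat:wavelet-norm}, where you use bumps. Two places are wrong as written.

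\textbf{The logarithm for $0<\nu<d/2$.} The weights cannot absorb the $\sqrt j$ coming from the universal threshold $\sigma_j\sqrt{2\log d_j}$. Since $w_j\le1$, you have $R\sigma\gtrsim2^{-2j\nu}\Neff^{-1}\asymp\Neff^{-2s/d}$ for $j$ within $O(1)$ of $j_*$. The worst-case risk over the $\ell_1$ ball is of order $Rt$ there, so a factor $\sqrt{\log\Neff}$ survives in the squared error. Your fallback is therefore the actual argument, not an optional refinement. Use the threshold of Lemma~\ref{lem:denoising}, $t=\sigma\sqrt{8\log(1+x)}$, where $x=d_j\sigma/R$, $\sigma\asymp2^{-j\nu}w_j^{-1/2}\Neff^{-1}$ and $R\asymp2^{-j\nu}$. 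At $j=j_*+r$ with $w_j\asymp2^{-2\nu r}$, this gives $x\asymp2^{r(d+\nu)}$ and residual $2d_j\sigma^2/(1+x)\le2R\sigma$. The squared error is then $\lesssim\Neff^{-2s/d}2^{-\nu r}\sqrt{1+r}$, which sums over $r$. Your variant $\sigma\sqrt{2\log(d_j/d_{j_*})}$ also works, but only with the sharp soft-thresholding risk at zero, not with the lemma's $e^{-t^2/(8\sigma^2)}$ tail. Separately, your displayed fine-level term $(2^{-j\nu}2^{jd/2}\Neff^{-1}\sqrt j)^{1/2}$ grows with $j$ when $\nu<d/2$. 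The correct squared term is $R\sigma\sqrt{\log(1+x)}$ as above.

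\textbf{The lower bound.} First, $M^{-s/d}M^{1/2}=M^{-\nu/d}$, not $(n\eps)^{-1/2-\nu/d}$. Second, pairing the single function $\sum_i\tau_ib_i$ with an arbitrary output does not split over coordinates: cells where the output overweights the correct point contribute negatively. Assouad needs a pairwise separation instead. On the disagreement set $S$ of $\tau,\tau'$, the function $\sum_{i\in S}\tau_ib_i$ has RKHS norm $\lesssim\sqrt{|S|}M^{s/d-1/2}$ and mean gap $|S|k/n$. Hence $\MMD_K(p_{X_\tau},p_{X_{\tau'}})\gtrsim(k/n)M^{-\nu/d}\sqrt{|S|}$. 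This is the least-singular-value bound $\lambda\gtrsim M^{-\nu/d}$ in Theorem~\ref{thm:general-lower}. The nearest-hypothesis reduction with $\E\sqrt{B_0}\gtrsim\sqrt M$ then gives $(k/n)M^{1/2-\nu/d}\asymp(n\eps)^{-1/2-\nu/d}$, exactly as in the paper.
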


{The polynomial rate for the private error improves with
$\nu$ until $\nu=d/2$ and then remains $\Neff^{-1}$, with the extra
$\log(e\Neff)$ factor at the boundary. The term $m^{-1/2}$ accounts for
sampling synthetic records from $q$.}

For $\Neff\ge2$, sampling error does not change the rates if
\begin{equation}\label{eq:matern-output-size}
 m\ge\left\lceil
 \Neff^{\min\{2,\,1+2\nu/d\}}
 \right\rceil
\end{equation}
in every regime.
Appendix~\ref{app:matern-complete} gives the upper and lower proofs.

\paragraph{Laplace and neural network kernels.}

The wavelet construction applies to other kernels with equivalent RKHS norms.
The Euclidean Laplace kernel
$K_{\mathrm L,\ell}(x,y)\coloneqq\exp(-\norm{x-y}_2/\ell)$, $\ell>0$,
is a special case of the Mat\'ern kernel with $\nu=1/2$.
The Laplace kernel is used in nonlinear regression and classification, where
random feature approximations reduce the cost of kernel-based learning
\cite{ahir2025features}.

{The neural tangent kernel (NTK) links kernel regression to
the training of wide ReLU networks near initialization
\cite{bietti2019ntk}; its empirical features also support private
generative modeling by matching noisy feature means \cite{yang2024ntk}.
We consider a two-layer ReLU network with
independent standard Gaussian input weights, output weights, and hidden-layer
biases, all trainable. The NTK compares distributions through
parameter-gradient features.} For $x,y\in\Omega$, put
\[
 u_x\coloneqq(1,x),\qquad
 \theta_{xy}\coloneqq\arccos\frac{u_x^\top u_y}{\|u_x\|_2\|u_y\|_2}.
\]
Its normalized infinite-width NTK is
\begin{equation}\label{eq:relu-ntk}
 K_{\mathrm{NTK}}(x,y)\coloneqq
 \frac{\norm{u_x}_2\norm{u_y}_2}{2\pi(d+1)}
 \left[\sin\theta_{xy}+2(\pi-\theta_{xy})\cos\theta_{xy}\right].
\end{equation}
Its diagonal is $(1+\norm{x}_2^2)/(d+1)\le1$.
Its RKHS norm is equivalent to the Sobolev norm of order $(d+1)/2$,
as is the Laplace RKHS. Appendix~\ref{app:neural-comparison} proves this
comparison on the cube, including the effect of the trained bias, using
spherical ReLU multipliers \cite[Section~2]{olafsson2012cosine} and
Sobolev extension \cite{rychkov1999}.

{For Laplace, we use the Mat\'ern construction with
$\nu=1/2$ and length scale $\ell$. For the NTK, we use the
unit-length-scale Laplace construction; the RKHS comparison transfers
its utility bound and the Laplace minimax lower bound. These bounds
control queries in the RKHS unit ball; Section~\ref{sec:neural-ipm}
treats average outputs of norm-bounded ReLU networks.}

\begin{corollary}[Laplace and NTK rates]\label{cor:laplace-neural}
Let $K$ be $K_{\mathrm L,\ell}$ or $K_{\mathrm{NTK}}$.
Under the upper-bound settings of Theorem~\ref{thm:matern-rates},
$Y$ is $(\eps, \delta)$-DP, and there is
\begin{equation}\label{eq:laplace-neural-upper}
 \sup_X\E\MMD_K(p_X,p_Y)\le\frac1{\sqrt m}
 +C_{d,K,\mathcal W}
 \begin{cases}
 \Neff^{-1}\log(e\Neff),&d=1,\\
 \Neff^{-\frac{d+1}{2d}},&d\ge2.
 \end{cases}
\end{equation}
Under the lower-bound assumptions of Theorem~\ref{thm:matern-rates},
\begin{equation}\label{eq:laplace-neural-lower}
 \mathcal R_K(n,\eps,\delta)
 \ge c_{d,K}(n\eps)^{-\frac{d+1}{2d}}.
\end{equation}
Here $C_{d,K,\mathcal W},c_{d,K}>0$ are independent of $n,m,\eps,\delta$.
\end{corollary}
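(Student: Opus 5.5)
The plan is to obtain both kernels from Theorem~\ref{thm:matern-rates} at $\nu=1/2$: the Laplace kernel directly, and the NTK by comparing RKHS norms. For $K_{\mathrm L,\ell}$ no new work is needed. We take $\nu=1/2$ in \eqref{eq:matern-rate} and \eqref{eq:matern-lower}. For $d=1$ we are at the boundary case $\nu=d/2$, which gives $\Neff^{-1}\log(e\Neff)$. For $d\ge2$ we have $\nu<d/2$, and $1/2+\nu/d=(d+1)/(2d)$. The lower bound exponent is $\min\{1,(d+1)/(2d)\}=(d+1)/(2d)$, which equals $1$ when $d=1$. Privacy is inherited from Theorem~\ref{thm:matern-rates}.

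For $K_{\mathrm{NTK}}$, the key ingredient is a two-sided norm equivalence on $\Omega$:
\[
c_1\norm{h}_{\cH_{K_{\mathrm L,1}}}\le\norm{h}_{\cH_{K_{\mathrm{NTK}}}}\le c_2\norm{h}_{\cH_{K_{\mathrm L,1}}}.
\]
Here both RKHSs are identified as sets with the restriction of $H^{(d+1)/2}(\R^d)$ to $\Omega$, equipped with its quotient norm. The Laplace side is the standard Mat\'ern/Sobolev identification \cite{wendland2005}. The NTK side follows the route the paper indicates in Appendix~\ref{app:neural-comparison}. First, lift $x\mapsto u_x/\norm{u_x}_2\in S^d$, using the homogeneity factor $\norm{u_x}_2\norm{u_y}_2$; this factor is smooth and bounded above and below on $\Omega$, so multiplying by it preserves RKHS norms up to constants. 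Second, expand the zonal function $\sin\theta+2(\pi-\theta)\cos\theta$ in spherical harmonics. The ReLU/arc-cosine multipliers \cite{olafsson2012cosine} decay like $k^{-(d+1)}$ (some odd-degree vanishing is compensated by the bias coordinate, and this must be checked). That decay yields Sobolev order $(d+1)/2$ on $S^d$. Third, transfer to the cube through the gnomonic-type chart, a diffeomorphism onto a spherical cap, and use Sobolev extension \cite{rychkov1999} to pass between restriction norms on the cap and on $\Omega$.

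Given the equivalence, we have $\mathcal F_{K_{\mathrm{NTK}}}\subset c_1^{-1}\mathcal F_{K_{\mathrm L,1}}$ and $\mathcal F_{K_{\mathrm L,1}}\subset c_2\,\mathcal F_{K_{\mathrm{NTK}}}$. It follows that
\[
c_2^{-1}\MMD_{K_{\mathrm L,1}}\le\MMD_{K_{\mathrm{NTK}}}\le c_1^{-1}\MMD_{K_{\mathrm L,1}}.
\]
For the upper bound, we run the $\ell=1$ Laplace construction and apply the second inequality to $p_X,p_Y$. For the lower bound, any mechanism's NTK risk dominates $c_2^{-1}$ times its Laplace risk, so $\mathcal R_{K_{\mathrm{NTK}}}\ge c_2^{-1}\mathcal R_{K_{\mathrm L,1}}$.

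A constant subtlety is that the norm comparison multiplies the sampling term $1/\sqrt m$ by $c_1^{-1}$. There are two ways to keep it at exactly $1/\sqrt m$. One is to bound it directly by $\kappa_{\mathrm{NTK}}/\sqrt m\le 1/\sqrt m$, using the diagonal bound $(1+\norm{x}_2^2)/(d+1)\le1$; this needs a decomposition that separates the internal measure $q$ from the sampling step. The other is to apply the comparison only to $\MMD(p_X,q)$ and use the triangle inequality, so that $\MMD_{K_{\mathrm{NTK}}}(q,p_Y)$ is handled by the standard RKHS sampling bound with $\kappa\le1$.

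The main obstacle is the NTK/Sobolev equivalence. It requires nonvanishing multipliers at all degrees after accounting for the bias term, and the careful passage from sphere to cube with extension operators. Everything else is bookkeeping on the exponents.
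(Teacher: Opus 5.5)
Your overall structure is the paper's. The Laplace case is Theorem~\ref{thm:matern-rates} at $\nu=1/2$: the boundary case $\nu=d/2$ when $d=1$, and exponent $(d+1)/(2d)$ when $d\ge2$. The NTK case transfers both bounds from the unit-length-scale Laplace kernel through $\mathcal H_{K_{\mathrm{NTK}}}(\Omega)\simeq H^{(d+1)/2}(\Omega)$, which is Lemma~\ref{lem:relu-ntk-comparison}. Your handling of the sampling term is correct and more careful than the paper's one-line deduction. You apply the comparison only to $\MMD(p_X,q)$ and bound $\E\MMD_{K_{\mathrm{NTK}}}(q,p_Y)\le\kappa/\sqrt m\le1/\sqrt m$. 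A direct comparison of $\MMD(p_X,p_Y)$ would leave a constant in front of $1/\sqrt m$.

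The gap is in your sketch of the NTK--Sobolev equivalence. You say it requires nonvanishing multipliers at all degrees once the bias is accounted for. That is false, so it cannot be checked. The bias is already built into the lift $x\mapsto z(x)=(1,x)/r(x)\in\mathbb S^d$. On $\mathbb S^d$ the NTK multipliers are $\Lambda_j=2[d+2+j(j+d-1)]\lambda_j^2$, and $\Lambda_{2k+1}=0$ for every $k\ge1$. Hence the spherical RKHS is not $H^{(d+1)/2}(\mathbb S^d)$; it only embeds into it.

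The missing idea is that odd-degree components are never needed. The image $z(\Omega)$ lies in a cap inside the open hemisphere $\{u_0>0\}$, and this cap contains no antipodal pairs. Given $h\in H^s(\Omega)$ with $s=(d+1)/2$, take a Sobolev extension $\widetilde h$ and a cutoff $\chi$. Then define the even function $H(\pm z(x))=\chi(x)\widetilde h(x)/r(x)$, as in Lemma~\ref{lem:even-sobolev-extension}. This $H$ has only even-degree harmonic coefficients $H_{2k,a}$, where $\Lambda_{2k}\asymp_d(1+k)^{-(d+1)}$. Its spherical RKHS norm $\sum_{k,a}|H_{2k,a}|^2/\Lambda_{2k}$ is therefore at most $C_d\norm{H}_{H^s(\mathbb S^d)}^2\le C_d\norm{h}_{H^s(\Omega)}^2$. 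So $h=r\cdot(H\circ z)$ lies in $\mathcal H_{K_{\mathrm{NTK}}}$ with comparable norm, up to the normalizing constant. The reverse inclusion $\mathcal H_{K_{\mathrm{NTK}}}\subseteq H^s(\Omega)$ uses only the upper bound $\Lambda_j\lesssim_d(1+j)^{-(d+1)}$ for all $j$, where vanishing multipliers are harmless.

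This is the precise sense in which the bias "compensates" for the parity obstruction: the data occupy a region of $\mathbb S^d$ with no antipodal pairs. It does not make the odd multipliers nonzero. With this step in place, or by citing Lemma~\ref{lem:relu-ntk-comparison} directly, the rest of your argument goes through as written.
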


\subsection{Sliced Wasserstein-1}\label{sec:sliced-wasserstein}

For sliced Wasserstein-$1$ \cite{rabin2012,bonneel2015,nadjahi2020},
let $\tau$ be uniform on $\mathbb S^{d-1}$ and let $\theta_\#p$ be the
distribution of $\theta^\top x$ for $x\sim p$. With $W_1$ denoting the usual
Wasserstein distance for Euclidean transport cost, define
\begin{equation}\label{eq:sliced-wasserstein}
 \SW_1(p,q)\coloneqq\int_{\mathbb S^{d-1}}
 W_1(\theta_\#p,\theta_\#q)\,d\tau(\theta).
\end{equation}

{For $d\ge2$, the Laplace-kernel moments from the preceding
subsection also control sliced Wasserstein distance through the comparison}
\begin{equation}\label{eq:sw-matern-comparison}
 \SW_1(p,q)\le C_d\MMD_{K_{1/2,1}}(p,q),
\end{equation}
{where $K_{1/2,1}$ is the unit-length-scale Laplace kernel.
Consequently, the same private wavelet estimator gives the sliced
Wasserstein upper bound. A separate construction of databases that are
separated in sliced Wasserstein distance gives the matching lower bound.
Appendix~\ref{app:sliced-wasserstein} proves the comparison and both bounds.}

\begin{theorem}[Sliced Wasserstein-$1$ rates]\label{thm:sliced-wasserstein}
Fix $d\ge2$ and a public output size $m\ge1$.
For $0<\eps\le1$, $0<\delta<1$, and $\Neff\ge2$, the Laplace-kernel wavelet
specialization of Algorithm~\ref{alg:features} is $(\eps,\delta)$-DP and satisfies
\begin{equation}\label{eq:sw-upper}
 \sup_X\E\SW_1(p_X,p_Y)\le C_{d,\mathcal W}\Neff^{-\frac{d+1}{2d}}+\sqrt{d/m}.
\end{equation}
There are $c_d,T_d>0$ such that,
if $0<\eps\le1$, $0\le\delta\le\eps/100$, and $n\eps\ge T_d$,
every $(\eps,\delta)$-DP probability-measure output $\mathcal M$ satisfies
\begin{equation}\label{eq:sw-lower}
 \sup_X\E\SW_1(p_X,\mathcal M(X))\ge c_d(n\eps)^{-\frac{d+1}{2d}}.
\end{equation}
Here $C_{d,\mathcal W},c_d,T_d>0$ are independent of $n,m,\eps,\delta$.
\end{theorem}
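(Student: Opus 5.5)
The upper bound \eqref{eq:sw-upper} follows by combining the comparison \eqref{eq:sw-matern-comparison} with Corollary~\ref{cor:laplace-neural}; the lower bound \eqref{eq:sw-lower} needs a separate packing argument.

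\textbf{Upper bound.} The plan has three steps.

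First, prove \eqref{eq:sw-matern-comparison}. For each direction $\theta$, the one-dimensional Kantorovich dual gives
\[
W_1(\theta_\#p,\theta_\#q)=\int_{-\sqrt d}^{\sqrt d}\bigl|F_\theta(t)\bigr|\,dt,
\qquad F_\theta(t)=(p-q)(\{x:\theta^\top x\le t\}).
\]
Averaging over $\theta$ and applying Cauchy--Schwarz bounds $\SW_1(p,q)$ by a constant times the $L^2$ norm of the half-space (Radon) transform of $p-q$. By the Fourier slice theorem, that $L^2$ norm equals $\int|\widehat{p-q}(\omega)|^2|\omega|^{-2}|\omega|^{-(d-1)}\,d\omega$, which is the squared $\dot H^{-(d+1)/2}$ norm of $p-q$. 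The Laplace kernel's spectral density is $\asymp(1+|\omega|^2)^{-(d+1)/2}$, so
\[
\MMD_{K_{1/2,1}}(p,q)^2\asymp\int|\widehat{p-q}|^2(1+|\omega|^2)^{-(d+1)/2}\,d\omega.
\]
The homogeneous and inhomogeneous weights differ only at low frequencies. For $d\ge2$ we have $|\omega|^{-(d+1)}\gg1$ near the origin, so I would avoid the homogeneous norm there. Instead, I would bound directly with smooth test functions: write the ramp $(t-\theta^\top x)_+$, truncated to the cube, as a smooth compactly supported function, and use duality with $H^{(d+1)/2}$. The cleaner route is a localized test-function argument. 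Take the optimal 1-Lipschitz $h_\theta(\theta^\top x)$ multiplied by a cutoff equal to $1$ on $\Omega$. Then $\int_{S^{d-1}}h_\theta(\theta^\top\cdot)\,d\tau$-type averages have Sobolev norm of order $(d+1)/2$ controlled in the $L^2(\tau)$ sense, via the ridge-function Fourier computation. This low-frequency technicality is the step I would check most carefully.

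Second, the wavelet specialization with $\nu=1/2$ and $\ell=1$ is $(\eps,\delta)$-DP, and Corollary~\ref{cor:laplace-neural} with $d\ge2$ bounds $\E\MMD_{K_{1/2,1}}(p_X,q)\le C\Neff^{-(d+1)/(2d)}$ for the internal measure $q$.

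Third, the sampling error is bounded separately: $\E\SW_1(q,p_Y)\le\sqrt{d/m}$. Each projected measure lives on an interval of length $2\sqrt d$. The one-dimensional bound
\[
\E\int|F_m-F|\le\int\sqrt{F(1-F)/m}\le 2\sqrt d\cdot\tfrac12/\sqrt m
\]
gives the claim, and averaging over $\theta$ preserves it. The triangle inequality then combines the three steps.

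\textbf{Lower bound.} Use a packing plus the standard DP lower bound for $(\eps,\delta)$ with $\delta\le\eps/100$, via a group-privacy or Assouad-type argument.

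Partition $\Omega$ into $M\asymp(n\eps)^{1/d}\cdot$const small cubes of side $r=M^{-1/d}$; call the number of cubes $N$. Start from a base database with $\approx n/N$ records at each cube center. Place $k\asymp1/\eps$ records in each cube and move them between two antipodal sub-positions $\pm$ (a dipole of separation $r$), according to a sign $\sigma_i$. Choosing $N\asymp n\eps$ makes the $k=1/\eps$ perturbations feasible.

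By DP Assouad (with each coordinate flip costing $k$ replacements, so indistinguishable at constant level), the expected risk is at least a constant times the $\SW_1$ separation under a metric that must be additive across coordinates. That additivity is the main obstacle. Each projected distance is $W_1=\int|F_\theta|$, and for a random sign pattern the dipole contributions to $F_\theta(t)$ at a given $t$ come only from cubes whose slab meets $t$. There are about $N^{(d-1)/d}$ such cubes, each contributing $\pm\frac{k}{n}\cdot\mathbf 1$ over a width-$r$ window. I would lower bound $\E_\sigma\SW_1\gtrsim r\cdot\frac{k}{n}\sqrt{N^{(d-1)/d}}$ by Khintchine. The DP Assouad/Fano argument then shows any mechanism's output must miss about half the signs, giving the same order. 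Substituting $k=1/\eps$, $N=n\eps$, and $r=N^{-1/d}$ yields
\[
\frac{N^{-1/d}}{n\eps}\,N^{\frac{d-1}{2d}}=(n\eps)^{-1-\frac1d+\frac{d-1}{2d}}=(n\eps)^{-\frac{d+1}{2d}}.
\]
The Khintchine step is non-additive, so I would run a Fano-style argument over a Gilbert--Varshamov sign packing instead. For this I would show that any two packed sign vectors at Hamming distance $\gtrsim N$ yield databases at $\SW_1$-distance $\gtrsim(n\eps)^{-(d+1)/(2d)}$; this follows by applying the Khintchine/anticoncentration estimate to the difference pattern, for most $\theta$. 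Finally I would use DP Fano with group privacy, where the $\delta\le\eps/100$ condition keeps $k\delta e^{k\eps}$ small.
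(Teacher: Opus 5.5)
Your upper-bound plan follows the paper's route, but the lower bound has a genuine gap at exactly the step you flag.

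\textbf{Lower bound.} You claim that any two Gilbert--Varshamov codewords at Hamming distance $\gtrsim N$ give databases whose $\SW_1$ distance is of Khintchine order. This is false. Khintchine's inequality describes \emph{random} signs, whereas a packing gives you a fixed difference pattern, and structured patterns cancel. For example, with dipoles along $e_1$, take $\sigma_i=(-1)^{i_1}$ and its negation, which are at Hamming distance $N$. The projected CDFs are then alternating sums over slabs. They cancel except for directions near a sparse set of rational slopes, so $\SW_1$ falls below $\frac kn N^{(d-1)/(2d)}$ by a polynomial factor in $N$. An $L^2$ lower bound on the CDFs does hold for every pattern (this is the paper's Gram bound). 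But $\SW_1$ is their $L^1$ norm, and $L^2$ mass concentrated on few directions has small $L^1$ norm.

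A packing Fano bound also has to compare codewords that differ in $\Theta(n)$ records. There the group-privacy term $\delta_h$ of Lemma~\ref{lem:group-privacy} is not controlled by $\delta\le\eps/100$. Per-cube group privacy only helps through the chain rule under a product prior.

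What is missing is a small-ball bound under such a prior whose exponent beats the per-coordinate information. The paper does this as follows:
\begin{itemize}
\item It gives each cell $A$ positions and linearizes after convolving with a fixed smooth signed kernel of total variation one, which cannot increase $\SW_1$.
\item It proves two-sided bounds of order the cell side on the Gram matrix of the linearized CDFs.
\item It uses Ball's volume inequality for even isotropic measures to show that any fixed output is $\rho$-close to at most a fraction $A^{-(\text{number of cells})/2}$ of the databases.
\item The chain rule then bounds $I(U;Z)$ by $\frac18\log A$ per cell.
\end{itemize}

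Separately, your heuristic carries a spurious factor $r$. Each $t$ in a range of length $\asymp1$ meets $\asymp N^{(d-1)/d}$ windows, so the Khintchine value is $\frac kn N^{(d-1)/(2d)}$. Your final equality is also an arithmetic slip: $-1-\frac1d+\frac{d-1}{2d}=-\frac{d+3}{2d}$.

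\textbf{Upper bound.} You correctly identify the three ingredients: the Fourier-slice comparison, the $\nu=1/2$ wavelet rate for the internal measure, and the sampling bound $\sqrt{d/m}$. However, you leave the low-frequency step of \eqref{eq:sw-matern-comparison} open, and your ridge-function/cutoff alternative is not carried out. The paper closes this step directly. Since $H_\mu(\theta,\cdot)$ vanishes off $[-\sqrt d,\sqrt d]$, we have $|\widehat{H_\mu(\theta,\cdot)}(u)|^2\le2\sqrt d\,\|H_\mu(\theta,\cdot)\|_2^2$. Hence frequencies $|u|<\pi/(4\sqrt d)$ carry at most half of the Plancherel energy. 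On the remaining region, $\|\omega\|_2^{-d-1}\le C_d(1+\|\omega\|_2^2)^{-(d+1)/2}$, and the Laplace spectral density gives $\SW_1\le C_d\MMD_{K_{1/2,1}}$.
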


{The term $\sqrt{d/m}$ is the error from sampling the synthetic
records. Taking $m\ge\lceil\Neff^{\frac{d+1}{d}}\rceil$ makes it no larger
than the privacy term, up to a constant depending on $d$. Under the common
privacy regime of the two bounds, their polynomial rates match, with a
factor $[2\log(1.25/\delta)]^{\frac{d+1}{4d}}$ between the displayed rates.}

\subsection{ReLU IPMs}\label{sec:neural-ipm}

We study the IPM induced by norm-bounded two-layer ReLU networks,
motivated by neural-network discriminators in GANs
\cite{arora2017gan,park2025relu}. For $\sigma(t)\coloneqq t_+$, consider the
two-layer variation class \cite{bach2017convex}
\begin{equation}\label{eq:neural-class}
 \mathcal F_{\mathrm{NN}}\coloneqq
 \left\{\sum_{j=1}^{r}a_j\sigma(w_j^\top x+b_j):
 r\ge1,\ \sum_j|a_j|\le1,\ \norm{w_j}_1+|b_j|\le1\right\}.
\end{equation}
The width, hidden directions, and biases may vary. The following lemma gives an equivalent definition.

\begin{lemma}[Single-neuron representation]\label{lem:neural-single-neuron}
For any probability measures $p,q$ on $\Omega=[-1,1]^d$, the IPM
$d_{\mathrm{NN}}\coloneqq d_{\mathcal F_{\mathrm{NN}}}$ induced by
\eqref{eq:neural-class} satisfies
\begin{equation}\label{eq:neural-ipm}
 d_{\mathrm{NN}}(p,q)=
 \sup_{\norm{w}_1+|b|\le1}
 \left|\int\sigma(w^\top x+b)\,d(p-q)(x)\right|.
\end{equation}
\end{lemma}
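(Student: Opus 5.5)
We prove the identity by two inequalities, using only linearity of the integral and the triangle inequality. Write $\mu\coloneqq p-q$, a finite signed measure on $\Omega$, and for each admissible pair $(w,b)$ with $\norm{w}_1+|b|\le1$ put $S(w,b)\coloneqq\left|\int\sigma(w^\top x+b)\,d\mu(x)\right|$. Every neuron $\sigma(w^\top x+b)$ is continuous and satisfies $|\sigma(w^\top x+b)|\le\norm{w}_1\norm{x}_\infty+|b|\le1$ on $\Omega$. Hence all integrals are finite, and the right-hand side of \eqref{eq:neural-ipm} is at most $2$.

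For the lower bound, take width $r=1$, output weight $a_1=1$, and any admissible $(w_1,b_1)=(w,b)$. The resulting function $\sigma(w^\top x+b)$ lies in $\mathcal F_{\mathrm{NN}}$, so $d_{\mathrm{NN}}(p,q)\ge S(w,b)$. Taking the supremum over admissible pairs gives $d_{\mathrm{NN}}(p,q)\ge\sup_{\norm{w}_1+|b|\le1}S(w,b)$.

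For the upper bound, take any $h=\sum_{j=1}^r a_j\sigma(w_j^\top x+b_j)\in\mathcal F_{\mathrm{NN}}$. Since the sum is finite, linearity gives
\[
 \left|\int h\,d\mu\right|
 \le\sum_{j=1}^r|a_j|\,S(w_j,b_j)
 \le\Big(\sum_{j=1}^r|a_j|\Big)\sup_{\norm{w}_1+|b|\le1}S(w,b)
 \le\sup_{\norm{w}_1+|b|\le1}S(w,b).
\]
The second inequality uses that each $(w_j,b_j)$ is admissible, and the third uses $\sum_j|a_j|\le1$. Taking the supremum over $h$ yields $d_{\mathrm{NN}}(p,q)\le\sup S$, and together with the lower bound this proves \eqref{eq:neural-ipm}.

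The argument has no real obstacle. The one point to check is that the class in \eqref{eq:neural-class} uses finite sums, so no limit or measure-theoretic interchange is needed. The constraint $\sum_j|a_j|\le1$ is an $\ell_1$ constraint, so the class is contained in the closed absolutely convex hull of the single neurons, and $\left|\int\cdot\,d\mu\right|$ is a seminorm that cannot exceed its supremum on the extreme points. The sign of $a_j$ causes no difficulty because the absolute value in $S$ absorbs it.
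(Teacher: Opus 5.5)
Your proof is correct and matches the paper's argument: bound each neuron by $\norm{w}_1+|b|\le1$ on $\Omega$ for integrability, get the upper bound from linearity, the triangle inequality and $\sum_j|a_j|\le1$, and get the reverse inequality by taking $r=1$, $a_1=1$. Your closing remark is inessential and slightly imprecise: the single neurons generate the absolutely convex hull but need not be its extreme points, and the relevant fact is that a seminorm's supremum over that hull equals its supremum over the generating set.
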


The lemma lets us estimate single-neuron averages while controlling every
network in $\mathcal F_{\mathrm{NN}}$.
The upper bound uses adaptive private estimation of moments on a public net.

\paragraph{ReLU moments.}
{
Write $v=(b,w)$. The allowed parameter vectors satisfy
$\|v\|_2\le\|v\|_1=|b|+\|w\|_1\le1$.
By positive homogeneity of ReLU, a bound on differences of neuron averages
for all Euclidean-unit directions also controls every allowed parameter vector.
Choose a finite public net $V_0\subset\mathbb S^d$ approximating these
unit directions, and use Algorithm~\ref{alg:features} with the public moment map
}
\[
 f(x)\coloneqq\bigl(\sigma(v^\top(1,x))\bigr)_{v\in V_0} \in \bigl(\mathbb R^{|V_0|}, \|\cdot\|_\infty\bigr),
\]
where each coordinate of $\theta(p)$ is the average output under $p$ of
one of these public neurons. The maximum norm measures the largest error
among these coordinate averages.
Choose the data-grid and neuron-net mesh sizes and the fitting tolerance to
be at most $\Neff^{-2}$.

\paragraph{Private ReLU moment estimator.}
After rounding records to the public grid, a first PREM release
\cite{ghazi2025prem} selects a small set of neuron parameter directions to represent the public net.
Two Gaussian releases estimate the average neuron outputs at the selected
directions and how those averages change when the neuron parameters vary.
Their error bounds use matrix factorization \cite{matousek2020factorization}
and geometric discrepancy \cite{matousek1995halfspaces,matousek1996zonoids}.
These estimates approximate the averages for nearby directions, and a
fourth PREM release estimates the remaining approximation errors.
Combining these estimates and clipping them to the range of possible
neuron averages gives $\mathcal A(X)$. Each release is
$(\eps/4,\delta/4)$-DP for every prior history, so adaptive composition
\cite[Theorem~B.1]{dwork2014} gives $(\eps,\delta)$-DP overall.
Appendix~\ref{app:neural-ipm} gives the full
construction and analysis.

\begin{theorem}[Full ReLU network class]\label{thm:neural-ipm}
Fix $d\ge1$ and a public output size $m\ge1$.
For $0<\eps\le1$, $0<\delta<1$, and $\Neff\ge2$, Algorithm~\ref{alg:features},
with the ReLU specification above, produces an
$(\eps,\delta)$-DP dataset satisfying
\begin{equation}\label{eq:neural-upper}
 \sup_X\E d_{\mathrm{NN}}(p_X,p_Y)
 \le C_d\Neff^{-\min\{1,\frac{d+3}{2d}\}}
 [\log(e\Neff)\log(e/\delta)]^5
 +4\sqrt{\frac{2\log(2(d+1))}{m}}.
\end{equation}
Conversely, if $0<\eps\le1$, $n\eps\ge2$, and
$0\le\delta\le c_0(e^\eps-1)$, every $(\eps,\delta)$-DP probability-measure
output $\mathcal M$ satisfies
\begin{equation}\label{eq:neural-lower}
 \sup_X\E d_{\mathrm{NN}}(p_X,\mathcal M(X))
 \ge c_d(n\eps)^{-\min\{1,(d+3)/(2d)\}}.
\end{equation}
\end{theorem}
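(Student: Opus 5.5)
The plan is to prove the two halves separately. The upper bound \eqref{eq:neural-upper} will come from Theorem~\ref{thm:moment-synthesis} applied with the $\ell_\infty$ norm. The lower bound \eqref{eq:neural-lower} will come from comparing $d_{\mathrm{NN}}$ with Mat\'ern MMD at $\nu=3/2$ and invoking \eqref{eq:matern-lower}.

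For the upper bound I will verify the hypotheses of Theorem~\ref{thm:moment-synthesis} in order.
\begin{enumerate}[(i)]
\item \emph{Comparison inequality \eqref{eq:moment-comparison}.} Start from Lemma~\ref{lem:neural-single-neuron} and write the parameter as $v=(b,w)$ with $\|v\|_2\le1$. Positive homogeneity of $\sigma$ reduces the supremum to unit directions $v\in\mathbb S^d$, scaled by $\|v\|_2\le1$. For a net point $v'\in V_0$ at distance $\eta$ from $v$, and $x\in\Omega$,
\[
|\sigma(v^\top(1,x))-\sigma(v'^\top(1,x))|\le\eta\sqrt{d+1}.
\]
Hence $d_{\mathrm{NN}}(p,q)\le 2\sqrt{d+1}\,\eta+\|\theta(p)-\theta(q)\|_\infty$. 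With $\eta\le\Neff^{-2}$ this gives $C=1$ and $\etail=O_d(\Neff^{-2})$.
\item \emph{Grid error \eqref{eq:moment-geometry}.} Push $p$ forward by rounding to the public grid $G$. Since each coordinate of $f$ is $\sqrt{d+1}$-Lipschitz, $\egrid\le\sqrt{d+1}\cdot\mathrm{mesh}=O_d(\Neff^{-2})$.
\item \emph{Fitting error.} The tolerance $\eopt\le\Neff^{-2}$ is chosen directly.
\item \emph{Sampling term.} Symmetrize to get $s_{\mathcal F_{\mathrm{NN}}}(m)\le 2\,\E\sup_{\|w\|_1+|b|\le1}\bigl|\frac1m\sum_r\epsilon_r\sigma(w^\top U_r+b)\bigr|$. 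Talagrand's contraction (the $1$-Lipschitz $\sigma$ with $\sigma(0)=0$ costs another factor $2$) reduces this to the Rademacher complexity of linear functionals of $(1,x)$ over the $\ell_1$ ball. $\ell_1/\ell_\infty$ duality and Massart's lemma on the $2(d+1)$ signed coordinates then bound that complexity by $\sqrt{2\log(2(d+1))/m}$. This yields exactly the term $4\sqrt{2\log(2(d+1))/m}$.
\end{enumerate}

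The remaining and main step is $\epriv=\sup_X\E\|\mathcal A(X)-\theta(p_X)\|_\infty\lesssim \Neff^{-\min\{1,(d+3)/(2d)\}}\,\mathrm{polylog}$.

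I would organise it as a two-scale decomposition on the sphere of directions.
\begin{enumerate}[(a)]
\item A coarse set of $k$ directions is selected by the first PREM release. For this set, a Gaussian mechanism on a factorization $M=LR$ of the neuron-average query matrix gives $\ell_\infty$ error of order $\gamma_2(M)/\Neff$ up to $\sqrt{\log}$ factors.
\item The second Gaussian release estimates gradients in $v$. Together with the first, it handles nearby directions through a first-order Taylor expansion, since $\nabla_v\sigma(v^\top u)=\mathbf 1\{v^\top u>0\}u$ is a halfspace indicator.
\item The second-order remainder involves differences of halfspace indicators over caps of radius $r\approx k^{-1/d}$. Its $\gamma_2$/discrepancy is bounded through Matou\v{s}ek's halfspace and zonoid discrepancy bounds, which behave like $N^{-1/2-1/(2d)}$ with one extra order of smoothness for ReLU. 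The fourth PREM release estimates this residual to additive error about $\Neff^{-1}$ times the residual's range, with polylogarithmic loss.
\item Balancing $k$ against $r$ should give the exponent $\tfrac12+\tfrac{3}{2d}$, capped at $1$ by the parametric Gaussian rate. Clipping to the range of possible neuron averages only decreases the error.
\end{enumerate}
I expect this $\gamma_2$/discrepancy bookkeeping, with the logarithmic powers accumulating to the fifth power, to be the hardest part. I would first try to bound $\gamma_2$ of the cap-restricted ReLU residual matrices via the discrepancy of zonotopes.

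Privacy comes from four $(\eps/4,\delta/4)$-DP releases combined by adaptive composition, followed by post-processing.

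For the lower bound I would show $\MMD_{K_{3/2,1}}\le C_d\, d_{\mathrm{NN}}$. That is, every function in the Mat\'ern-$3/2$ RKHS unit ball, which is Sobolev $H^{(d+3)/2}(\Omega)$ up to norm equivalence, has ReLU variation norm at most $C_d$ on $\Omega$. The route is:
\begin{enumerate}[(a)]
\item Extend the function to $\R^d$ by Rychkov's extension.
\item Use a Fourier/ridgelet (Radon) representation $h(x)=\int\sigma(w^\top x+b)\,d\mu(w,b)+\text{affine}$ with $\|\mu\|_{TV}\lesssim\|h\|_{H^{(d+3)/2}}$. This is the Barron-type bound needing $\int|\hat h(\xi)|(1+|\xi|)^2\,d\xi<\infty$, which Cauchy--Schwarz delivers from $H^{s}$ with $s>2+d/2$. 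If this is only borderline, I would use the sharper Radon-domain characterization at order $(d+3)/2$, or replace $\nu=3/2$ by a slightly larger value only if the exponent is unaffected.
\item Absorb affine parts into ReLU combinations using $t=\sigma(t)-\sigma(-t)$.
\item Rescale to the $\|w\|_1+|b|\le1$ normalization, which costs a $d$-dependent constant.
\end{enumerate}
Then \eqref{eq:matern-lower} with $\nu=3/2$ gives the lower bound $c(n\eps)^{-\min\{1,1/2+3/(2d)\}}=c(n\eps)^{-\min\{1,(d+3)/(2d)\}}$, under the same assumptions on $\eps$, $\delta$ and $n\eps$.
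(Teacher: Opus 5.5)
\textbf{Upper bound.} Your reductions (i)--(iv), the $\gamma_2$-factorized Gaussian releases, and the four-release adaptive composition match the paper. Step (c), however, carries the whole rate and is wrong as stated.

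The Matou\v{s}ek bounds are not applied to the remainder. They bound $\gamma_2$ of the value matrix $A=(a_z(x))_{z\in V,\,x\in G}$ via zonoid discrepancy, giving $\max\{1,K^{(d-3)/(2d)}\}$ up to $\log(eK)$, with a Poisson-smoothed spherical-harmonic factorization used instead for $d\le3$. They also bound $\gamma_2$ of the subgradient matrix $B=(u_{x,j}H_z(x))$ via halfspace discrepancy, giving $K^{(d-1)/(2d)}$. In both cases the discrepancy is taken over subsets of the $K$ selected directions and then transposed. The remainder has a row for every $v\in V_0$, polynomially many in $\Neff$, so that argument does not apply to it.

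Nor can the fourth release reach additive error $\Neff^{-1}$ times the range $R_dh$. If it could, taking $h$ constant (so $K=O_d(1)$ and $\gamma_2(A),\gamma_2(B)=O_d(1)$) would give error $O(\mathcal L/\Neff)$ in every dimension, contradicting \eqref{eq:neural-lower} for $d\ge4$.

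The missing idea is what the first release buys. It picks a maximal $h$-separated set in the data-dependent metric $\norm{v-z}_2+q_0D_{v,z}$. This gives both $\norm{v-z(v)}_2\le h$ and $p_{X_G}D_{v,z(v)}\le C_dh$, while $K\le C_dh^{-d}$ by a Haussler packing bound for hemispheres under $\tfrac12\overline q_0+\tfrac12\tau$. The subgradient inequalities then give $0\le R_v\le R_dh\,D_{v,z(v)}$, so $R_v/(R_dh)$ is a $[0,1]$-valued query with data mean $O(h)$. PREM's relative-error guarantee $\zeta\mu+\mathcal L/(\zeta\Neff)$ with $\zeta\asymp\sqrt{\mathcal L/(h\Neff)}$ then yields $e_R\lesssim h^{3/2}\sqrt{\mathcal L/\Neff}$. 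Balancing this against $e_A+R_dh\,e_B\lesssim\mathcal L\max\{1,h^{-(d-3)/2}\}/\Neff$ gives $h\asymp(\mathcal L/\Neff)^{1/\max\{d,3\}}$ and the exponent $\min\{1,(d+3)/(2d)\}$. With purely geometric caps, an adversarial database can make the remainder mean of order $h$ rather than $h^2$, and the same balance then yields only $\Neff^{-1/2-1/(d-1)}$ for $d\ge4$.

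\textbf{Lower bound.} The inclusion you aim for is true, but your main route cannot reach it. The Barron integral of the extension is not controlled by its $H^s(\R^d)$ norm at $s=(d+3)/2$. By duality, $\sup_{\norm{g}_{H^s(\R^d)}\le1}\int|\widehat g(\xi)|(1+\norm{\xi}_2)^2\,d\xi$ is finite exactly when $\int(1+\norm{\xi}_2)^4(1+\norm{\xi}_2^2)^{-s}\,d\xi<\infty$, i.e.\ $s>d/2+2$. At $s=(d+3)/2$ this integral diverges linearly in the radius, so you are half a derivative short, not borderline.

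Raising $\nu$ does change the exponent. For $d\ge4$, any $\nu>3/2$ (in particular the $\nu>2$ your Barron route needs) gives $(n\eps)^{-\min\{1,1/2+\nu/d\}}$, which is strictly weaker than \eqref{eq:neural-lower}. The Radon-domain fallback is not carried out: its $L^2$ form matches $\dot H^{(d+3)/2}$, but you would still need an $L^1$ bound in the offset variable.

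The paper avoids the $L^1$ variation-norm question altogether. For the ReLU random-feature kernel, $\MMD_{K_{\mathrm{RF}}}(p,q)^2=\int_{\mathbb S^d}|\mu a_v|^2\,d\tau(v)\le R_d^2\,d_{\mathrm{NN}}(p,q)^2$, so it suffices to embed $H^{(d+3)/2}(\Omega)$ into $\mathcal H_{K_{\mathrm{RF}}}$. That is a spectral statement: $K_{\mathrm{RF}}$ acts by $\lambda_j^2$ on degree-$j$ spherical harmonics with $|\lambda_{2k}|\asymp(1+k)^{-(d+3)/2}$. The vanishing odd multipliers $\lambda_{2k+1}$, $k\ge1$, are avoided by the even Sobolev extension of $h/r$ to $\mathbb S^d$ (Lemma~\ref{lem:even-sobolev-extension}).
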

{Up to logarithmic factors, the privacy error has order
$(n\eps)^{-1}$ for $d\le3$ and $(n\eps)^{-(d+3)/(2d)}$ for $d\ge4$.
The final $m^{-1/2}$ term comes from sampling the synthetic records.
Taking $m\ge\lceil\Neff^2\rceil$ is sufficient to preserve the privacy
exponent. Under the common privacy regime of the upper and lower bounds,
their polynomial exponents match in every fixed dimension.}

{
For the lower bound, a kernel comparison gives
$\MMD_{K_{3/2,1}}\le C_d d_{\mathrm{NN}}$
(Appendix~\ref{app:neural-ipm}), so Theorem~\ref{thm:matern-rates} applies with
$\nu=3/2$.
}

\paragraph{AI Disclosure}
The authors developed the general framework for differentially private synthetic data generation under integral probability metrics (IPMs), initially working out the details for the Matérn kernel. The authors also independently selected the concrete examples of IPMs considered in this paper. ChatGPT was used to assist in developing additional examples and performing routine calculations and in identifying the Gaussian denoising method used in Appendix \ref{mat:app-denoising}. All AI-assisted results were reviewed and verified by the authors.

\paragraph{Acknowledgments} Y.Z. was partially supported by the Simons Grant MPS-TSM-00013944 and NSF DMS-2606337. This work was carried out while Y.Z. was visiting the Simons Institute for the Theory of Computing during the Spectral Theory Beyond Graphs program in Fall 2026. 

\label{main-text-end}

% Use this manuscript's bibliography from either the project root or ALT2027/.
\IfFileExists{ALT2027/refs.bib}
  {\bibliography{ALT2027/refs}}
  {\bibliography{refs}}

\appendix

\paragraph{Organization of the appendices}
Appendix~\ref{app:general-norm} relates the framework to PSMM, PMM, and Chebyshev moment matching.
Appendix~\ref{app:projection} proves the general moment-matching theorem and its kernel-feature specialization, develops public kernel-feature coordinates, and gives the fitting and sampling details.
Appendix~\ref{app:kernel-tools} establishes a general kernel lower bound.
Appendix~\ref{app:smooth} gives the proofs for Gaussian and polynomial kernels.
Appendix~\ref{app:matern-complete} develops the wavelet and denoising analysis for Mat\'ern kernels and derives the Laplace and neural tangent kernel consequences.
Appendix~\ref{app:sliced-wasserstein} proves the IPM representation and upper and lower bounds for sliced Wasserstein-$1$.
Finally, Appendix~\ref{app:neural-ipm} gives the adaptive private moment estimator and proves the upper and lower bounds for ReLU IPMs.

Table~\ref{tab:proof-dependencies} lists the ingredients used in the
upper and lower bounds for each IPM.

\begin{table}[ht]
\centering
\caption{Proof dependencies for the upper and lower bounds. All upper bounds use the general moment-matching theorem.}
\label{tab:proof-dependencies}
\small
\renewcommand{\arraystretch}{1.15}
\setlength{\tabcolsep}{4pt}
\begin{tabular}{@{}
>{\raggedright\arraybackslash}p{0.18\linewidth}
>{\raggedright\arraybackslash}p{0.36\linewidth}
>{\raggedright\arraybackslash}p{0.38\linewidth}
@{}}
\toprule
Kernel or IPM & Upper-bound ingredients & Lower-bound ingredients \\
\midrule
Gaussian / polynomial
& Orthogonal kernel features (Corollary~\ref{thm:feature-oracle})
& General kernel lower bound (Theorem~\ref{thm:general-lower})
\\
\addlinespace
Mat\'ern
& Wavelet moments, with $\ell_1$ denoising for $0<\nu<d/2$
(Theorem~\ref{thm:matern-rates})
& General kernel lower bound \& geometric construction
(Appendix~\ref{app:lower-proofs})
\\
\addlinespace
Laplace
& Mat\'ern upper bound with $\nu=1/2$
& Mat\'ern lower bound with $\nu=1/2$
\\
\addlinespace
ReLU NTK
& Mat\'ern upper bound with $\nu=1/2$ via RKHS norm equivalence
(Lemma~\ref{lem:relu-ntk-comparison})
& Mat\'ern lower bound with $\nu=1/2$ via RKHS norm equivalence
(Lemma~\ref{lem:relu-ntk-comparison})
\\
\addlinespace
Sliced Wasserstein-$1$
& Laplace MMD upper bound via \eqref{eq:sw-matern-comparison}
& Separate information-theoretic argument
(Appendix~\ref{app:sliced-wasserstein})
\\
\addlinespace
ReLU IPM
& Adaptive private moment estimation (Appendix~\ref{app:neural-ipm}) and Theorem~\ref{thm:moment-synthesis}
& Mat\'ern lower bound with $\nu=3/2$ via IPM comparison
(Appendix~\ref{app:neural-ipm})
\\
\bottomrule
\end{tabular}
\end{table}

\section{Connection to earlier private synthesis methods}
\label{app:general-norm}\label{app:chebyshev-containment}

We now connect the framework to earlier private synthesis methods,
starting with moments that record the probability mass in each
partition cell.

\paragraph{Partition moments for PSMM and PMM.}

PSMM and PMM \cite{he2023} use cell masses as moments.
Let $\Omega=[0,1]^d$, partition it into public cells $(A_i)_{i=1}^D$
of diameter at most $\eta$, and choose one representative $g_i\in A_i$.
Set
\[
 f=(\mathbf1_{A_i})_{i=1}^D,\qquad G=\{g_1,\ldots,g_D\}.
\]
Then $\theta(p)$ is the vector of cell probabilities, and $\mathcal C_G$
is the probability simplex, so $\egrid=0$.
Use the bounded Lipschitz norm
\[
 \|a\|_*\coloneqq
 \sup_{\substack{h:G\to\R,\ \operatorname{Lip}(h)\le1\\
                  \|h\|_\infty\le\sqrt d}}
 \left|\sum_{i=1}^D a_i h(g_i)\right|.
\]
On differences of probability vectors, this equals the $1$-Wasserstein
distance between their measures on $G$ by Kantorovich--Rubinstein
duality \cite{villani2009}. Moving the mass of each cell to its
representative gives
\begin{equation}\label{eq:partition-wasserstein-certificate}
 W_1(p,q)\le2\eta+\|\theta(p)-\theta(q)\|_*.
\end{equation}
Thus the comparison holds with $\etail=2\eta$ and $C=1$.

PSMM adds independent Laplace noise of scale $2/\eps$ to each cell
count and divides by $n$ to estimate the cell probabilities privately.
Step~1 then fits a probability vector to these estimates by linear
programming in $\|\cdot\|_*$, giving $\eopt=0$.
PMM instead adds discrete Laplace noise to counts on a public hierarchy
and repairs them from top to bottom to be nonnegative and consistent
across levels.
For our fixed database size, keep the root count at the public value $n$
and double the cited noise scales to account for replacement adjacency.
The repaired leaf counts sum to $n$; dividing by $n$ gives
$\mathcal A_{\mathrm{PMM}}(X)\in\mathcal C_G$, so no projection is needed
and $\eopt=0$.
Both estimators are pure $\eps$-DP.

With the public partitions and tuning in \cite{he2023}, for $d\ge2$,
both methods have $\epriv=\widetilde O((n\eps)^{-1/d})$ and
$\eta\asymp(n\eps)^{-1/d}$. Theorem~\ref{thm:moment-synthesis} therefore gives
\[
 \sup_X\E W_1(p_X,q)=\widetilde O((n\eps)^{-1/d}).
\]
The estimator constructions and their error analyses are given in
\cite{he2023}.

\paragraph{Chebyshev moments for multivariate smooth queries.}
 {Let} $\Omega=[-1,1]^d$ and {let $k\ge1$ be} an integer{. The query functions and all their derivatives through order $k$ are bounded in absolute value by one:}
\begin{equation}\label{eq:ding-query-class}
 \mathcal F_k=\{g\in C^k(\Omega):
   \max_{|\alpha|\le k}\|\partial^\alpha g\|_\infty\le1\},\qquad
 d_k(\mu,\lambda)=\sup_{g\in\mathcal F_k}
   |\textstyle\int g\,d(\mu-\lambda)|.
\end{equation}
For $J\ge k$, {use} the rescaled Chebyshev polynomial moment map {$f_J$ from \cite[Theorem~9]{ding2026}; see also \cite{musco2025}. Its polynomials have} degree at most $J$ in each coordinate{. Write} $\theta_J(p)=\int f_J\,dp$. {The query error satisfies}
\begin{equation}\label{eq:chebyshev-smooth-certificate}
 d_k(p,q)\le\frac{C_{d,k}}{J^k}
 +C_{d,k}\|\theta_J(p)-\theta_J(q)\|_2.
\end{equation}
 {This gives} $\etail=C_{d,k}/J^k$ and $C=C_{d,k}$. With {a suitable} $J$, {a fine enough} public grid, and {enough} output {records} $m$, {the} error {is}
 $\tilde O\bigl(\Neff^{-\min\{1,k/d\}}\bigr)$.

\section{Proofs for generalized moment matching}
\label{app:projection}

This appendix proves Theorem~\ref{thm:moment-synthesis} and
Corollary~\ref{thm:feature-oracle} and gives moment-fitting, sampling,
and public feature constructions.

\subsection{Kernel means and the Gaussian mechanism}

For a query $F:\Omega^n\to\R^D$, define its sensitivity by
\[
 \Delta_2(F)=\sup_{X\sim X'}\|F(X)-F(X')\|_2.
\]
Gaussian noise with standard deviation
\[
 \Delta_2(F)\sqrt{2\log(1.25/\delta)}/\eps
\]
gives $(\eps,\delta)$-DP for
$0<\eps<1$, $0<\delta<1$ \cite[Theorem~1]{balle2018}
\cite[Theorem~3.22]{dwork2014}.

\paragraph{Gaussian moment estimator.}
Let $f:\Omega\to\R^D$ be a public bounded moment map.
Take
\[
 \mathcal A_{\mathrm G}(X)=\theta(p_X)+\frac{\Sigma^{1/2}\xi}{\Neff},
 \qquad \xi\sim\mathcal N(0,I_D),
\]
where the public covariance satisfies
\begin{equation}\label{eq:moment-privacy}
 \sup_{x,y}\|\Sigma^{-1/2}(f(x)-f(y))\|_2\le1,
 \qquad \Sigma\succ0.
\end{equation}
 {Replacing one record changes $\Sigma^{-1/2}\theta(p_X)$ by at most $1/n$
in Euclidean norm. Adding $\xi/\Neff$ and then multiplying by $\Sigma^{1/2}$
gives the $(\eps,\delta)$-DP estimator $\mathcal A_{\mathrm G}$.} With the Euclidean norm, its error term is
\begin{equation}\label{eq:gaussian-moment-risk}
 \epriv=\frac{1}{\Neff}\E\|\Sigma^{1/2}\xi\|_2
 \le\frac{\sqrt{\tr(\Sigma)}}{\Neff}.
\end{equation}

The next lemma gives the expected MMD error of drawing $m$ records from the
private distribution $q$
\cite[equation~(7)]{muandet2016shrinkage}:
\begin{lemma}[Sampling a kernel mean]\label{lem:sampling-mean}
For independent $U_1,\ldots,U_m\sim q$,
\[
 \E\MMD_K\left(q,\frac1m\sum_{i=1}^m\delta_{U_i}\right)^2
 =\frac{\E K(U_1,U_1)-\|\mu_q\|_{\cH_K}^2}{m}
 \le\frac{\kappa^2}{m}.
\]
\end{lemma}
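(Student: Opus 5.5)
The identity follows from expanding the squared MMD through the kernel-mean representation \eqref{eq:mmd}. Write $\hat q_m\coloneqq\frac1m\sum_{i=1}^m\delta_{U_i}$. Then
\[
 \MMD_K(q,\hat q_m)=\norm{\mu_q-\mu_{\hat q_m}}_{\cH_K}
 =\Bigl\|\frac1m\sum_{i=1}^m\bigl(\Phi(U_i)-\mu_q\bigr)\Bigr\|_{\cH_K}.
\]
So the squared MMD is the squared norm of an average of centered Hilbert-space-valued random elements $Z_i\coloneqq\Phi(U_i)-\mu_q$.

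We first check that everything is well defined. Continuity of $K$ on the compact $\Omega$ gives $\norm{\Phi(x)}_{\cH_K}=\sqrt{K(x,x)}\le\kappa$. The map $x\mapsto\Phi(x)$ is continuous into $\cH_K$, since $\norm{\Phi(x)-\Phi(y)}^2=K(x,x)-2K(x,y)+K(y,y)$. Hence $\mu_q$ exists as a Bochner integral, and the reproducing property gives $\ip{\mu_q}{h}=\int h\,dq$. In particular $\E Z_i=0$ in the Bochner sense, and $\E\ip{Z_i}{Z_j}=\ip{\E Z_i}{\E Z_j}=0$ for $i\ne j$ by independence. The last step is justified by Fubini, because $\ip{Z_i}{Z_j}$ is bounded by $4\kappa^2$.

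Expanding the square,
\[
 \E\MMD_K(q,\hat q_m)^2=\frac1{m^2}\sum_{i,j}\E\ip{Z_i}{Z_j}=\frac1{m^2}\sum_{i=1}^m\E\norm{Z_i}^2=\frac{\E\norm{Z_1}^2}{m}.
\]
It then suffices to compute $\E\norm{\Phi(U_1)-\mu_q}^2=\E K(U_1,U_1)-2\E\ip{\Phi(U_1)}{\mu_q}+\norm{\mu_q}^2$. Using $\E\ip{\Phi(U_1)}{\mu_q}=\ip{\mu_q}{\mu_q}$, this equals $\E K(U_1,U_1)-\norm{\mu_q}_{\cH_K}^2$, which is the claimed identity.

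For the inequality, drop the nonnegative term $\norm{\mu_q}^2$ and bound $K(U_1,U_1)\le\kappa^2$. Nothing here is a real obstacle. The only point needing care is the measure-theoretic justification of the Bochner mean and of interchanging expectation and inner product. Continuity and the bound $\kappa$ on a compact domain handle this directly. Alternatively, one can avoid it by working with the double-integral formula in \eqref{eq:mmd} and computing $\E K(U_i,U_j)=\iint K\,dq\,dq$ for $i\ne j$.
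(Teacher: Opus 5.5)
Your proof is correct and follows the same route as the paper. Both write the MMD as the norm of the average of the centered features $\Phi(U_i)-\mu_q$, remove the cross terms by independence and centering, and evaluate $\E\|\Phi(U_1)-\mu_q\|_{\cH_K}^2=\E K(U_1,U_1)-\|\mu_q\|_{\cH_K}^2\le\kappa^2$. Your remarks on Bochner integrability and Fubini make explicit what the paper leaves implicit.
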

\begin{proof}
Put $q_m=m^{-1}\sum_{i=1}^m\delta_{U_i}$. Independence and centering
eliminate the cross terms, so \eqref{eq:mmd} and the variance identity give
\[
 \begin{aligned}
 \E\MMD_K(q,q_m)^2
 &=\E\left\|\frac1m\sum_{i=1}^m(\Phi(U_i)-\mu_q)\right\|_{\cH_K}^2\\
 &=\frac1m\E\|\Phi(U_1)-\mu_q\|_{\cH_K}^2\\
 &=\frac{\E K(U_1,U_1)-\|\mu_q\|_{\cH_K}^2}{m}
 \le\frac{\kappa^2}{m},
 \end{aligned}
\]
where $\E\Phi(U_1)=\mu_q$ and $\|\Phi(x)\|_{\cH_K}^2=K(x,x)\le\kappa^2$.
\end{proof}

\subsection{General error bound and kernel features}\label{app:moment-proof}\label{app:feature-proof}

\begin{proof}[Proof of Theorem~\ref{thm:moment-synthesis}]
The final output is $(\eps,\delta)$-DP by postprocessing of
$\mathcal A(X)$ \cite[Proposition~2.1]{dwork2014}.

Fix $X\in\Omega^n$ and write
\[
 u=\theta(p_X),\qquad \widehat\theta=\mathcal A(X),\qquad
 v_q=\theta(q),
\]
where $v_q$ is the moment vector fitted in Step~1 of
Algorithm~\ref{alg:features}.

The fitting condition in Step~1 of Algorithm~\ref{alg:features} and the
triangle inequality give
\begin{align*}
 \|v_q-u\|_*
 &\le\operatorname{dist}_*(\widehat\theta,\mathcal C_G)
      +\|\widehat\theta-u\|_*+\eopt\\
 &\le\operatorname{dist}_*(u,\mathcal C_G)
      +2\|\widehat\theta-u\|_*+\eopt\le2\|\widehat\theta-u\|_*+\egrid+\eopt.
\end{align*}
For the second inequality, take the infimum over $v\in\mathcal C_G$ in
$\|\widehat\theta-v\|_*\le\|\widehat\theta-u\|_*+\|u-v\|_*$.
The last inequality uses the grid bound \eqref{eq:moment-geometry}.

We now convert this moment error into query error using
\eqref{eq:moment-comparison}. Taking expectations and using the
definition of $\epriv$ in \eqref{eq:moment-risk} gives
\[
 \sup_X\E d_{\mathcal F}(p_X,q)
 \le\etail+C(2\epriv+\egrid+\eopt).
\]
This proves \eqref{eq:moment-oracle}, with the factor two included in $C$.

Next we account for sampling. Given $q$, the algorithm draws $m$
independent records from $q$. By the definition in \eqref{eq:ipm-sampling},
\[
 \E\bigl[d_{\mathcal F}(q,p_Y)\mid q\bigr]\le s_{\mathcal F}(m).
\]
The triangle inequality then gives
\[
 \begin{aligned}
 \E d_{\mathcal F}(p_X,p_Y)
 &\le\E d_{\mathcal F}(p_X,q)+s_{\mathcal F}(m).
 \end{aligned}
\]
Substituting the bound in \eqref{eq:moment-oracle} proves
\eqref{eq:moment-dataset}.
For MMD, Lemma~\ref{lem:sampling-mean} bounds the sampling term by
$\kappa/\sqrt m$.

With the Euclidean norm, we can reduce the factor on the private
estimation error from two to one. For this bound, we choose the weights
so that $v_q$ is within distance $\eopt$ of the point in $\mathcal C_G$
closest to $\widehat\theta$. That is,
\[
 \|v_q-\Pi_{\mathcal C_G}(\widehat\theta)\|_2\le\eopt.
\]
Projection onto this convex set does not increase distances
\cite{bauschke2017}.
Also, the distance from $u$ to the set is at most $\egrid$. Therefore,
\begin{equation}\label{eq:moment-projection-bound}
 \begin{aligned}
 \|v_q-u\|_2
 &\le\eopt+\|\Pi_{\mathcal C_G}(\widehat\theta)-\Pi_{\mathcal C_G}(u)\|_2
       +\operatorname{dist}(u,\mathcal C_G)\\
 &\le\eopt+\|\widehat\theta-u\|_2+\egrid.
 \end{aligned}
\end{equation}

Using this bound in the same expectation calculation replaces
$2\epriv$ by $\epriv$.
\end{proof}

\begin{proof}[Proof of Corollary~\ref{thm:feature-oracle}]

Here $f=\phi$, so $\theta(p)=\int\phi(x)\,dp(x)$.
The coordinates $\phi$ express $P_V\Psi(x)$ in an orthonormal basis
of $V$, after removing constant blocks. Constant coordinates contribute
nothing to the difference of two probability averages. The norm of the
difference in the remaining coordinates is therefore
$\|\theta(p)-\theta(q)\|_2$. For the part outside $V$,
\eqref{eq:feature-errors} gives
\[
 \begin{aligned}
 &\left\|\int(I-P_V)\Psi(x)\,d(p-q)(x)\right\|_H=\left\|\iint(I-P_V)(\Psi(x)-\Psi(y))\,dp(x)dq(y)\right\|_H
 \le\etail.
 \end{aligned}
\]
MMD is the norm of {$\int\Psi\,d(p-q)$, so} the {triangle inequality} gives
\[
 \MMD_K(p,q)\le\|\theta(p)-\theta(q)\|_2+\etail.
\]
This is \eqref{eq:moment-comparison} with $C=1$.

Distance to the nonempty closed convex set $\mathcal C_G$ is a convex
function. Jensen's inequality and \eqref{eq:feature-errors} therefore give
\[
 \begin{aligned}
 \operatorname{dist}(\theta(p),\mathcal C_G)
 &\le\int\operatorname{dist}(\phi(x),\mathcal C_G)\,dp(x)\le\int\min_{g\in G}\|\phi(x)-\phi(g)\|_2\,dp(x)
 \le\egrid,
 \end{aligned}
\]
which proves the grid bound \eqref{eq:moment-geometry}.

If no nonconstant block remains, no noise is needed: use a fixed
distribution on $G$. The comparison above and Lemma~\ref{lem:sampling-mean}
give the error bound with $\Gamma=0$, and the minimum trace is zero.
Otherwise, $\Gamma>0$ and every remaining $\Delta_j>0$.

 The added noise has covariance $\Sigma/\Neff^2$, where block $j$ of
$\Sigma$ is
\[
 (\Gamma\Delta_j/\sqrt{r_j})I_{r_j}
\]
by \eqref{eq:feature-allocation}.
Using the bound $\|\phi_j(x)-\phi_j(y)\|_2\le\Delta_j$ from
\eqref{eq:block-diameter}, we get
\[
 \begin{aligned}
 \|\Sigma^{-1/2}(\phi(x)-\phi(y))\|_2^2
 &=\sum_{j\in\mathcal I}
   \frac{\sqrt{r_j}}{\Gamma\Delta_j}\|\phi_j(x)-\phi_j(y)\|_2^2\le\frac1\Gamma\sum_{j\in\mathcal I}\sqrt{r_j}\Delta_j=1.
 \end{aligned}
\]
 {Thus} \eqref{eq:moment-privacy} {holds}, {so} the Gaussian estimator and Algorithm~\ref{alg:features} {satisfy $(\eps,\delta)$-DP}.
The trace is
\[
 \begin{aligned}
 \tr(\Sigma)
 &=\sum_{j\in\mathcal I}r_j\frac{\Gamma\Delta_j}{\sqrt{r_j}}=\Gamma\sum_{j\in\mathcal I}\sqrt{r_j}\Delta_j=\Gamma^2.
 \end{aligned}
\]
Thus \eqref{eq:gaussian-moment-risk} gives
$\epriv\le\Gamma/\Neff$.
We choose $\theta(q)$ within Euclidean distance $\eopt$ of
$\Pi_{\mathcal C_G}(\widehat\theta)$. Thus
\eqref{eq:moment-projection-bound} applies, with factor one on the
private estimation error. {Taking expectations and using} the comparison and grid bounds gives
\[
 \sup_X\E\MMD_K(p_X,q)
 \le\etail+\egrid+\eopt+\frac{\Gamma}{\Neff}.
\]
Sampling adds at most $\kappa/\sqrt m$ by
Lemma~\ref{lem:sampling-mean}, {which gives} \eqref{eq:feature-oracle}.

Finally, we show that the chosen covariance minimizes the mean squared
norm of the noise among the stated block covariances. Take
\[
 A=\bigoplus_{j\in\mathcal I}a_jI_{r_j},\qquad
 a_j>0,\qquad \sum_{j\in\mathcal I}\Delta_j^2/a_j\le1.
\]
Zero-mean noise with covariance $A/\Neff^2$ has mean squared error
$\tr(A)/\Neff^2$. Since $\tr A=\sum_{j\in\mathcal I}r_ja_j$,
Cauchy--Schwarz gives
\[
 \begin{aligned}
 \Gamma^2
 &=\left(\sum_{j\in\mathcal I}
       \sqrt{r_ja_j}\,\frac{\Delta_j}{\sqrt{a_j}}\right)^2\le\left(\sum_{j\in\mathcal I}r_ja_j\right)
       \left(\sum_{j\in\mathcal I}\frac{\Delta_j^2}{a_j}\right)
 \le\tr A.
 \end{aligned}
\]
 {The chosen} $\Sigma$ satisfies {the} constraint and has trace $\Gamma^2${, so it minimizes the trace among these block covariances}.
Equality requires $a_j=\Gamma\Delta_j/\sqrt{r_j}$, exactly the blocks
chosen in \eqref{eq:feature-allocation}.
\end{proof}

\subsection{Convex optimization for moment fitting}\label{app:moment-fitting}

For Euclidean fitting with positive tolerance, apply Frank--Wolfe
\cite[Algorithm~1 and Theorem~1]{jaggi2013} to
\[
 F(v)=\frac12\|v-z\|_2^2,\qquad v\in\mathcal C_G,
\]
where $z=\widehat\theta$ is the private moment estimate. Put
\[
 v_*=\Pi_{\mathcal C_G}(z),\qquad
 B_0=\max_{g\in G}\|f(g)\|_2.
\]
Start with unit weight on a fixed public candidate $g_{\mathrm{init}}\in G$,
so $v_0=f(g_{\mathrm{init}})$. For $t=0,1,2,\ldots$, use
\[
 \begin{aligned}
 g_t&\in\arg\min_{g\in G}\langle v_t-z,f(g)\rangle,\\
 v_{t+1}&=(1-\gamma_t)v_t+\gamma_t f(g_t),
 \qquad \gamma_t=\frac{2}{t+2}.
 \end{aligned}
\]
Break ties by a fixed public order on $G$ and update the probability
weights by the same convex combination. Each iterate therefore has the form
\[
 v_t=\sum_{g\in G}a_{t,g}f(g),\qquad
 a_{t,g}\ge0,\qquad \sum_{g\in G}a_{t,g}=1.
\]

Put $\delta_t=F(v_t)-F(v_*)$. The linear minimization rule and convexity give
\[
 \langle v_t-z,f(g_t)-v_t\rangle
 \le\langle v_t-z,v_*-v_t\rangle
 \le-\delta_t.
\]
Since $f(g_t),v_t\in\mathcal C_G$, their distance is at most $2B_0$.
Expanding the quadratic objective therefore gives
\[
 \begin{aligned}
 \delta_{t+1}
 &=\delta_t+\gamma_t\langle v_t-z,f(g_t)-v_t\rangle
   +\frac{\gamma_t^2}{2}\|f(g_t)-v_t\|_2^2\\
 &\le(1-\gamma_t)\delta_t+2B_0^2\gamma_t^2.
 \end{aligned}
\]
In particular, $\gamma_0=1$ gives $\delta_1\le2B_0^2$.
Induction yields $\delta_t\le8B_0^2/(t+2)$ for $t\ge1$: if the bound holds
at $t$, then
\[
 \delta_{t+1}
 \le\frac{8B_0^2(t+1)}{(t+2)^2}
 \le\frac{8B_0^2}{t+3}.
\]
The projection inequality
\[
 F(v)-F(v_*)\ge\frac12\|v-v_*\|_2^2
\]
thus gives
\[
 F(v_t)-F(v_*)\le\frac{8B_0^2}{t+2},\qquad
 \|v_t-v_*\|_2\le\frac{4B_0}{\sqrt{t+2}},\qquad t\ge1.
\]
For a prescribed $\eopt>0$, choose an integer $t\ge1$ with
$t+2\ge16B_0^2/\eopt^2$. Then $\|v_t-v_*\|_2\le\eopt$, which meets the
fitting condition in Algorithm~\ref{alg:features}, since
\[
 \|v_t-z\|_2\le\|v_*-z\|_2+\eopt.
\]

\subsection{General moment selections}
\label{app:nystrom}

In this section, we discuss choices of the finite-dimensional feature map
$\phi$ that is used as the moment map $f=\phi$ in
Algorithm~\ref{alg:features}.
For a finite public grid $G$, define its
feature-space covering radius by

\begin{equation}\label{eq:public-feature-cover}
 \rho\coloneqq\sup_x\min_{g\in G}\|\Phi(x)-\Phi(g)\|_{\cH_K}.
\end{equation}

Since $K$ is continuous, the canonical feature map $\Phi$ is continuous.
Hence $\Phi(\Omega)$ is compact and
$G$ can be chosen such that $\rho$ is arbitrarily small. The distance in this definition can be computed
from the kernel:
\[
 \|\Phi(x)-\Phi(g)\|_{\cH_K}^2=K(x,x)+K(g,g)-2K(x,g).
\]

\paragraph{Nystr\"om coordinates.}

Choose a data-independent subset $Z\subseteq G$ of Nystr\"om landmark
points \cite{chatalic2022}.
Let
\[
 V_Z=\operatorname{span}\{\Phi(z):z\in Z\}.
\]
We approximate $\Phi(x)$ by its closest vector in $V_Z$, namely
$P_{V_Z}\Phi(x)$. Using kernel values only, we can compute its coordinates and the approximation error as follows.

Form the Gram matrix and write its reduced eigendecomposition as
\[
 K_Z=(K(z,z'))_{z,z'\in Z}=U\Lambda U^\top,
\]
where $\Lambda$ contains the positive eigenvalues and the columns of $U$
are the corresponding orthonormal eigenvectors.
For a record $x$, use
\[
 k_Z(x)=(K(z,x))_{z\in Z},\qquad
 K_Z^\dagger=U\Lambda^{-1}U^\top,
\]
and take
\begin{equation}\label{eq:nystrom-coordinates}
 \begin{aligned}
 \phi(x)&=\Lambda^{-1/2}U^\top k_Z(x),\\
 \|(I-P_{V_Z})\Phi(x)\|_{\cH_K}^2
 &=K(x,x)-k_Z(x)^\top K_Z^\dagger k_Z(x).
 \end{aligned}
\end{equation}

The vector $\phi(x)$ gives the coordinates of $P_{V_Z}\Phi(x)$ in an
orthonormal basis, with one coordinate for each positive eigenvalue.
The second formula gives the squared norm of the
part left out, $\Phi(x)-P_{V_Z}\Phi(x)$.

\paragraph{Using all grid points.}
As a simple choice, take $Z=G$ and use one coordinate block. Writing
$r=\dim V_G$, we then have
\[
\Delta\le 2\kappa,\qquad
\etail\le 2\rho,\qquad
\egrid\le\rho.
\]
The first and third bounds follow from contraction of orthogonal projection
and $\|\Phi(x)\|_{\cH_K}\le\kappa$.
For the second, $(I-P_{V_G})\Phi(g)=0$ at every grid point, so
\eqref{eq:public-feature-cover} bounds each residual by $\rho$ and each
difference of residuals by $2\rho$.
Thus, with exact fitting on $G$, Corollary~\ref{thm:feature-oracle} gives
\[
\E\MMD_K(p_X,q)
\le 3\rho+\frac{2\kappa\sqrt r}{\Neff},
\]
with an additional $\kappa/\sqrt m$ when $m$ records are sampled from $q$.

\section{General kernel lower bounds}
\label{app:kernel-tools}\label{app:general-lower-proof}

We use an Assouad argument \cite{acharya2021}, encoding independent
binary choices by moving groups of records between selected pairs of
points. Group privacy limits how accurately these
choices can be recovered, while the separation parameter $\lambda$ below
converts errors in recovering the choices into MMD error.
We use the following consequence of differential privacy
\cite[Lemma~11]{ding2026}.
\begin{lemma}[Group privacy]\label{lem:group-privacy}
Let $P$ and $Q$ be the output distributions of an $(\eps,\delta)$-DP
mechanism on two databases differing in $h$ rows, where $0\le h\le n$.
For every measurable event $B$,
\[
 \begin{aligned}
 P(B)&\le e^{h\eps}Q(B)+\delta_h,\qquad
 Q(B)\le e^{h\eps}P(B)+\delta_h, \qquad
 \delta_h\coloneqq\delta\sum_{r=0}^{h-1}e^{r\eps}.
 \end{aligned}
\]
The empty sum is zero. If $\eps>0$, then
$\delta_h=\delta(e^{h\eps}-1)/(e^\eps-1)$.
\end{lemma}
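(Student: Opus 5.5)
We prove the statement by induction on $h$, chaining the one-row guarantee of Definition~\ref{def:dp} along a path of databases. Let $X$ and $X'$ be the two databases, differing in $h$ rows indexed by $i_1,\ldots,i_h$. Define intermediate databases $X^{(0)}=X'$, and let $X^{(r)}$ agree with $X$ on rows $i_1,\ldots,i_r$ and with $X'$ elsewhere, so $X^{(h)}=X$. Consecutive databases $X^{(r-1)}\sim X^{(r)}$ are adjacent, or equal when a row happens to coincide; in that case the DP inequality holds trivially. Write $P_r$ for the output distribution on $X^{(r)}$. The case $h=0$ is immediate, since $P=Q$ and $\delta_0=0$.

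The inductive claim is that $P_r(B)\le e^{r\eps}P_0(B)+\delta_r$ for every measurable $B$. For the step, apply DP to the adjacent pair $(X^{(r+1)},X^{(r)})$ and then the induction hypothesis:
\[
 P_{r+1}(B)\le e^{\eps}P_r(B)+\delta
 \le e^{\eps}\bigl(e^{r\eps}P_0(B)+\delta_r\bigr)+\delta
 =e^{(r+1)\eps}P_0(B)+\delta\Bigl(1+\sum_{s=0}^{r-1}e^{(s+1)\eps}\Bigr).
\]
The bracketed sum equals $\sum_{s=0}^{r}e^{s\eps}$, so the additive term is $\delta_{r+1}$. Taking $r=h$ gives $P(B)\le e^{h\eps}Q(B)+\delta_h$.

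The reverse inequality follows by the same argument with the roles of $X$ and $X'$ swapped, since the adjacency relation is symmetric. When $\eps>0$, the closed form for $\delta_h$ is the geometric series $\sum_{r=0}^{h-1}e^{r\eps}=(e^{h\eps}-1)/(e^\eps-1)$.

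There is no real obstacle here. The only point needing care is that the per-step slack $\delta$ gets multiplied by $e^\eps$ at each later step, and it is exactly this that produces the weighted sum in $\delta_h$.
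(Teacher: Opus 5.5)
Your proposal is correct and matches the paper's proof: connect the databases by a chain of $h$ single-row replacements, iterate the one-step DP inequality so that each earlier slack $\delta$ picks up a factor $e^{\eps}$, and reverse the chain for the other direction. Your explicit induction simply writes out the iteration that the paper states in one line.
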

\begin{proof}
Connect the two databases by a chain of $h$ single-row replacements.
Iterating the defining privacy inequality along this chain multiplies
the probability term by $e^{h\eps}$ and adds
$\delta\sum_{r=0}^{h-1}e^{r\eps}$. Reversing the chain gives the other
inequality; for $h=0$, the distributions coincide.
\end{proof}

\begin{theorem}[General kernel lower bound]\label{thm:general-lower}
Let $K$ be any continuous positive definite kernel on $\Omega$, and let
$0<\eps\le1$, $n\eps\ge2$, and
$0\le\delta\le c_0(e^\eps-1)$, where $c_0=1/[8(e-1)]$.
Choose $1\le k\le n\eps/2$ pairs of points $(a_i,b_i)\in\Omega^2$ and define
\[
 \lambda\coloneqq\inf_{\|u\|_2=1}
 \left\|\sum_{i=1}^k u_i\bigl(\Phi(b_i)-\Phi(a_i)\bigr)\right\|_{\cH_K}.
\]
With the universal constant $c_A=7/[32(e+1)]$, we have
\begin{equation}\label{eq:general-lower}
 \mathcal R_K(n,\eps,\delta)\ge c_A\frac{\lambda\sqrt k}{n\eps}.
\end{equation}
In particular,
\[
 \mathcal R_K(n,\eps,\delta)\ge \frac{c_A}{n\eps}\sup_{x,y}\|\Phi(x)-\Phi(y)\|_{\cH_K}.
\]
\end{theorem}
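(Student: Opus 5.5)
We use an Assouad-type argument with group privacy (Lemma~\ref{lem:group-privacy}). Put $h\coloneqq\lfloor n/k\rfloor$ and reserve $k$ disjoint groups of $h$ rows each; since $k\le n\eps/2$, we have $h\ge 2/\eps$, so $h\eps\ge 2$. Fix a filler point $x_0$ for the remaining $n-kh$ rows. For each $\tau\in\{-1,+1\}^k$, let $X^\tau$ place its $i$th group at $a_i$ if $\tau_i=-1$ and at $b_i$ if $\tau_i=+1$. This gives
\[
 \mu_{p_{X^\tau}}=\mathrm{const}+\frac hn\sum_{i=1}^k\frac{1+\tau_i}{2}\bigl(\Phi(b_i)-\Phi(a_i)\bigr),
\]
and flipping coordinate $i$ changes exactly $h$ rows.

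The difficulty is the privacy scale. If $h\eps$ is large, then group privacy (with $e^{h\eps}$) makes the neighbouring hypotheses almost perfectly distinguishable. So the plan is to move only $h'\coloneqq\lceil 1/\eps\rceil\le h$ rows of each group rather than all $h$. Flipping one sign then changes $h'$ rows, which gives $e^{h'\eps}\le e^2$ and $\delta_{h'}\le\delta(e^2-1)/(e^\eps-1)\le c_0(e^2-1)$, a small constant. By Lemma~\ref{lem:group-privacy}, any test distinguishing $\tau$ from $\tau^{(i)}$ has total error at least some constant $c$: for every event $B$, $P(B)+Q(B^c)\ge (1-\delta_{h'})/(1+e^{2})$.

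Next we need a separation with respect to the sign-recovery error. Given an output $\mathcal M(X)$, form the estimate $\widehat\tau$ by minimising $\|\mu_{\mathcal M(X)}-\mu_{p_{X^\tau}}\|_{\cH_K}$ over $\tau$. Writing $M_\tau$ for the mean embeddings, the definition of $\lambda$ gives
\[
 \|M_\tau-M_{\tau'}\|_{\cH_K}\ge\frac{h'}{n}\,\lambda\,\|(\tau-\tau')/2\|_2=\frac{h'\lambda}{n}\sqrt{d_H(\tau,\tau')}.
\]
The triangle inequality then yields
\[
 \MMD_K(p_{X^\tau},\mathcal M)\ge\tfrac12\|M_\tau-M_{\widehat\tau}\|\ge\frac{h'\lambda}{2n}\sqrt{d_H(\tau,\widehat\tau)}.
\]

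This is where the main obstacle sits. The square-root Hamming loss is not additive, so the standard Assouad argument does not apply directly. I would average over uniform $\tau$ and use $\sqrt{d_H}\ge d_H/\sqrt k$, which holds since $d_H\le k$. Assouad's lemma gives $\E d_H\ge k\cdot c/2$, hence
\[
 \sup_\tau\E\MMD\ge\frac{h'\lambda}{2n}\cdot\frac{ck}{2\sqrt k}\gtrsim\frac{\lambda\sqrt k}{n\eps},
\]
using $h'\ge1/\eps$. Tracking the constants, $c=(1-\delta_{h'})/(1+e^{h'\eps})$ with $h'\eps\le 1+\eps\le2$ up to rounding, should reproduce $c_A=7/[32(e+1)]$. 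The factor $7/8$ suggests that the authors rely on $\delta_{h'}\le1/8$ together with a bound $e^{h'\eps}\le e$, which would require $h'=\lfloor1/\eps\rfloor$. In that case I would take $h'=\lfloor 1/\eps\rfloor\ge 1/(2\eps)$ to obtain exactly these constants.

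Finally, the particular case follows from $k=1$ with $(a_1,b_1)=(x,y)$, where $\lambda=\|\Phi(y)-\Phi(x)\|$; taking the supremum over $x,y$ gives the stated bound.
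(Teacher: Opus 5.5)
Your argument is essentially the paper's proof. Once you settle on moving $h'=\lfloor 1/\eps\rfloor$ rows per pair (so $kh'\le n/2$, $h'\eps\le 1$, $\delta_{h'}\le c_0(e-1)=1/8$), the nearest-mean decoding with the $\lambda$-separation, the bound $\sqrt{d_H}\ge d_H/\sqrt k$, and per-coordinate group privacy under a uniform prior are exactly the paper's steps; the detour through $\lfloor n/k\rfloor$-row groups is unnecessary.

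The only slip is in the constant. Adding $1-P(B)\le e^{h'\eps}Q(B^c)+\delta_{h'}$ and $1-Q(B^c)\le e^{h'\eps}P(B)+\delta_{h'}$ gives $P(B)+Q(B^c)\ge 2(1-\delta_{h'})/(1+e^{h'\eps})$, twice the bound you wrote. With your weaker bound and $h'\ge 1/(2\eps)$, the final constant comes out as $c_A/2$; the factor two is needed to obtain exactly $c_A=7/[32(e+1)]$.
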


\begin{proof}
Put $h=\lfloor1/\eps\rfloor$.
Then $(2\eps)^{-1}\le h\le\eps^{-1}$ and $kh\le n/2$.

For each bit vector $\theta\in\{0,1\}^k$, construct a database
$X_\theta$ by partitioning $kh$ row positions into $k$ disjoint blocks of
size $h$. In block $i$, put $h$ copies of $a_i$ if $\theta_i=0$ and
$h$ copies of $b_i$ if $\theta_i=1$. Fill the remaining $n-kh$ rows with
$a_1$. Its kernel mean is
\[
 \mu_\theta=\mu_0+\frac hn\sum_{i=1}^k
 \theta_i\bigl(\Phi(b_i)-\Phi(a_i)\bigr),
\]
where
$\mu_0$ is the mean when all bits are zero.

Any private probability-measure output $\widehat q$ yields a private
kernel mean $\widehat\mu=\mu_{\widehat q}$ by postprocessing, and
\[
 \MMD_K(p_{X_\theta},\widehat q)
 =\|\widehat\mu-\mu_\theta\|_{\cH_K}.
\]
It therefore suffices to prove the bound for an arbitrary private
kernel-mean estimator.

Let $\widehat\mu$ be the output of any $(\eps,\delta)$-DP estimator of the kernel mean of $X_\theta$. Given the output $\widehat\mu$, choose the bit vector $\widehat\theta$ whose mean $\mu_{\widehat\theta}$ is closest to $\widehat\mu$ in $\cH_K$. Break ties by a fixed ordering of the $2^k$ bit vectors.
Let $B_0$ count the bits where $\widehat\theta$ differs from $\theta$. Then $\|\widehat\theta-\theta\|_2=\sqrt{B_0}$. The nearest-mean choice, the triangle inequality, and the definition of $\lambda$ give
\[
 2\|\widehat\mu-\mu_\theta\|_{\cH_K}
 \ge\|\mu_{\widehat\theta}-\mu_\theta\|_{\cH_K}
 \ge\frac{h\lambda}{n}\sqrt{B_0}.
\]

Choose the bits of $\theta$ independently from $\mathrm{Ber}(1/2)$.
Condition on all bits except $\theta_i$, and let $P$ and $Q$ be the
distributions of $\widehat\mu$ when $\theta_i=0$ and $\theta_i=1$,
respectively. Write
\[
 e_0=P(\widehat\theta_i=1),\qquad
 e_1=Q(\widehat\theta_i=0)
\]
for the two conditional error probabilities. The corresponding databases
differ in at most $h$ rows. Lemma~\ref{lem:group-privacy}, applied to the two possible
guesses, gives
\[
 1-e_0\le e^{h\eps}e_1+\delta_h,\qquad
 1-e_1\le e^{h\eps}e_0+\delta_h.
\]
Adding these inequalities bounds the conditional error under the uniform
prior on $\theta_i$:
\[
 \frac{e_0+e_1}{2}
 \ge\frac{1-\delta_h}{1+e^{h\eps}}
 \ge\frac{7}{8(e+1)}\eqqcolon\eta,
\]
where $h\eps\le1$ and $\delta_h\le c_0(e-1)=1/8$.
Average over the other
bits, then sum over $i$, to get $\E B_0\ge\eta k$. The expectation includes
both the random bits and the estimator's randomness. Since $B_0\le k$,
$\E\sqrt{B_0}\ge\E B_0/\sqrt k$.
Taking expectations in the displayed inequality gives a kernel-mean error
\[
 \begin{aligned}
 \E\|\widehat\mu-\mu_\theta\|_{\cH_K}
 &\ge \frac{h\lambda}{2n}\E\sqrt{B_0}\ge \frac{\eta h\lambda\sqrt k}{2n}
 \ge \frac{c_A\lambda\sqrt k}{n\eps},
 \end{aligned}
\]
where
$c_A=7/[32(e+1)]$. At least one database $X_\theta$ therefore has
expected error at least this large.
For $k=1$,
\[
 \lambda=\|\Phi(b_1)-\Phi(a_1)\|_{\cH_K}.
\]
Choosing a pair that maximizes this distance gives the lower bound for $\|\widehat\mu-\mu_\theta\|_{\cH_K}$ and hence $\mathcal R_K(n,\eps,\delta)$.
\end{proof}

\section{Gaussian and polynomial kernels}

\label{app:smooth-kernels}

\label{app:smooth}

For both kernels, choose $\Sigma$ as in \eqref{eq:feature-allocation}
and add Gaussian noise with covariance $\Sigma/\Neff^2$ to the feature
averages to obtain $\widehat\theta$. Use the Euclidean fitting rule
from Corollary~\ref{thm:feature-oracle}: choose a probability measure
$q$ on $G$ satisfying
\begin{equation}\label{eq:smooth-projection-fit}
 v_*\coloneqq\Pi_{\mathcal C_G}(\widehat\theta),\qquad
 \|\theta(q)-v_*\|_2\le\eopt.
\end{equation}
The constructions below specify the features, grid, and fitting tolerance.

\subsection{\texorpdfstring{{Gaussian features}}{Gaussian features}}

Use the Gaussian coordinates $\Psi_\alpha$ from
Section~\ref{sec:feature-specialization}. Expanding
$\exp(x^\top y/\sigma^2)$ verifies
$\langle\Psi(x),\Psi(y)\rangle=K_\sigma(x,y)$ and $\|\Psi(x)\|_2=1$
\cite{cotter2011}.
Group the coordinates with $|\alpha|=k$ in one block, including $k=0$.
Write $\Psi_{\ge J}(x)$ for the coordinates in blocks $k\ge J$.

\begin{lemma}[{Bounds for Gaussian features}]

\label{lem:gaussian-blocks-app}

Put $a=d/\sigma^2$.
The degree-$k$ block has dimension $r_k=\binom{d+k-1}{k}$.
For any two records, their degree-$k$ coordinate vectors differ by at most
\[
 \Delta_k=2\max_{0\le u\le a}(e^{-u}u^k/k!)^{1/2}.
\]
The following sum is finite, and the omitted coordinates satisfy

\begin{align}
 C_{d,\sigma}&:=\sum_{k\ge0}\sqrt{r_k}\Delta_k<\infty,
 \label{eq:gaussian-constant-app}\\
 \sup_{x\in\Omega}\|\Psi_{\ge J}(x)\|_2&\le(ea/J)^{J/2},\qquad J\ge a.
 \label{eq:gaussian-tail-app}
\end{align}

\end{lemma}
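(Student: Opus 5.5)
The proof works directly from the explicit coordinates $\Psi_\alpha(x)=e^{-\|x\|_2^2/(2\sigma^2)}x^\alpha/(\sigma^{|\alpha|}\sqrt{\alpha!})$ and the multinomial theorem.

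\emph{Block dimension.} The number of multi-indices $\alpha\in\mathbb N_0^d$ with $|\alpha|=k$ is $\binom{d+k-1}{k}$ by stars and bars.

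\emph{Block norm.} By the multinomial theorem,
\[
 \sum_{|\alpha|=k}\frac{x^{2\alpha}}{\alpha!}=\frac{\|x\|_2^{2k}}{k!}.
\]
Hence the degree-$k$ block $\Psi_k(x)$ satisfies
\[
 \|\Psi_k(x)\|_2^2=e^{-u}u^k/k!,\qquad u\coloneqq\|x\|_2^2/\sigma^2 .
\]
For $x\in\Omega$ we have $\|x\|_2^2\le d$, so $u\in[0,a]$. The triangle inequality then gives
\[
 \|\Psi_k(x)-\Psi_k(y)\|_2\le\|\Psi_k(x)\|_2+\|\Psi_k(y)\|_2\le 2\max_{0\le u\le a}(e^{-u}u^k/k!)^{1/2}=\Delta_k .
\]

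\emph{Finiteness of $C_{d,\sigma}$.} For $k\ge a$, the function $e^{-u}u^k$ is increasing on $[0,a]$, since its maximizer is $u=k$. Thus for $k\ge a$,
\[
 \Delta_k\le 2\,(a^k/k!)^{1/2}.
\]
Also $r_k\le (k+1)^{d-1}$, a polynomial in $k$. Since $k!\ge(k/e)^k$, we get $(a^k/k!)^{1/2}\le (ea/k)^{k/2}$, which decays superexponentially. A polynomial times a superexponentially decaying term is summable. The finitely many terms with $k<a$ are each finite, so $C_{d,\sigma}<\infty$.

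\emph{Tail bound.} Summing the block norms gives
\[
 \|\Psi_{\ge J}(x)\|_2^2=e^{-u}\sum_{k\ge J}\frac{u^k}{k!}.
\]
This is the probability that a Poisson($u$) variable is at least $J$. Apply the Chernoff bound: for $J\ge u$,
\[
 \Prob(\mathrm{Poi}(u)\ge J)\le e^{-u}(eu/J)^J\le (eu/J)^J .
\]
The bound is increasing in $u$, so with $u\le a\le J$ it is at most $(ea/J)^J$. Taking square roots gives $(ea/J)^{J/2}$.

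An alternative to Chernoff is to bound the series directly. Factor out $u^J/J!$ and compare the rest with a geometric series of ratio $u/(J+1)<1$. This gives $e^{-u}(u^J/J!)\,(J+1)/(J+1-u)$, which is less clean at $J=a$. I would therefore use Chernoff.

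The only step needing care is the tail bound; the monotonicity and $J\ge a$ hypotheses make it go through.
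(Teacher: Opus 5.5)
Your proof is correct. The block-dimension, block-norm and $\Delta_k$ steps coincide with the paper's, but the two quantitative steps use different tools.

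For finiteness of $C_{d,\sigma}$, you argue qualitatively: $r_k\le(k+1)^{d-1}$ is beaten by the superexponential decay of $(ea/k)^{k/2}$. The paper instead applies Cauchy--Schwarz with the identities $\sum_k r_k2^{-k}=2^d$ and $\sum_k(2a)^k/k!=e^{2a}$, which gives the explicit bound $C_{d,\sigma}\le 2^{1+d/2}e^a$. Also, $\Delta_k\le 2(a^k/k!)^{1/2}$ holds for every $k$ simply because $e^{-u}\le1$ and $u^k\le a^k$ on $[0,a]$, so your monotonicity argument for $k\ge a$ is not needed.

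For the tail, you use the Poisson Chernoff bound, which needs $J\ge u$ and so relies on the hypothesis $J\ge a$. The paper instead uses $(J+\ell)!\ge J!\,\ell!$ to get $e^{-u}\sum_{\ell\ge0}u^{J+\ell}/(J+\ell)!\le (u^J/J!)\,e^{-u}e^{u}=u^J/J!\le(ea/J)^J$. That argument is self-contained and valid for every integer $J\ge1$, so the condition $J\ge a$ is not actually used there. It is also the clean version of the series comparison you set aside: comparing with the exponential series rather than a geometric one cancels $e^{-u}$ exactly and avoids the factor $(J+1)/(J+1-u)$.

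Downstream, the paper uses only the finiteness of $C_{d,\sigma}$ and the tail bound with $J\ge a$, so your route is sufficient. The paper's route additionally gives an explicit constant and a tail estimate without the restriction on $J$.
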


\begin{proof}

The multinomial theorem gives
\[
 \sum_{|\alpha|=k}\Psi_\alpha(x)^2=p_u(k),\qquad
 p_u(k)=e^{-u}u^k/k!,
\]
where $u=\|x\|_2^2/\sigma^2\le a$.
Thus the block vector has norm $\sqrt{p_u(k)}$.
The distance between two block vectors is at most the sum of their norms,
which gives the factor $2$ in $\Delta_k$.
Since $\Delta_k\le2(a^k/k!)^{1/2}$, Cauchy--Schwarz gives
\[
 \begin{aligned}
 C_{d,\sigma}
 &\le2\sum_{k\ge0}\sqrt{r_ka^k/k!}\le2\left(\sum_{k\ge0}r_k2^{-k}\right)^{1/2}
       \left(\sum_{k\ge0}\frac{(2a)^k}{k!}\right)^{1/2}
 =2^{1+d/2}e^a<\infty.
 \end{aligned}
\]
Here
\[
 \sum_{k\ge0}r_k2^{-k}
=\sum_{\alpha\in\mathbb N_0^d}2^{-|\alpha|}
=\left(\sum_{j\ge0}2^{-j}\right)^d=2^d.
\]

For every integer $J\ge1$, the inequality $(J+\ell)!\ge J!\,\ell!$ gives
\[
 \begin{aligned}
 \|\Psi_{\ge J}(x)\|_2^2
 &=e^{-u}\sum_{\ell\ge0}\frac{u^{J+\ell}}{(J+\ell)!}\le\frac{u^J}{J!}\,e^{-u}
       \sum_{\ell\ge0}\frac{u^\ell}{\ell!}
 =\frac{u^J}{J!}
 \le\left(\frac{ea}{J}\right)^J.
 \end{aligned}
\]
The last inequality uses $u\le a$ and $J!\ge(J/e)^J$.
Taking square roots proves \eqref{eq:gaussian-tail-app}.
\end{proof}

\begin{proof}[Proof of the Gaussian upper bound in Corollary~\ref{cor:smooth-optimal}]

For $\Neff\ge2$, choose an integer $J\ge a$, using only public
parameters, so that the omitted coordinates have norm at most
$\Neff^{-1}$. By \eqref{eq:gaussian-tail-app}, it is enough to require

\begin{equation}\label{eq:gaussian-cutoff}
 J\log(J/(ea))\ge2\log\Neff.
\end{equation}

For example,
\[
 J=\lceil\max\{e^2a,2\log\Neff\}\rceil
\]
satisfies this inequality.
Keep the coordinates with $|\alpha|<J$.
Each omitted part has norm at most $\Neff^{-1}$, so two omitted parts
differ by at most $2/\Neff$. Hence $\etail\le2/\Neff$.
The blocks we keep satisfy $\Gamma\le C_{d,\sigma}$.

To control the error from moving records to grid points, choose the output grid

\[
 L=\max\{1,\lceil\sqrt d\,\Neff/\sigma\rceil\},\qquad
 G=\{-1+2i/L:0\le i\le L\}^d,
\]
where rounding each coordinate to the nearest grid value moves a record by
at most $\sqrt d/L\le\sigma/\Neff$ in Euclidean distance.
Since
\[
 \begin{aligned}
 \|\Psi(x)-\Psi(y)\|_2^2
 &=2(1-e^{-\|x-y\|_2^2/(2\sigma^2)})\le\|x-y\|_2^2/\sigma^2,
 \end{aligned}
\]
the full feature vector changes by at most $\Neff^{-1}$.
Keeping only the selected coordinates cannot increase this distance,
so $\egrid\le\Neff^{-1}$.

Apply \eqref{eq:smooth-projection-fit} with $\eopt\le\Neff^{-1}$.
The omitted coordinates contribute at most $2/\Neff$.
Moving records to $G$ contributes at most $1/\Neff$.
The distance $\|\theta(q)-v_*\|_2$ contributes at most $1/\Neff$.
The expected norm of the Gaussian noise is at most $C_{d,\sigma}/\Neff$.
Since $\|\Psi(x)\|_2=1$, we have $\kappa=1$, and sampling adds at most
$m^{-1/2}$. Thus the expected MMD is at most
\[
 (C_{d,\sigma}+4)/\Neff+m^{-1/2}.
\]
For $m\ge\lceil\Neff^2\rceil$, Corollary~\ref{cor:smooth-optimal} holds
with $C_{d,K}=C_{d,\sigma}+5$.
\end{proof}

\subsection{\texorpdfstring{{Polynomial features}}{Polynomial features}}

\begin{proof}[Proof of the polynomial upper bound in Corollary~\ref{cor:smooth-optimal}]

For $K_{p,c}(x,y)=(c+x^\top y)^p$, with $c\ge0$ and integer $p\ge1$,
binomial and multinomial expansion give the feature coordinates
\[
 \Psi_\alpha(x)=
 \left[\binom p{|\alpha|}c^{p-|\alpha|}
       \frac{|\alpha|!}{\alpha!}\right]^{1/2}x^\alpha,
 \qquad |\alpha|\le p.
\]
Let $\phi(x)$ contain these coordinates except those that are identically zero or constant.
The moment vector is $\theta(p_X)=n^{-1}\sum_{i=1}^n\phi(x_i)$.
Constant coordinates have the same average under every probability measure,
so omitting them leaves all differences of feature means unchanged.
Thus $\etail=0$.

Group the coordinates by their total degree $k=|\alpha|$.
Each nonzero degree-$k$ group has dimension and squared norm
\[
 r_k=\binom{d+k-1}{k},\qquad
 \sum_{|\alpha|=k}\Psi_\alpha(x)^2
 =\binom pk c^{p-k}\|x\|_2^{2k}.
\]
The distance between two group vectors is at most the sum of their norms.
Since $\|x\|_2^2\le d$, the degree-$k$ coordinate vectors of any two
records differ by at most $2[\binom pk c^{p-k}d^k]^{1/2}$.
Hence $\Gamma\le C_{p,c,d}$, where

\begin{equation}\label{eq:polynomial-constant-app}
 C_{p,c,d}:=2\sum_{k=1}^p
 [\binom{d+k-1}{k}\binom pk c^{p-k}d^k]^{1/2}.
\end{equation}

Only degree $p$ remains when $c=0$.
To choose the grid spacing, put $u_x=(\sqrt c,x)$. The tensor features
$u_x^{\otimes p}$ have the same kernel as $\Psi$ because
\[
 \langle u_x^{\otimes p},u_y^{\otimes p}\rangle
 =(c+x^\top y)^p=K_{p,c}(x,y).
\]
Their pairwise distances therefore equal those of $\Psi$, and removing
constant or zero coordinates does not change these distances. The
telescoping identity
\[
 u_x^{\otimes p}-u_y^{\otimes p}
 =\sum_{a=0}^{p-1}
 u_y^{\otimes a}\otimes(u_x-u_y)\otimes u_x^{\otimes(p-1-a)}
\]
and $\|u_x\|_2,\|u_y\|_2\le\sqrt{c+d}$ give
\[
 \begin{aligned}
 \|\phi(x)-\phi(y)\|_2
 &=\|u_x^{\otimes p}-u_y^{\otimes p}\|_2\\
 &\le L_{p,c,d}\|x-y\|_2,
 \qquad L_{p,c,d}\coloneqq p(c+d)^{(p-1)/2}.
 \end{aligned}
\]

Choose a finite public grid $G$ so that every record $x$ is within
Euclidean distance $(L_{p,c,d}\Neff)^{-1}$ of some $g\in G$.
Then $\|\phi(x)-\phi(g)\|_2\le\Neff^{-1}$, so $\egrid\le\Neff^{-1}$.
Apply \eqref{eq:smooth-projection-fit} with $\eopt\le\Neff^{-1}$.
Corollary~\ref{thm:feature-oracle} gives
\[
 \E\MMD_K(p_X,p_Y)
 \le\frac{C_{p,c,d}+2}{\Neff}+\frac{(c+d)^{p/2}}{\sqrt m}.
\]
Here $\kappa=(c+d)^{p/2}$ gives the sampling term.
For $m\ge\lceil\Neff^2\rceil$, the bound for $K_{p,c}$ in
Corollary~\ref{cor:smooth-optimal} holds with
\[
 C_{d,K}=C_{p,c,d}+2+(c+d)^{p/2}.
\]
\end{proof}

\begin{proof}[Proof of Corollary~\ref{cor:smooth-optimal}]

The preceding constructions give the upper bounds. For the lower bound,
use $k=1$ and the records $0$ and
$e_1=(1,0,\ldots,0)$ in Theorem~\ref{thm:general-lower}.
Their squared feature distance is $2(1-e^{-1/(2\sigma^2)})>0$ for
the Gaussian kernel and $(c+1)^p-c^p>0$ for the polynomial kernel.
Both are positive constants for fixed kernel parameters, so the theorem
gives a lower bound of $c_K/(n\eps)$ with $c_K>0$.
\end{proof}

\begin{remark}[{Using bounds for smooth functions}]
\label{app:smooth-query-reduction}

By the derivative reproducing property \cite{zhou2008derivative},
functions in the Gaussian and polynomial RKHS unit balls and their
derivatives through any fixed integer order $k>d$ are uniformly bounded
by a constant depending only on $d$, $k$, and the kernel parameters.
After rescaling, \cite[Theorem~2]{ding2026} gives an
$O(\Neff^{-1})$ bound on the expected MMD.

For Mat\'ern kernels, derivatives through order $k$ are uniformly bounded
on the RKHS unit ball for integers $1\le k<\nu$, so the same theorem
applies, but does not recover the
transition at $\nu=d/2$.
When $\nu\ge d/2$, \cite[Theorem~4]{ding2026}
gives the error rate $O(\Neff^{-1})$, matching lower-bound.
When $0<\nu<d/2$, it gives the matching exponent
$1/2+\nu/d$, but it requires $s=\nu+d/2$ is an integer.
Our wavelet proof covers every $\nu>0$, including noninteger $s$.
\end{remark}

\section[Matern, Laplace, and neural tangent kernels]{Mat\'ern, Laplace, and neural tangent kernels}

\label{app:matern-complete}

We prove Theorem~\ref{thm:matern-rates} by comparing Mat\'ern MMD with
wavelet coefficient errors, denoising the private estimates, and choosing
the resolution and noise allocation at each scale.
An explicit construction of record pairs gives the minimax lower bound.
We then specialize to the Laplace kernel and transfer its bounds to the
neural tangent kernel through RKHS norm equivalence.

\subsection{\texorpdfstring{{Wavelet coefficients and MMD}}{Wavelet coefficients and MMD}}

\label{mat:app-wavelets}

\label{app:wavelet-proofs}

Fix $K=K_{\nu,\ell}$ on $\Omega=[-1,1]^d$ and put $s=\nu+d/2$.
For $\alpha=\sqrt{2\nu}/\ell$, use the Fourier convention
\[
 \widehat f(\omega)\coloneqq\int_{\R^d}e^{-i\omega^\top x}f(x)\,dx
\]
with inverse factor $(2\pi)^{-d}$.
With this convention, the Mat\'ern spectral density
\cite[equation~(4.15)]{rasmussen2006} is
\begin{equation}\label{eq:matern-spectrum}
 \widehat k_{\nu,\ell}(\omega)=
 \frac{(4\pi)^{d/2}\Gamma(\nu+d/2)\alpha^{2\nu}}{\Gamma(\nu)}
 \left(\alpha^2+\norm{\omega}_2^2\right)^{-(\nu+d/2)}.
\end{equation}

Recall the public wavelet basis $\mathcal W$, wavelet functions $\psi_{j,k}$,
and coefficient vectors $\beta_j$ defined in
Section~\ref{sec:wavelets}.
We use inhomogeneous Sobolev norms with the Fourier convention above.
Constants may depend on $d,\nu,\ell,\mathcal W$, but not on
$J,n,\eps,\delta$.

\begin{lemma}[{MMD from wavelet averages}]

\label{mat:wavelet-norm}

For every finite signed measure $\rho$ supported on $\Omega$,

\begin{equation}\label{mat:eq-wavelet-norm}
 \|\rho\|_K^2\coloneqq\iint K(x,y)\,d\rho(x)\,d\rho(y)
 \asymp_{d,\nu,\ell,\mathcal W}\sum_{j\ge0}2^{-2js}\|\beta_j(\rho)\|_2^2.
\end{equation}

For $\rho=p-q$, the left side is $\MMD_K(p,q)^2$.

\end{lemma}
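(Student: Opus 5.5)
The strategy is to pass through Fourier space and Sobolev spaces. Since $K(x,y)=k_{\nu,\ell}(x-y)$ with $k_{\nu,\ell}$ integrable and positive definite, Bochner's theorem with the convention above gives, for a finite signed measure $\rho$ supported on the compact set $\Omega$,
\[
 \|\rho\|_K^2=(2\pi)^{-d}\int_{\R^d}\widehat k_{\nu,\ell}(\omega)\,|\widehat\rho(\omega)|^2\,d\omega
 \asymp_{d,\nu,\ell}\int(1+\norm{\omega}_2^2)^{-s}|\widehat\rho(\omega)|^2\,d\omega
 =\|\rho\|_{H^{-s}(\R^d)}^2 ,
\]
using \eqref{eq:matern-spectrum} and $\alpha^2+\norm{\omega}^2\asymp_\alpha 1+\norm{\omega}^2$. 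Here $\widehat\rho$ is a bounded continuous function. Because $s=\nu+d/2>d/2$, the weight is integrable, so both sides are finite. This reduces the lemma to a wavelet characterization of the negative-order Sobolev norm of a compactly supported measure.

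Next I would invoke the wavelet characterization \cite[Theorem~1.20]{triebel2008}. For a basis with regularity and vanishing moments exceeding $s+d+2$, we have, for every tempered distribution $g$ and with coefficients taken against the $L^2$-normalized wavelets,
\[
 \|g\|_{H^{t}}^2\asymp\sum_{j\ge0}2^{2jt}\sum_{k}|\ip{g}{\psi_{j,k}}|^2 ,
 \qquad |t|<\text{regularity}.
\]
Applying this at $t=-s$ to $g=\rho$ gives $\|\rho\|_{H^{-s}}^2\asymp\sum_j2^{-2js}\sum_{k\in\mathbb Z^d}|\int\psi_{j,k}\,d\rho|^2$. The pairing $\int\psi_{j,k}\,d\rho$ is well defined because the wavelets are continuous (the regularity exceeds zero). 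The basis contains the level-zero scaling functions, matching the convention in Section~\ref{sec:wavelets}.

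The remaining step is to restrict the sum to the retained indices. Any $\psi_{j,k}$ whose support misses $\Omega$ has $\int\psi_{j,k}\,d\rho=0$, because $\rho$ is supported on $\Omega$. So the full sum equals $\sum_j2^{-2js}\|\beta_j(\rho)\|_2^2$ exactly. Finally, for $\rho=p-q$, the identity \eqref{eq:mmd} gives $\|\rho\|_K^2=\MMD_K(p,q)^2$.

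The main obstacle is justifying the Triebel characterization for measures rather than $L^2$ functions, since $\rho$ lies in $H^{-s}$ only because $s>d/2$. Triebel's theorem is stated for $B^t_{p,q}$ and $F^t_{p,q}$ on all of $\R^d$, with $H^t=B^t_{2,2}$, and it allows negative $t$ provided the vanishing moments and regularity exceed $|t|$ plus dimensional terms. The hypothesis of more than $s+d+2$ of each covers this. If needed, I would verify the characterization by approximating $\rho$ with mollifications $\rho*\varphi_\eta$. Both sides of the equivalence are continuous in $H^{-s}$ norm, and mollification converges to $\rho$ in $H^{-s}$ by dominated convergence in Fourier space. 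Each individual coefficient converges because $\psi_{j,k}$ is continuous, so the equivalence passes to the limit by Fatou's lemma together with the uniform constants.
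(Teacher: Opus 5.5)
Your proposal is correct and follows the paper's proof essentially step for step. Both arguments use Fourier inversion with the Mat\'ern spectral density, compare $\alpha^2+\|\omega\|_2^2$ with $1+\|\omega\|_2^2$ to reduce to the $H^{-s}$ norm, apply Triebel's wavelet characterization at order $-s$, and note that wavelets supported away from $\Omega$ have zero coefficients.

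Your closing mollification argument is an optional extra that the paper skips, since Triebel's theorem already applies to tempered distributions such as compactly supported measures. If you keep it, first prove the upper wavelet bound for measures via Fatou. That bound is what makes the wavelet side continuous in $H^{-s}$ and lets the lower bound pass to the limit.
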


\begin{proof}

Denote the constant in \eqref{eq:matern-spectrum} by
\[
 A_{d,\nu,\ell}=(4\pi)^{d/2}\Gamma(s)\alpha^{2\nu}/\Gamma(\nu).
\]
Since $s>d/2$ and $|\widehat\rho|\le|\rho|(\Omega)$,
the integrals below are finite. Fourier inversion gives
\[
 \|\rho\|_K^2=(2\pi)^{-d}\int|\widehat\rho(\omega)|^2
 \widehat k_{\nu,\ell}(\omega)\,d\omega.
\]
Comparing $\alpha^2+\|\omega\|_2^2$ with $1+\|\omega\|_2^2$ gives

\begin{equation}\label{eq:matern-explicit-spectral-comparison}
 \frac{A_{d,\nu,\ell}}{\max\{1,\alpha^2\}^s}\|\rho\|_{H^{-s}}^2
 \le\|\rho\|_K^2
 \le\frac{A_{d,\nu,\ell}}{\min\{1,\alpha^2\}^s}\|\rho\|_{H^{-s}}^2.
\end{equation}

The wavelet formula for $H^{-s}$ norms
\cite[Theorem~1.20]{triebel2008} bounds the squared norm above and below
by constant multiples of the weighted sum in \eqref{mat:eq-wavelet-norm}.
Combining this comparison with the spectral bounds proves the claim.
The constants come from the wavelet norm comparison and the displayed
spectral factors. Wavelets supported away from $\Omega$ have zero
coefficients. Adding the scaling functions at level zero changes only
a fixed constant.
\end{proof}

\begin{lemma}[Wavelet coefficient bounds]

\label{mat:wavelet-geometry}

There are positive $C_{\mathrm{dim}},C_R,C_\Delta$, depending only on
$d$ and $\mathcal W$, such that, for every
probability measure $p$, records $x,y\in\Omega$, and $j,J\ge0$,

\begin{align}
 d_j&\le C_{\mathrm{dim}}2^{jd},
 &\|\beta_j(p)\|_1&\le R_j\coloneqq C_R2^{jd/2},\label{mat:eq-size-l1}\\
 \|\beta_j(\delta_x-\delta_y)\|_2
 &\le\Delta_j\coloneqq C_\Delta2^{jd/2},\label{mat:eq-sensitivity}\\
 \sum_{j>J}2^{-2js}\|\beta_j(p-q)\|_2^2
 &\le\frac{4C_R^2}{2^{2\nu}-1}2^{-2\nu J}\qquad(q\in\mathcal P(\Omega)).\label{mat:eq-tail}
\end{align}

\end{lemma}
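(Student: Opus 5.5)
The plan is to read all four bounds off the support and size properties of the compactly supported wavelets. Fix $j$. Every $\psi_{j,k}$ in $\mathcal W$ has the form $2^{jd/2}\psi^{(e)}(2^j\cdot-k)$ for finitely many mother functions $\psi^{(e)}$ (scaling functions at level zero). Each mother function is bounded and supported in a cube of side $L$ depending only on $\mathcal W$.

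\emph{Counting.} A level-$j$ function meets $\Omega=[-1,1]^d$ only if its translation index $k\in\mathbb Z^d$ lies in a box of side at most $2^{j+1}+L$. This gives $d_j\le C_{\mathrm{dim}}2^{jd}$ for all $j\ge0$.

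\emph{Bounded overlap.} For the same reason, each point $x$ lies in the supports of at most $M=(\#\text{mothers})\lceil L+1\rceil^d$ functions at level $j$, independently of $j$. Since $\|\psi_{j,k}\|_\infty\le B\,2^{jd/2}$ with $B=\max_e\|\psi^{(e)}\|_\infty$, we get
\[
 \sum_k|\psi_{j,k}(x)|\le MB\,2^{jd/2}.
\]
Integrating against $p$ and moving the absolute value inside gives $\|\beta_j(p)\|_1\le\int\sum_k|\psi_{j,k}|\,dp\le C_R2^{jd/2}$ with $C_R=MB$. For the sensitivity bound, $\beta_j(\delta_x-\delta_y)$ has at most $2M$ nonzero entries, each of absolute value at most $2B\,2^{jd/2}$. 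Hence its $\ell_2$ norm is at most $\sqrt{2M}\cdot 2B\,2^{jd/2}$, which defines $C_\Delta$.

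\emph{Tail.} Apply the $\ell_1$ bound to both measures: $\|\beta_j(p-q)\|_2\le\|\beta_j(p-q)\|_1\le\|\beta_j(p)\|_1+\|\beta_j(q)\|_1\le2C_R2^{jd/2}$. With $s=\nu+d/2$ this gives
\[
 2^{-2js}\|\beta_j(p-q)\|_2^2\le4C_R^2\,2^{jd-2js}=4C_R^2\,2^{-2\nu j}.
\]
Summing over $j>J$ gives the geometric series $4C_R^2\,2^{-2\nu(J+1)}/(1-2^{-2\nu})=4C_R^2\,2^{-2\nu J}/(2^{2\nu}-1)$, which is exactly \eqref{mat:eq-tail}.

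The only step needing care is the overlap count. It must be uniform in $j$, and it must include the level-zero scaling functions; both follow from the dyadic translation structure together with the fixed support length $L$. Everything else is routine.
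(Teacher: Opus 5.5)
Your proof is correct and follows the paper's argument: bounded overlap together with the sup bound $\|\psi_{j,k}\|_\infty\lesssim 2^{jd/2}$ gives the $\ell_1$ and sensitivity bounds, and the tail follows from $\|\beta_j(p-q)\|_2\le\|\beta_j(p)\|_1+\|\beta_j(q)\|_1\le 2R_j$ and the same geometric sum. The only cosmetic difference is in the sensitivity bound: you count the at most $2M$ nonzero entries of $\beta_j(\delta_x-\delta_y)$, whereas the paper uses $\sup_x\sum_k\psi_{j,k}(x)^2\le C2^{jd}$ and the triangle inequality.
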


\begin{proof}

At level $j$, only $O(2^{jd})$ shifted wavelets meet the cube, which
bounds $d_j$. At any point, only a bounded number of wavelets are nonzero,
and each has absolute value $O(2^{jd/2})$. Hence

\[
 \sup_x\sum_k|\psi_{j,k}(x)|\le C2^{jd/2},\qquad
 \sup_x\sum_k\psi_{j,k}(x)^2\le C2^{jd}.
\]

Integrate the first bound against $p$ to obtain $\|\beta_j(p)\|_1\le R_j$.
The second bounds the Euclidean norm of the wavelet values at one record.
The triangle inequality for the values at $x$ and $y$ then gives
\eqref{mat:eq-sensitivity}.
For two probability measures,
\[
 \|\beta_j(p-q)\|_2
 \le\|\beta_j(p)\|_1+\|\beta_j(q)\|_1\le2R_j.
\]
After squaring and multiplying by $2^{-2js}$, this becomes
\[
 2^{-2js}\|\beta_j(p-q)\|_2^2\le4C_R^2 2^{-2\nu j}.
\]
Sum the geometric series over $j>J$ to obtain \eqref{mat:eq-tail}.
The same estimates hold when a wavelet support crosses the boundary of $\Omega$.
\end{proof}

\subsection{\texorpdfstring{{Reducing Gaussian noise}}{Reducing Gaussian noise}}

\label{mat:app-denoising}

\label{app:denoise-proof}

The following elementary bound explains how the public $\ell_1$
constraint controls the error of soft thresholding \cite{donoho1995soft}.
It also implies the general
three-way denoising bound, although the Mat\'ern construction below uses
only the identity estimator at coarse levels and soft thresholding at
fine levels.

\begin{lemma}[Gaussian denoising bound]\label{lem:denoising}
Let $r\ge1$ be an integer, $R,\sigma>0$, and $\theta\in\R^r$ with
$\|\theta\|_1\le R$. Observe $Z=\theta+\sigma g$, where
$g\sim\mathcal N(0,I_r)$. For a public threshold $t\ge\sigma$, define
\[
 \eta_t(z)=\arg\min_v\left\{\tfrac12\|v-z\|_2^2+t\|v\|_1\right\}
 =\bigl(\sgn(z_k)(|z_k|-t)_+\bigr)_{k=1}^r.
\]
Then
\begin{equation}\label{eq:soft-threshold-risk}
 \E\|\eta_t(Z)-\theta\|_2^2
 \le6tR+2r\sigma^2e^{-t^2/(8\sigma^2)}.
\end{equation}
Consequently, a choice depending only on $r,R,\sigma$ among returning
zero, keeping $Z$, and applying soft thresholding gives an estimator
$D_{r,R,\sigma}(Z)$ satisfying
\begin{equation}\label{eq:denoising-risk}
 \sup_{\|\theta\|_1\le R}\E\|D_{r,R,\sigma}(Z)-\theta\|_2^2
 \le C\min\left\{r\sigma^2,
 R\sigma\sqrt{\log(1+r\sigma/R)},R^2\right\},
\end{equation}
where $C$ is universal.
\end{lemma}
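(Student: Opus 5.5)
The plan has two parts. First, prove the soft-thresholding risk bound \eqref{eq:soft-threshold-risk} coordinatewise, splitting into large and small coordinates. Then derive \eqref{eq:denoising-risk} by choosing among the three estimators according to which term of the minimum is smallest.

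\emph{Step 1: a per-coordinate bound.} Write $\theta_k$ for a coordinate and $z=\theta_k+\sigma g_k$. Since $\eta_t$ is $1$-Lipschitz and $|\eta_t(z)-z|\le t$, we have $|\eta_t(z)-\theta_k|\le t+\sigma|g_k|$ in all cases. Soft thresholding also shrinks toward zero, so $|\eta_t(z)|\le(|z|-t)_+$. This gives
\[
 |\eta_t(z)-\theta_k|\le|\theta_k|+(|\theta_k|+\sigma|g_k|-t)_+ .
\]
I would aim for the per-coordinate estimate
\[
 \E(\eta_t(Z_k)-\theta_k)^2\le 6t|\theta_k|+2\sigma^2e^{-t^2/(8\sigma^2)}.
\]
\textbf{Case $|\theta_k|\ge t/2$.} Use $\E(t+\sigma|g_k|)^2\le 2t^2+2\sigma^2\le4t^2\le 8t|\theta_k|$. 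The constant $8$ exceeds $6$, so this needs a finer estimate. I would instead use $(\eta_t(z)-\theta_k)^2\le \min\{(t+\sigma|g|)^2,\ \ldots\}$ together with the Gaussian-mixture identity $\E(\eta_t(Z_k)-\theta_k)^2\le\theta_k^2\wedge(t^2+\sigma^2)$, which is the classical Donoho--Johnstone oracle bound. Since $t\ge\sigma$, we get $t^2+\sigma^2\le2t^2\le4t|\theta_k|$.\\
\textbf{Case $|\theta_k|<t/2$.} The error is at most $|\theta_k|+(\sigma|g_k|-t/2)_+$. Squaring gives $2\theta_k^2+2\E(\sigma|g|-t/2)_+^2$. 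Here $2\theta_k^2\le t|\theta_k|$, and the Gaussian tail gives $\E(\sigma|g|-t/2)_+^2\le\sigma^2e^{-t^2/(8\sigma^2)}$, using $t\ge\sigma$ and the standard bound $\E(|g|-a)_+^2\le e^{-a^2/2}$.\\
Summing over $k$ and using $\sum_k|\theta_k|\le R$ gives \eqref{eq:soft-threshold-risk}.

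\emph{Step 2: choosing among the three estimators.} The returned-zero estimator has risk $\|\theta\|_2^2\le\|\theta\|_1^2\le R^2$. Keeping $Z$ has risk $r\sigma^2$. For the middle regime, assume $R\le r\sigma$ (otherwise $r\sigma^2$ already dominates up to a constant) and take
\[
 t=\sigma\sqrt{8\log(1+r\sigma/R)}\vee\sigma .
\]
Then $2r\sigma^2e^{-t^2/(8\sigma^2)}\le 2r\sigma^2R/(R+r\sigma)\le 2R\sigma$, and $6tR\lesssim R\sigma\sqrt{\log(1+r\sigma/R)}+R\sigma$.

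The leftover $R\sigma$ term needs care. When $\log(1+r\sigma/R)$ is small, that is $r\sigma\lesssim R$, it is dominated by $r\sigma^2$. Otherwise $\log(1+r\sigma/R)\ge\log2$, so $R\sigma\lesssim R\sigma\sqrt{\log(1+r\sigma/R)}$. The rule therefore compares the three expressions $r\sigma^2$, $R\sigma\sqrt{\log(1+r\sigma/R)}$ and $R^2$, all of which are public, and uses the estimator attached to the smallest. A short case check then gives \eqref{eq:denoising-risk} with a universal $C$.

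\emph{Main obstacle.} The hardest step is the large-coordinate case in Step~1, namely obtaining the constant $6$ cleanly. I would first try bounding $\E(\eta_t(Z)-\theta)^2\le t^2+\sigma^2$ directly. This follows from $|\eta_t(z)-\theta|\le|z-\theta|+t$ only with cross terms, so I would instead use Stein's unbiased risk formula $\E(\eta_t(Z)-\theta)^2=\sigma^2+\E[\min(Z^2,t^2)]-2\sigma^2\Prob(|Z|\le t)\le\sigma^2+t^2$. This is clean, so I would use it as the main tool.
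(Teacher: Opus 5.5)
Your proof is correct, and its skeleton matches the paper's in most places:
\begin{itemize}
\item the same split at $|\theta_k|=t/2$;
\item the same small-coordinate argument via $|\eta_t(Z_k)|\le(\sigma|g_k|-t/2)_+$ and $\E(|g|-a)_+^2\le e^{-a^2/2}$;
\item the same tuning $t=\sigma\sqrt{8\log(1+r\sigma/R)}$ when $r\sigma\ge R$.
\end{itemize}
The genuine difference is the large-coordinate case. The paper pivots at the noiseless point $\eta_t(\theta_k)$ rather than at $Z_k$. Since $\eta_t$ is $1$-Lipschitz with bias $|\eta_t(\theta_k)-\theta_k|=\min\{|\theta_k|,t\}$, one gets $|\eta_t(Z_k)-\theta_k|\le\sigma|g_k|+\min\{|\theta_k|,t\}$. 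Hence $\E(\eta_t(Z_k)-\theta_k)^2\le2\sigma^2+2\min\{\theta_k^2,t^2\}\le4t|\theta_k|+2t|\theta_k|$. This is exactly where the constant $6$ comes from, and it avoids the cross term you ran into when comparing with $Z_k$. Your Stein identity $\E(\eta_t(Z_k)-\theta_k)^2=\sigma^2+\E\min\{Z_k^2,t^2\}-2\sigma^2\Prob(|Z_k|<t)\le\sigma^2+t^2$ is also valid, since $z\mapsto\eta_t(z)-z$ is Lipschitz. It gives the sharper constant $4$, but uses Gaussian integration by parts, whereas the paper's bound in this case needs only $\E g_k^2=1$.

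Two statements in your write-up should be corrected, though neither breaks the argument.

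First, $\E(\eta_t(Z_k)-\theta_k)^2\le\theta_k^2\wedge(t^2+\sigma^2)$ is false as written: at $\theta_k=0$ the risk is strictly positive. The Donoho--Johnstone bound is $\min\{\theta_k^2+\E\eta_t(\sigma g_k)^2,\ t^2+\sigma^2\}$. You only use the second branch, which your Stein computation proves.

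Second, in Step~2 the leftover $R\sigma$ term is not dominated by $r\sigma^2$ when $r\sigma<R$; there $r\sigma^2\le R\sigma$. The correct fact, as in the paper, is this: with $x=r\sigma/R<1$, the bound $\log(1+x)\ge x/2$ gives $r\sigma^2=xR\sigma\le\sqrt2\,R\sigma\sqrt{\log(1+x)}$. So keeping $Z$ already attains the middle term, and thresholding is never needed in that regime. Your selection rule compares the three target expressions directly rather than the three risk bounds, as the paper does, and it still works. Under your rule, thresholding is chosen only if $R\sigma\sqrt{\log(1+x)}\le r\sigma^2$, i.e.\ $\sqrt{\log(1+x)}\le x$, which forces $x\ge1/2$. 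In that case $\log(1+x)\ge\log(3/2)$, so the leftover $R\sigma$ is absorbed into $R\sigma\sqrt{\log(1+x)}$.
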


\begin{proof}
First consider one coordinate, with true value $\theta$ and standard
Gaussian noise $g$. Soft thresholding is $1$-Lipschitz and has bias
$|\eta_t(\theta)-\theta|=\min\{|\theta|,t\}$. If
$|\theta|\ge t/2$, then $t\ge\sigma$ gives
\[
 \E|\eta_t(\theta+\sigma g)-\theta|^2
 \le2\sigma^2+2\min\{\theta^2,t^2\}
 \le6t|\theta|.
\]
If $|\theta|<t/2$, then
$|\eta_t(\theta+\sigma g)|\le(|\sigma g|-t/2)_+$ and
$2\theta^2\le t|\theta|$. For $a\ge0$, the Gaussian tail integral obeys
\[
 \begin{aligned}
 \E(|g|-a)_+^2
 &=\frac{2e^{-a^2/2}}{\sqrt{2\pi}}
   \int_0^\infty u^2e^{-u^2/2-au}\,du\le e^{-a^2/2}.
 \end{aligned}
\]
Thus, in this second case,
\[
 \E|\eta_t(\theta+\sigma g)-\theta|^2
 \le t|\theta|+2\sigma^2e^{-t^2/(8\sigma^2)}.
\]
Combining the two cases and summing over coordinates proves
\eqref{eq:soft-threshold-risk}.

For the three-way bound, returning zero has squared error at most $R^2$,
and keeping $Z$ has expected squared error $r\sigma^2$.
Put $x=r\sigma/R$. If $x\ge1$, choose
$t=\sigma\sqrt{8\log(1+x)}\ge\sigma$. The thresholding bound gives
\[
 \E\|\eta_t(Z)-\theta\|_2^2
 \le6R\sigma\sqrt{8\log(1+x)}+
       \frac{2r\sigma^2}{1+x}
 \le CR\sigma\sqrt{\log(1+x)}.
\]
If $x<1$, then $\log(1+x)\ge x/2$ gives
\[
 r\sigma^2\le\sqrt2R\sigma\sqrt{\log(1+x)},
\]
so keeping $Z$ already controls the middle term in
\eqref{eq:denoising-risk} up to a universal factor. Choose the available
procedure with the smallest of these public risk bounds, breaking ties
in a fixed order. This choice uses only $r,R,\sigma$ and proves
\eqref{eq:denoising-risk}.
\end{proof}

\subsection{Proof of the Matérn upper bound}

\label{app:wavelet-risk}

Assume $0<\eps\le1$, $0<\delta<1$, and $\Neff\ge2$, and put
$s=\nu+d/2$. We will bound the expected MMD of the probability measure $q$ constructed below
by $Cb_{\Neff}$, where

\[
 b_{\Neff}=\begin{cases}
  \Neff^{-s/d},&s<d,\\
  \Neff^{-1}\log(e\Neff),&s=d,\\
  \Neff^{-1},&s>d.
 \end{cases}
\]

We apply Theorem~\ref{thm:moment-synthesis} to the internal measure $q$.
We first fix arbitrary public $J$ and weights, specify the estimator
$\mathcal A$, and bound $\etail$, $\epriv$, $\egrid$, and $\eopt$.
We then tune $J$ and the weights separately in the two smoothness regimes.
The moment vector is
\[
 f_J(x)=(2^{-js}\psi_{j,k}(x))_{j\le J,k\le d_j},
 \qquad \theta_J(p)=\int f_J\,dp.
\]
The wavelet moment certificate \eqref{eq:wavelet-moment-certificate} gives
the comparison condition with
\begin{equation}\label{mat:eq-tail-error}
 \etail=C2^{-\nu J}.
\end{equation}

Choose positive weights $w_0,\ldots,w_J$ (to be determined) publicly
with $\sum_{j=0}^Jw_j=1$. These weights determine the noise added to
each level. Set $\xi\sim\mathcal N(0,I_{D_J})$ and release
\begin{equation}\label{mat:eq-joint-release}
 Z\coloneqq\theta_J(p_X)+\frac{\Sigma_J^{1/2}\xi}{\Neff},
 \qquad \text{where }
 \Sigma_J\coloneqq\bigoplus_{j=0}^J2^{-2js}\sigma_j^2I_{d_j},
 \qquad \sigma_j\coloneqq\frac{\Delta_j}{\sqrt{w_j}},
\end{equation}
Equivalently, its level-$j$ block is
\[
 Z_j=2^{-js}\left(\beta_j(p_X)+\frac{\sigma_j}{\Neff}\xi_j\right).
\]
Replacing one record changes the level-$j$ block of the average by
$\bigl[\theta_J(\delta_x-\delta_y)\bigr]_j/n$.
Lemma~\ref{mat:wavelet-geometry} gives
\[
 \sum_{j\le J}\frac{w_j}{2^{-2js}\Delta_j^2}
 \|\bigl[\theta_J(\delta_x-\delta_y)\bigr]_j\|_2^2\le1.
\]
The Gaussian release $Z$ is therefore $(\eps,\delta)$-DP.
For the estimators specified in the two regimes below, write the private estimator as
\[
 \widehat\theta=\mathcal A(X)
 =\begin{cases}
   \mathcal D(Z),&s<d,\\
   Z,&s\ge d,
  \end{cases}
 \qquad \widehat\theta=(\widehat\theta_j)_{j\le J}.
\]
In either case, it is $(\eps,\delta)$-DP by postprocessing. Its error in the notation of
\eqref{eq:moment-risk} satisfies
\begin{equation}\label{mat:eq-private-error}
 \epriv\le\left(
 \sup_X\sum_{j\le J}
 \E\|\widehat\theta_j-\bigl[\theta_J(p_X)\bigr]_j\|_2^2
 \right)^{1/2}.
\end{equation}

At each level, only a fixed number of wavelets are nonzero at any point.
Their derivative bounds give
\[
 \sup_x\|Df_J(x)\|_{\mathrm{op}}^2
 \le C\sum_{j\le J}2^{2(1-\nu)j}=:L_J^2.
\]
Thus moving a record by distance $h$ changes $f_J(x)$ by at most $L_Jh$.
Choose a finite public grid $G$ so that every record is within distance
$b_{\Neff}/(2L_J)$ of a grid point. This gives
$\egrid\le b_{\Neff}/2$.
For the projection of $\widehat\theta$ onto $\mathcal C_G$, we can set $\eopt=b_{\Neff}/2$ and apply Section~\ref{app:moment-fitting}.
Then $\egrid+\eopt\le b_{\Neff}$.
Applying \eqref{eq:moment-oracle} and then \eqref{mat:eq-private-error}
gives

\begin{equation}\label{mat:eq-risk-transfer}
 \sup_X\E\MMD_{K_{\nu,\ell}}(p_X,q)\le C\left(
 2^{-\nu J}
 +\left[\sup_X\sum_{j\le J}
 \E\|\widehat\theta_j-\bigl[\theta_J(p_X)\bigr]_j\|_2^2\right]^{1/2}
 +b_{\Neff}\right).
\end{equation}
The three terms on the right are $\etail$, $\epriv$, and
$\egrid+\eopt$, respectively, up to constants. We will next choose $J$ and bound the $\epriv$ term.

\paragraph[The case 0<nu<d/2.]{\texorpdfstring{{The case $0<\nu<d/2$.}}{The case 0<nu<d/2.}}

When $s<d$, split the levels at $j_*\leq J$, where

\[
 j_*=\left\lfloor\frac{\log_2\Neff}{d}\right\rfloor,
 \qquad J=\left\lceil\frac{s}{d\nu}\log_2\Neff\right\rceil.
\]
Then $2^{dj_*}\le\Neff<2^{d(j_*+1)}$. Choose weights:
\[
 \overline w_j=\begin{cases}
  2^{-(d-s)(j_*-j)},&j\le j_*,\\
  2^{-2\nu(j-j_*)},&j>j_*,
 \end{cases}
 \qquad w_j=\frac{\overline w_j}{\sum_{a=0}^J\overline w_a}.
\]
Note that $\sum_{a=0}^J\overline w_a$ is bounded
above and below by positive constants depending only on $d$ and $\nu$.
Apply Lemma~\ref{lem:denoising} to obtain the estimator $\widehat\theta_j$ levelwise from $Z_j$, with radius $2^{-js}R_j$ and noise standard
deviation $2^{-js}\sigma_j/\Neff$, and set
$\mathcal A(X)=\mathcal D(Z)\coloneqq(\widehat\theta_j)_{j\le J}$.
For $j\le j_*$, the variance term in that lemma gives
\[
 \begin{aligned}
 \E\|\widehat\theta_j-\bigl[\theta_J(p_X)\bigr]_j\|_2^2
 &\le\frac{2^{-2js}d_j\sigma_j^2}{\Neff^2}
 \le\frac{C}{\Neff^2}\frac{2^{2(d-s)j}}{w_j}
 \le C\Neff^{-2s/d}2^{(d-s)(j-j_*)}.
 \end{aligned}
\]
Its sum over $j\le j_*$ is $O(\Neff^{-2s/d})$ because $d-s>0$.

For $j=j_*+r>j_*$, the middle term in
Lemma~\ref{lem:denoising} gives
\[
 \begin{aligned}
 \E\|\widehat\theta_j-\bigl[\theta_J(p_X)\bigr]_j\|_2^2
 &\le C\frac{2^{-2js}R_j\sigma_j}{\Neff}
 \sqrt{\log\left(1+\frac{d_j\sigma_j}{\Neff R_j}\right)}\le C\Neff^{-2s/d}2^{-\nu r}\sqrt{1+r},
 \end{aligned}
\]
where we used $w_j\asymp_{d,\nu}2^{-2\nu r}$ and
$2^{dj_*}\asymp_d\Neff$ in the last step.
This is summable over $r\ge1$ because $\nu>0$.
Together, the coarse and fine levels bound the block-error sum by
$C\Neff^{-2s/d}$, so \eqref{mat:eq-private-error} gives
$\epriv\le C\Neff^{-s/d}$.
Substituting this and the choice of $J$ into \eqref{mat:eq-risk-transfer} gives
\[
 \sup_X\E\MMD_{K_{\nu,\ell}}(p_X,q)\le C\Neff^{-s/d}.
\]

\paragraph[The case nu>= d/2.]{\texorpdfstring{{The case $\nu\ge d/2$.}}{The case nu>= d/2.}}

Set $\mathcal A(X)=Z$, so $\widehat\theta_j=Z_j$.
Choose the cutoff to
control the omitted levels and the weights to control the total noise error:

\[
 J=\left\lceil\frac{\log_2\Neff}{\nu}\right\rceil,
 \qquad \overline w_j=2^{(d-s)j},
 \qquad w_j=\frac{\overline w_j}{\sum_{a=0}^J\overline w_a}.
\]

By \eqref{mat:eq-private-error},
\[
 \epriv
 \le\frac C\Neff\left(\sum_{j=0}^J
 \frac{2^{2(d-s)j}}{w_j}\right)^{1/2}
 =\frac C\Neff\sum_{a=0}^J\overline w_a.
\]
If $\nu=d/2$, then $s=d$ and
$\sum_{a=0}^J\overline w_a=J+1=O(\log(e\Neff))$.
If $\nu>d/2$, then $s>d$ and
$\sum_{a=0}^J\overline w_a\le(1-2^{d-s})^{-1}=O_{d,\nu}(1)$.
Substituting these estimates for $\epriv$ and the choice of $J$ into
\eqref{mat:eq-risk-transfer} gives the other two bounds in
\eqref{eq:matern-rate}.

The remaining sampling error from drawing $m$ records from $q$ is at most
$m^{-1/2}$ by
Theorem~\ref{thm:moment-synthesis}.
With $m$ as in \eqref{eq:matern-output-size}, this term is no larger
than the MMD bound for $q$. This proves the upper bound in
Theorem~\ref{thm:matern-rates}.

\subsection[Proof of Matern lower bounds]{Proof of Mat\'ern lower bounds}

\label{app:lower-proofs}

Put $K=K_{\nu,\ell}$ for simplicity.
We aim to apply Theorem~\ref{thm:general-lower} with a suitable choice of $k$ and pairs of records $(a_i,b_i)$.

First suppose $0<\nu<d/2$, and put $s=\nu+d/2$ and
$k=\lfloor n\eps/2\rfloor$. Since $n\eps\ge2$, we have
$n\eps/4\le k\le n\eps/2$.
Fix, in terms only of $d,\nu$, a real compactly supported orthonormal
wavelet basis to which Lemma~\ref{mat:wavelet-norm} applies. Choose a nonzero
continuous mother wavelet $\psi$ and points $u,v$ in its support with
$\psi(v)-\psi(u)>0$.

Choose a scale $j$ with $2^{jd}\asymp_{d,\nu}k$. By compact support and
lattice packing, there are $k$ translates
$\psi_{j,z_1},\ldots,\psi_{j,z_k}$ with pairwise disjoint supports contained
in $\Omega$. Set
\[
 \psi_{j,z_i}(x)=2^{jd/2}\psi(2^jx-z_i),\qquad
 a_i=2^{-j}(z_i+u),\qquad b_i=2^{-j}(z_i+v).
\]

For any $c\in\R^k$, let
$\rho_c=\sum_{i=1}^k c_i(\delta_{b_i}-\delta_{a_i})$.
Disjoint supports ensure that only pair $i$ contributes to the selected
coefficient:
\[
 \int\psi_{j,z_i}\,d\rho_c
 =c_i2^{jd/2}(\psi(v)-\psi(u)).
\]
Keeping these coefficients in Lemma~\ref{mat:wavelet-norm} gives
\[
 \begin{aligned}
 \left\|\sum_{i=1}^k c_i\bigl(\Phi(b_i)-\Phi(a_i)\bigr)\right\|_{\cH_K}^2
 =\|\rho_c\|_K^2 \gtrsim_{d,\nu,\ell}\; 2^{-2js}2^{jd}\|c\|_2^2
 =2^{-2\nu j}\|c\|_2^2.
 \end{aligned}
\]
Thus the least singular value satisfies
$\lambda\gtrsim_{d,\nu,\ell} 2^{-\nu j}
\asymp_{d,\nu,\ell}k^{-\nu/d}$.
Theorem~\ref{thm:general-lower} now yields
\[
 \mathcal R_{K_{\nu,\ell}}(n,\eps,\delta)
 \gtrsim_{d,\nu,\ell}\frac{k^{1/2-\nu/d}}{n\eps}
 \asymp_{d,\nu,\ell}(n\eps)^{-1/2-\nu/d}.
\]

For $\nu\ge d/2$, apply Theorem~\ref{thm:general-lower} with $k=1$
and the single pair $(0,e_1)$.
Its squared feature distance is
$2(1-k_{\nu,\ell}(e_1))>0$, since the Mat\'ern spectral density is
strictly positive. Therefore
\[
 \mathcal R_{K_{\nu,\ell}}(n,\eps,\delta)
 \gtrsim_{d,\nu,\ell}(n\eps)^{-1}.
\]
Combining the two cases proves \eqref{eq:matern-lower}, with a constant
depending only on $d,\nu,\ell$ and independent of the algorithm's wavelet
basis.

\subsection{ReLU feature kernel  and neural tangent kernels}

\label{app:neural-comparison}

The following standard spherical-harmonic identities and ReLU multipliers
follow from \cite[Chapter~5]{axler2001harmonic} and the cosine-transform
multiplier formula \cite[Section~2]{olafsson2012cosine}, with the latter
specialized to exponent one and normalized for uniform probability measure.

\begin{lemma}[Spherical ReLU multipliers]
\label{lem:relu-harmonic-multipliers}
Let $d\ge1$, let $\tau$ be the uniform probability measure on $\mathbb S^d$,
and consider the ReLU activation function $\sigma(t)=t_+$.
Choose real spherical harmonics $Y_{ja}$ orthonormal in $L^2(\tau)$,
with $1\le a\le M_{j,d}$ at degree $j$. The dimension formula and
addition identity give
\[
 \begin{aligned}
 M_{0,d}&=1,\qquad M_{1,d}=d+1,\qquad
 M_{j,d}=\binom{d+j}{d}-\binom{d+j-2}{d}\quad(j\ge2);
 \end{aligned}
\]
\[\sum_{a=1}^{M_{j,d}}Y_{ja}(v)^2=M_{j,d}.\]
Moreover, $-\Delta_{\mathbb S^d}Y_{ja}=j(j+d-1)Y_{ja}$ and
\[
 \int_{\mathbb S^d}\sigma(v^\top u)Y_{ja}(u)\,d\tau(u)
 =\lambda_jY_{ja}(v),
\]
where
\begin{equation}\label{eq:neural-harmonic-eigenvalues}
 \begin{aligned}
 \lambda_1=\frac1{2(d+1)},\qquad
 \lambda_{2k+1}&=0\quad(k\ge1),\\
 \lambda_{2k}&=\frac{(-1)^{k+1}\Gamma((d+1)/2)}{4\pi}
 \frac{\Gamma(k-1/2)}{\Gamma(k+(d+2)/2)}\quad(k\ge0).
 \end{aligned}
\end{equation}
Hence there are estimates
\[
 |\lambda_{2k}|\asymp_d(1+k)^{-(d+3)/2},\qquad
 |\lambda_j|\le C_d(1+j)^{-(d+3)/2}\quad(j\ne1).
\]
\end{lemma}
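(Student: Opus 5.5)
The plan is to reduce everything to the Funk--Hecke formula on $\mathbb S^d$, and then compute the one-dimensional Gegenbauer integrals for the kernel $F(t)=t_+$.

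\textbf{Standard harmonic facts.} The dimension formula $M_{j,d}=\binom{d+j}{d}-\binom{d+j-2}{d}$ counts harmonic homogeneous polynomials of degree $j$ in $d+1$ variables. The addition theorem reads $\sum_a Y_{ja}(u)Y_{ja}(v)=M_{j,d}P_j(u^\top v)$, where $P_j$ is the Gegenbauer polynomial of index $(d-1)/2$ normalized by $P_j(1)=1$. Setting $u=v$ gives $\sum_aY_{ja}(v)^2=M_{j,d}$. The eigenvalue $j(j+d-1)$ of $-\Delta_{\mathbb S^d}$ is also standard. All of these I would simply cite from \cite[Chapter~5]{axler2001harmonic}.

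\textbf{Funk--Hecke and the odd coefficients.} Funk--Hecke gives, for bounded measurable $F$ on $[-1,1]$,
\[
 \int_{\mathbb S^d}F(v^\top u)Y_{ja}(u)\,d\tau(u)=\lambda_jY_{ja}(v),
 \qquad \lambda_j=\E_{\tau}\bigl[F(u_1)P_j(u_1)\bigr].
\]
Here $u_1$ has density proportional to $(1-t^2)^{(d-2)/2}$ on $[-1,1]$.

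I would write $t_+=\tfrac12 t+\tfrac12|t|$. The odd part $\tfrac12 t$ is a multiple of $P_1$, so it is orthogonal to every $P_j$ with $j\ne1$. The even part $\tfrac12|t|$ is orthogonal to every odd $P_j$. This gives $\lambda_{2k+1}=0$ for $k\ge1$. It also gives $\lambda_1=\tfrac12\E[u_1^2]=\tfrac1{2(d+1)}$, by symmetry of $\tau$.

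\textbf{Even coefficients.} For even $j=2k$ we have $\lambda_{2k}=\tfrac12\E[|u_1|P_{2k}(u_1)]$. I would evaluate this by Rodrigues' formula
\[
 P_{2k}(t)(1-t^2)^{(d-2)/2}\propto \frac{d^{2k}}{dt^{2k}}(1-t^2)^{2k+(d-2)/2}.
\]
Integrating by parts on $[0,1]$, with the boundary terms at $t=1$ vanishing, all but one derivative lands on $t$. The single boundary term at $t=0$ then gives a value of the form $\frac{d^{2k-2}}{dt^{2k-2}}(1-t^2)^{m}\big|_{t=0}$, a binomial coefficient expressible through Gamma functions.

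Alternatively, and more safely, I would take this closed form directly from the cosine-transform multiplier formula \cite[Section~2]{olafsson2012cosine} at exponent one. I would then renormalize from surface measure to the probability measure $\tau$, which introduces the factor $\Gamma((d+1)/2)/(2\pi^{(d+1)/2})$ times the sphere area. As a check of the normalization I would use $k=0$: the formula gives
\[
 \lambda_0=\frac{\Gamma((d+1)/2)}{2\sqrt\pi\,\Gamma(d/2+1)}=\E[(u_1)_+],
\]
using $\Gamma(-1/2)=-2\sqrt\pi$, and this matches the direct computation. This normalization and sign bookkeeping is the main obstacle. I would verify the sign pattern $(-1)^{k+1}$ on $k=1$ directly, where $P_2(t)\propto (d+1)t^2-1$ and $\E[|u_1|((d+1)u_1^2-1)]>0$.

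\textbf{Asymptotics.} Stirling, or Wendel's inequality, gives $\Gamma(k-1/2)/\Gamma(k+(d+2)/2)\asymp_d k^{-(d+3)/2}$ for $k\ge1$. The case $k=0$ is a fixed positive constant, so $|\lambda_{2k}|\asymp_d(1+k)^{-(d+3)/2}$. With $j=2k$ we have $1+k\asymp1+j$, and the odd coefficients vanish for $j\ne1$. Together these yield $|\lambda_j|\le C_d(1+j)^{-(d+3)/2}$ for $j\ne1$.
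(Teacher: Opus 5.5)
Your proposal is correct and follows the paper's route. The paper gives no argument beyond citing \cite[Chapter~5]{axler2001harmonic} for the harmonic facts and the cosine-transform multiplier formula of \cite[Section~2]{olafsson2012cosine} at exponent one, renormalized to $\tau$; your Funk--Hecke parity argument for $\lambda_1$ and the odd coefficients, your $k=0$ and $k=1$ normalization and sign checks, and your Stirling asymptotics make that citation explicit.

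Your self-contained Rodrigues alternative also works, though the step is a single integration by parts rather than ``all but one derivative'' landing on $t$. With $g(t)=(1-t^2)^{2k+(d-2)/2}$ and $(1-t^2)^{(d-2)/2}P_{2k}(t)=4^{-k}\frac{\Gamma(d/2)}{\Gamma(2k+d/2)}g^{(2k)}(t)$, one gets $\int_0^1 t\,g^{(2k)}(t)\,dt=g^{(2k-2)}(0)$ for $k\ge1$. The duplication formula then reproduces the stated closed form, including the sign $(-1)^{k+1}$.
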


The spherical kernels used below have nonzero eigenvalues on the even
harmonics. The following even Sobolev extension lets us use these harmonics
while preserving the required function values on the cube. We also use it
in the ReLU IPM lower bound.

\begin{lemma}[Even Sobolev extensions to the sphere]
\label{lem:even-sobolev-extension}
Let $\Omega=[-1,1]^d$, $s>d/2$, and put
$r(x)=\sqrt{1+\|x\|_2^2}$ and $z(x)=(1,x)/r(x)$.
For every $h\in {\mathcal H}^s(\Omega)$, there is an even function
$H\in {\mathcal H}^s(\mathbb S^d)$ satisfying
\[
 \forall x\in\Omega,\quad H(z(x))=\frac{h(x)}{r(x)};\qquad
 \|H\|_{{\mathcal H}^s(\mathbb S^d)}\le C_{d,s}\|h\|_{{\mathcal H}^s(\Omega)}.
\]
\end{lemma}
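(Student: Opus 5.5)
The plan is to build $H$ in three steps: first extend $h$ from the cube to a Sobolev function on all of $\R^d$, then transport it to the open upper hemisphere through the gnomonic chart $x\mapsto z(x)$ with the weight $r$, and finally reflect it evenly through the antipodal map. Throughout, the inhomogeneous Sobolev space on $\Omega$ is the restriction space with the quotient norm.

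\textbf{Step 1 (extension).} Apply Rychkov's universal extension operator \cite{rychkov1999} for Lipschitz domains to get $E h\in\mathcal H^s(\R^d)$ with $Eh=h$ on $\Omega$ and $\|Eh\|_{\mathcal H^s(\R^d)}\le C_{d,s}\|h\|_{\mathcal H^s(\Omega)}$. Then fix a public cutoff $\chi\in C_c^\infty(\R^d)$ with $\chi=1$ on $\Omega$ and support in the ball $\{\|x\|_2\le 2\sqrt d\}$, and set $\tilde h=\chi Eh$. Multiplication by a smooth compactly supported function is bounded on $\mathcal H^s$, so $\tilde h$ keeps the norm bound and has compact support $S$.

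\textbf{Step 2 (transport to the hemisphere).} The map $z:\R^d\to\mathbb S^d_+=\{u:u_0>0\}$ is a smooth diffeomorphism with inverse $x(u)=(u_1,\dots,u_d)/u_0$. On the compact set $z(S)$, the coordinate $u_0$ is bounded below by $(1+4d)^{-1/2}$. Define $H_+(u)=\tilde h(x(u))\,u_0$, which equals $\tilde h(x)/r(x)$ at $u=z(x)$ because $u_0=1/r(x)$. Then $H_+$ has compact support inside the open hemisphere, strictly away from the equator. Since $x(\cdot)$ and $u_0$ are smooth with bounded derivatives of every order on a neighborhood of $z(S)$, composition with a diffeomorphism and multiplication by a smooth function preserve $\mathcal H^s$ locally, with constants depending on $d$ and $s$. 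For noninteger $s$, this uses diffeomorphism invariance of Bessel-potential spaces, which follows by interpolating between integer orders. Using a finite atlas of $\mathbb S^d$ compatible with the Sobolev norm on the sphere, this gives $\|H_+\|_{\mathcal H^s(\mathbb S^d)}\le C_{d,s}\|\tilde h\|_{\mathcal H^s(\R^d)}$.

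\textbf{Step 3 (even reflection).} Set $H(u)=H_+(u)+H_+(-u)$. The two terms have disjoint supports, one in each open hemisphere and both away from the equator. So $H$ is even, and on $\Omega$ the second term vanishes, giving $H(z(x))=h(x)/r(x)$. The antipodal map is an isometry, so it preserves the $\mathcal H^s(\mathbb S^d)$ norm, and hence $\|H\|\le 2\|H_+\|$. Chaining the three bounds proves the lemma.

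The main obstacle is Step 2 for fractional $s$, namely justifying that pullback by the chart and the weight are bounded on $\mathcal H^s$ and matching the Euclidean and spherical Sobolev norms. The compactness of the support away from the equator is what makes this work: it keeps all derivatives of the chart bounded, so the local norm equivalence on the sphere applies. The assumption $s>d/2$ is not needed for the extension itself. It matters only because point values, and hence the identity on the cube, then make sense for continuous representatives.
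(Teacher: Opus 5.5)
Your proposal is correct and follows essentially the same route as the paper's proof. Both use Rychkov's extension, a fixed smooth cutoff, the gnomonic chart $x\mapsto z(x)$ with weight $u_0=1/r(x)$, and support kept away from the equator. Your sum $H_+(u)+H_+(-u)$ is the paper's two-chart definition $H(\pm z(x))=\chi(x)\widetilde h(x)/r(x)$, and the antipodal reflection costs at most a factor two, as in the paper.
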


\begin{proof}
The Sobolev extension theorem for Lipschitz domains
\cite{rychkov1999} gives an extension $\widetilde h\in {\mathcal H}^s(\R^d)$
of $h$ with
\[
 \|\widetilde h\|_{{\mathcal H}^s(\R^d)}
 \le C_{d,s}\|h\|_{{\mathcal H}^s(\Omega)}.
\]
Choose a fixed $\chi\in C_c^\infty(\R^d)$ equal to one on a
neighborhood of $\Omega$, with support in a ball of radius depending
only on $d$. Define
\[
 H(\pm z(x))=\frac{\chi(x)\widetilde h(x)}{r(x)}
 \quad(x\in\R^d),\qquad H(u)=0\quad(u_0=0).
\]
The two hemisphere charts make $H$ well defined and even, and the compact
support of $\chi$ keeps its support a fixed positive distance from the
equator. On two fixed caps containing this support, the maps
$x\mapsto\pm z(x)$ and their inverse $x=u_{1:d}/u_0$ have bounded
derivatives of every required order. Changing coordinates bounds each
piece's Sobolev norm by $C_{d,s}\|(\chi/r)\widetilde h\|_{{\mathcal H}^s(\R^d)}$;
extending by zero adds no boundary term because $H$ vanishes near the
equator. The pieces are antipodal reflections, so the triangle inequality
costs at most a factor two. Since multiplication by $\chi/r$ is bounded
on ${\mathcal H}^s(\R^d)$, these estimates give
\[
 \begin{aligned}
 \|H\|_{{\mathcal H}^s(\mathbb S^d)}
 &\le C_{d,s}\|(\chi/r)\widetilde h\|_{{\mathcal H}^s(\R^d)}\le C_{d,s}\|\widetilde h\|_{{\mathcal H}^s(\R^d)}
 \le C_{d,s}\|h\|_{{\mathcal H}^s(\Omega)}.
 \end{aligned}
\]
For $x\in\Omega$, $\chi(x)=1$ and $\widetilde h(x)=h(x)$, so
$H(z(x))=h(x)/r(x)$. Since $s>d/2$, the Sobolev functions on the cube
and the sphere have continuous representatives, so this identity holds
at every $x\in\Omega$.
\end{proof}

The covariance kernel obtained by averaging products of ReLU outputs gives
a direct connection between the ReLU IPM and Mat\'ern MMD.

\begin{lemma}[ReLU feature kernel and Mat\'ern comparison]
\label{lem:relu-feature-comparison}
Use $\tau$, $r(x)$, and $z(x)$ as above, and define
\begin{equation}\label{eq:relu-feature-kernel}
 \begin{aligned}
 K_{\mathrm{RF}}(x,y)
 &\coloneqq\int_{\mathbb S^d}
 \sigma(v^\top(1,x))\sigma(v^\top(1,y))\,d\tau(v)=r(x)r(y)B(z(x),z(y)),\\
 \text{where }B(u,w)&\coloneqq\int_{\mathbb S^d}
       \sigma(v^\top u)\sigma(v^\top w)\,d\tau(v)
       =b(u^\top w),\\
 b(t)&\coloneqq
 \frac{\sqrt{1-t^2}+t(\pi-\arccos t)}{2\pi(d+1)}.
 \end{aligned}
\end{equation}
Then $H^{(d+3)/2}(\Omega)$ embeds continuously into
$\mathcal H_{K_{\mathrm{RF}}}(\Omega)$. Consequently, for all probability
measures $p,q$ on $\Omega$,
\begin{equation}\label{eq:relu-feature-matern-comparison}
 \MMD_{K_{3/2,1}}(p,q)\le C_d\MMD_{K_{\mathrm{RF}}}(p,q).
\end{equation}
\end{lemma}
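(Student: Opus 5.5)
The proof has three steps: compute $K_{\mathrm{RF}}$ explicitly, diagonalize it on the sphere with Lemma~\ref{lem:relu-harmonic-multipliers}, and then transfer Sobolev functions on the cube into its RKHS with Lemma~\ref{lem:even-sobolev-extension}.

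\emph{Closed form.} Positive homogeneity of ReLU gives $\sigma(v^\top(1,x))=r(x)\sigma(v^\top z(x))$, so $K_{\mathrm{RF}}(x,y)=r(x)r(y)B(z(x),z(y))$. By rotation invariance, $B(u,w)$ depends only on $t=u^\top w$. The standard arc-cosine kernel computation (first order, for Gaussian directions), rescaled from Gaussian to uniform measure on $\mathbb S^d$ by the factor $\E\|g\|_2^2=d+1$, gives $b(t)$.

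\emph{Spherical RKHS.} Expand $\sigma(v^\top u)=\sum_{j,a}\lambda_jY_{ja}(v)Y_{ja}(u)$ using Lemma~\ref{lem:relu-harmonic-multipliers}, and integrate the product over $v$. Orthonormality then yields the Mercer expansion
\[
 B(u,w)=\sum_{j,a}\lambda_j^2\,Y_{ja}(u)Y_{ja}(w).
\]
Hence $\mathcal H_B$ consists of the $H=\sum_{j,a}c_{ja}Y_{ja}$ with $\sum_{j:\lambda_j\ne0}\sum_a c_{ja}^2/\lambda_j^2<\infty$ and $c_{ja}=0$ whenever $\lambda_j=0$. For an even $H$, only even degrees $j=2k$ appear, and there $\lambda_{2k}^{-2}\asymp_d(1+k)^{d+3}$. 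Since $j(j+d-1)$ is the Laplace--Beltrami eigenvalue, $(1+j)^{d+3}$ is comparable to the Sobolev weight $(1+j(j+d-1))^{(d+3)/2}$. Therefore every even $H\in\mathcal H^{(d+3)/2}(\mathbb S^d)$ lies in $\mathcal H_B$ with $\|H\|_{\mathcal H_B}\le C_d\|H\|_{\mathcal H^{(d+3)/2}(\mathbb S^d)}$. Odd degrees other than $j=1$ have $\lambda_j=0$, which is exactly why the extension must be even.

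\emph{Transfer to the cube.} Let $h\in H^{(d+3)/2}(\Omega)$; here $s=(d+3)/2>d/2$. Lemma~\ref{lem:even-sobolev-extension} gives an even $H$ with $H(z(x))=h(x)/r(x)$ and a Sobolev bound. Pull back along the feature map: if $B(u,w)=\langle\Psi(u),\Psi(w)\rangle$, then $K_{\mathrm{RF}}$ has feature $x\mapsto r(x)\Psi(z(x))$. The RKHS of $K_{\mathrm{RF}}$ on $\Omega$ is $\{x\mapsto r(x)G(z(x)):G\in\mathcal H_B\}$, with norm at most $\|G\|_{\mathcal H_B}$. Taking $G=H$ recovers $h=r\cdot(H\circ z)$, so
\[
 \|h\|_{\mathcal H_{K_{\mathrm{RF}}}}\le C_d\|h\|_{H^{(d+3)/2}(\Omega)},
\]
which is the claimed continuous embedding.

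\emph{MMD comparison.} The Mat\'ern kernel $K_{3/2,1}$ has RKHS norm equivalent to $H^{3/2+d/2}(\R^d)$. Its restriction to $\Omega$ has norm equivalent to the quotient norm, which dominates $\|\cdot\|_{H^{(d+3)/2}(\Omega)}$. Hence the unit ball of $\mathcal H_{K_{3/2,1}}$ restricted to $\Omega$ lies in $C_d$ times the unit ball of $\mathcal H_{K_{\mathrm{RF}}}$. Taking the supremum in the IPM definition \eqref{eq:intro-mmd} gives \eqref{eq:relu-feature-matern-comparison}.

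\emph{Main obstacle.} The hardest step is the harmonic bookkeeping. One must check that $\lambda_{2k}\ne0$ for every $k\ge0$, including $k=0$ and $k=1$, so that all even harmonics are available. One must also check that the degree-one contribution, with eigenvalue $\lambda_1$, does no harm: $B$ contains it, but an even $H$ needs no degree-one component. Finally, one must match the Sobolev norm on $\mathbb S^d$ with the spectral weights $(1+j(j+d-1))^{s}$. I would rely on the stated asymptotic $|\lambda_{2k}|\asymp_d(1+k)^{-(d+3)/2}$ together with nonvanishing of the explicit Gamma-function formula \eqref{eq:neural-harmonic-eigenvalues} at $k=0,1$.
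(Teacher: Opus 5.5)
Your proposal is correct and follows essentially the same route as the paper: the Gaussian computation for $b$ rescaled by $\E\|g\|_2^2=d+1$, the expansion $B=\sum_{j,a}\lambda_j^2Y_{ja}Y_{ja}$ obtained from the multiplier lemma, and the even Sobolev extension, which avoids the vanishing odd multipliers and gives $\|H\|_{\mathcal H_B}\lesssim_d\|h\|_{\mathcal H^{(d+3)/2}(\Omega)}$. The remaining steps also match the paper: the weighted-restriction pullback $h=r\cdot(H\circ z)$ into $\mathcal H_{K_{\mathrm{RF}}}$, and the identification of the $\nu=3/2$ Mat\'ern RKHS with $\mathcal H^{(d+3)/2}(\Omega)$, which you justify slightly more explicitly through the restriction (quotient) norm.
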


\begin{proof}
To compute $B$, write a standard Gaussian vector in $\R^{d+1}$ as
$g=Rv$, where $v\sim\tau$ is independent of $R$ and
$\E R^2=d+1$. For unit vectors $u,w$ with angle $\vartheta$, polar
integration in their plane gives
\[
 \begin{aligned}
 (d+1)B(u,w)
 &=\E\sigma(g^\top u)\sigma(g^\top w)\\
 &=\frac1\pi\int_{\vartheta-\pi/2}^{\pi/2}
       \cos\varphi\cos(\varphi-\vartheta)\,d\varphi=\frac{\sin\vartheta+(\pi-\vartheta)\cos\vartheta}{2\pi}.
 \end{aligned}
\]
The factor $1/\pi$ uses the mean squared radius $2$ of a
two-dimensional standard Gaussian. Continuity covers the collinear
cases. This proves the formula for $b$; ReLU homogeneity gives the
identity for $K_{\mathrm{RF}}$.

By Fubini and Lemma~\ref{lem:relu-harmonic-multipliers}, the integral
operator with kernel $B$ has eigenvalue $\lambda_j^2$ on each degree-$j$
harmonic. Put $s=(d+3)/2$. For $h\in {\mathcal H}^s(\Omega)$, take its even
extension $H$ from Lemma~\ref{lem:even-sobolev-extension} and write
$H_{ja}=\langle H,Y_{ja}\rangle_{L^2(\tau)}$. All odd coefficients
vanish, and every even $\lambda_{2k}$ is nonzero. The spectral
characterization of the RKHS \cite[Section~6.1]{rasmussen2006} therefore gives
\[
 \begin{aligned}
 \|H\|_{\mathcal H_B}^2
 &=\sum_{k\ge0}\sum_{a=1}^{M_{2k,d}}
       \frac{|H_{2k,a}|^2}{\lambda_{2k}^2}\\
 &\le C_d\sum_{k\ge0}(1+2k)^{2s}
       \sum_{a=1}^{M_{2k,d}}|H_{2k,a}|^2\\
 &\le C_d\|H\|_{{\mathcal H}^s(\mathbb S^d)}^2
 \le C_d\|h\|_{{\mathcal H}^s(\Omega)}^2.
 \end{aligned}
\]
The weighted restriction $r(x)H(z(x))=h(x)$ belongs to
$\mathcal H_{K_{\mathrm{RF}}}$ with norm at most $\|H\|_{\mathcal H_B}$,
as follows from the kernel identity in \eqref{eq:relu-feature-kernel}.
This proves the embedding. In particular, the even extension avoids the
zero eigenvalues at odd degrees $j\ge3$; no inversion on those degrees
is needed.
Finally, the Mat\'ern RKHS at $\nu=3/2$ and length scale $1$ is
${\mathcal H}^s(\Omega)$ with equivalent norms by \eqref{eq:matern-spectrum}
and the native-space characterization \cite[Chapter~10]{wendland2005}.
Its unit ball thus lies in a constant multiple of the
$K_{\mathrm{RF}}$ RKHS unit ball. Taking the corresponding suprema of
$|\int h\,d(p-q)|$ proves \eqref{eq:relu-feature-matern-comparison}.
\end{proof}

\begin{lemma}[{Comparing the NTK and Laplace kernels}]

\label{lem:relu-ntk-comparison}

Let $\Omega=[-1,1]^d$, $d\ge1$.
The RKHS of $K_{\mathrm{NTK}}$ in \eqref{eq:relu-ntk} is $H^{(d+1)/2}(\Omega)$, with equivalent norms and equivalence constants depending only on $d$.
Consequently, there are $0<c_d\le C_d<\infty$ such that, for every
finite signed measure $\mu$ on $\Omega$,

\begin{equation}\label{eq:relu-ntk-dual-comparison}
 c_d\|\mu\|_{\mathcal H_{K_{1/2,1}}^*}
 \le \|\mu\|_{\mathcal H_{K_{\mathrm{NTK}}}^*}
 \le C_d\|\mu\|_{\mathcal H_{K_{1/2,1}}^*}.
\end{equation}

\end{lemma}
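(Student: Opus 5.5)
The plan is to follow the ReLU feature kernel argument of Lemma~\ref{lem:relu-feature-comparison}, now using the NTK spectrum, and to prove both inclusions between $\mathcal H_{K_{\mathrm{NTK}}}$ and $H^{(d+1)/2}(\Omega)$. Put $s=(d+1)/2$, $r(x)=\sqrt{1+\|x\|_2^2}$ and $z(x)=(1,x)/r(x)$. The first step is to write the NTK as a weighted restriction of a zonal kernel on $\mathbb S^d$. Since $u_x=r(x)z(x)$ and $\|u_x\|_2=r(x)$, formula \eqref{eq:relu-ntk} gives
\[
 K_{\mathrm{NTK}}(x,y)=r(x)r(y)\,\kappa_{\mathrm N}(z(x)^\top z(y)),\qquad
 \kappa_{\mathrm N}(\cos\vartheta)=\frac{\sin\vartheta+2(\pi-\vartheta)\cos\vartheta}{2\pi(d+1)}.
\]
This decomposes as $\kappa_{\mathrm N}=b+b_0$. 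Here $b$ is the ReLU feature kernel from \eqref{eq:relu-feature-kernel}. The remaining term is
\[
 b_0(t)=\frac{t(\pi-\arccos t)}{2\pi(d+1)},
\]
which is (up to constants) the arc-cosine kernel of order zero, $t\,\E[\mathbf 1\{g^\top u>0\}\mathbf 1\{g^\top w>0\}]$. It arises from the gradients in the output weights and in the input weights and bias, respectively.

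Next I compute the spherical eigenvalues of $\kappa_{\mathrm N}$. The $b$ part has eigenvalues $\lambda_j^2$ by Lemma~\ref{lem:relu-harmonic-multipliers}, and these are of order $(1+j)^{-(d+3)}$ on even $j$. For the step-function kernel, the cosine-transform multiplier formula at exponent zero gives Heaviside multipliers $\mu_j$. These have $|\mu_j|\asymp(1+j)^{-(d+1)/2}$ for $j$ odd and $j=0$, and vanish for even $j\ge2$. Multiplying by $t$ then mixes degrees $j\pm1$. This is the main obstacle: I must show that every eigenvalue $\gamma_j$ of $\kappa_{\mathrm N}$ is strictly positive and satisfies $\gamma_j\asymp_d(1+j)^{-(d+1)}$ for all $j$, both even and odd.

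My first approach is to express the $t\,\mathbf 1$ kernel's eigenvalues through the Gegenbauer recurrence, $t\,C_j^\lambda=\alpha_jC_{j+1}^\lambda+\beta_jC_{j-1}^\lambda$. This gives $\gamma^{(0)}_j=a_j\mu_{j+1}^2+c_j\mu_{j-1}^2$ with positive $a_j,c_j$ of order one. These coefficients come with squares, because the Heaviside kernel is a Gram kernel. For each $j$, one of $j\pm1$ is odd, so $\gamma_j\ge c\,j^{-(d+1)}$. The upper bound follows from the same formula, and for small $j$ positivity can be checked directly. Alternatively, I could cite the known NTK eigenvalue asymptotics of \cite{bietti2019ntk}, which include the bias.

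With $\gamma_j\asymp(1+j)^{-(d+1)}=(1+j)^{-2s}$ on every degree, $\mathcal H_{\kappa_{\mathrm N}}$ equals $H^s(\mathbb S^d)$ with equivalent norms, by the spectral characterization of the RKHS. For the embedding $H^s(\Omega)\subset\mathcal H_{K_{\mathrm{NTK}}}$, I extend $h$ with Lemma~\ref{lem:even-sobolev-extension} (evenness is not even needed now, but harmless). The weighted restriction $r(x)H(z(x))=h(x)$ then lies in $\mathcal H_{K_{\mathrm{NTK}}}$ with norm at most $C_d\|h\|_{H^s(\Omega)}$. For the reverse, an element of $\mathcal H_{K_{\mathrm{NTK}}}$ has the form $x\mapsto r(x)H(z(x))$ with $\|H\|_{H^s(\mathbb S^d)}$ comparable to its norm. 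Pulling back through the smooth chart $x\mapsto z(x)$ on the cube and multiplying by the smooth function $r$ preserves $H^s$, so $\|h\|_{H^s(\Omega)}\le C_d\|h\|_{\mathcal H_{K_{\mathrm{NTK}}}}$.

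Finally, the Laplace RKHS $\mathcal H_{K_{1/2,1}}$ equals $H^{(d+1)/2}(\Omega)$ with equivalent norms, by \eqref{eq:matern-spectrum} with $\nu=1/2$ together with the native-space characterization. The two unit balls are therefore nested up to constants. Taking suprema of $|\int h\,d\mu|$ over each ball gives \eqref{eq:relu-ntk-dual-comparison}.
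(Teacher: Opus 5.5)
Your spectral claim is false, and the parity step behind it has a slip. For odd $j$, both $j-1$ and $j+1$ are even, so neither neighbour is odd. Carry out your own decomposition correctly: the degree-$j$ eigenvalue of $\kappa_{\mathrm N}$ is $\gamma_j=\lambda_j^2+a_j\mu_{j+1}^2+c_j\mu_{j-1}^2$, with the factor $1/(d+1)$ absorbed into $a_j,c_j$. For every odd $j\ge3$ all three terms vanish. Indeed $\lambda_j=0$ by Lemma~\ref{lem:relu-harmonic-multipliers}, and $\mu_{j\pm1}=0$ because $j\pm1\ge2$ is even.

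Thus $\gamma_j>0$ only for $j=1$ and even $j$, with $\gamma_{2k}\asymp_d(1+k)^{-(d+1)}$, while $\gamma_j=0$ for all odd $j\ge3$. Checking small $j$ directly would already have exposed $\gamma_3=0$. Appealing to NTK asymptotics with bias does not rescue the claim. After the lift $x\mapsto z(x)$ the bias is absorbed, and $\kappa_{\mathrm N}$ is exactly the bias-free two-layer ReLU NTK on $\mathbb S^d$, which has these zeros. The paper gets the correct spectrum more directly from the identity $\theta(u^\top w)=2[(d+2)B(u,w)-\Delta_{\mathbb S^d,u}B(u,w)]$. This gives $\Lambda_j=2[d+2+j(j+d-1)]\lambda_j^2$, so the odd zeros are inherited visibly from Lemma~\ref{lem:relu-harmonic-multipliers}.

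As a result, $\mathcal H_{\kappa_{\mathrm N}}$ is not $H^s(\mathbb S^d)$. It is the proper subspace of functions with no odd harmonic components of degree at least three. Your forward embedding $H^s(\Omega)\subset\mathcal H_{K_{\mathrm{NTK}}}$ therefore fails for a general extension $H$. Evenness is essential, not harmless, and it is exactly what the paper uses when it says the norms are equivalent on even functions. The even extension of Lemma~\ref{lem:even-sobolev-extension} has vanishing odd coefficients, so $\|H\|_{\mathcal H_{\kappa_{\mathrm N}}}^2=\sum_k\gamma_{2k}^{-1}\sum_a|H_{2k,a}|^2\le C_d\|H\|_{H^s(\mathbb S^d)}^2$ involves only the positive even eigenvalues.

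Your reverse inclusion survives once you state it one-sidedly. It needs only $\gamma_j\le C_d(1+j)^{-(d+1)}$ for all $j$, since degrees with $\gamma_j=0$ carry no component. It also needs the infimum over spherical representatives of the same NTK function. With these two corrections your argument becomes the paper's, and your final Laplace comparison is fine as written.
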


\begin{proof}

To use eigenvalue bounds on the sphere, divide $(1,x)$ by its length
to obtain a point on $S^d$.
Set

\[
 r(x)=\sqrt{1+\|x\|_2^2},\qquad
 z(x)=\frac{(1,x)}{r(x)},\qquad
 \theta(t)=\frac{\sqrt{1-t^2}+2t(\pi-\arccos t)}{\pi}.
\]

With $t=z(x)^\top z(y)$, the NTK becomes

\begin{equation}\label{eq:relu-ntk-normalized}
 K_{\mathrm{NTK}}(x,y)
 =\frac{r(x)r(y)}{2(d+1)}\theta(t).
\end{equation}

Let $s=(d+1)/2$. The covariance kernel $B(u,w)=b(u^\top w)$ in
\eqref{eq:relu-feature-kernel} has harmonic eigenvalues $\lambda_j^2$.
We obtain the NTK eigenvalues by differentiating $b$:
\[
 b'(t)=\frac{\pi-\arccos t}{2\pi(d+1)},\qquad
 b''(t)=\frac1{2\pi(d+1)\sqrt{1-t^2}}\quad(-1<t<1).
\]
For a function of $t=u^\top w$, the spherical Laplacian in $u$ is
$(1-t^2)b''(t)-dtb'(t)$. Substitution gives
\[
 \theta(u^\top w)
 =2\bigl[(d+2)B(u,w)-\Delta_{\mathbb S^d,u}B(u,w)\bigr].
\]
This identity also holds at $u=\pm w$ by continuity. Indeed, near either
pole, the angular formula for $b$ is a smooth function plus geodesic
distance cubed times a smooth function of distance squared. This is
$C^2$ in local coordinates, so $B(\cdot,w)$ is $C^2$ on the sphere and integration by
parts introduces no boundary terms. Applying the identity to each
harmonic and using $-\Delta_{\mathbb S^d}Y_{ja}=j(j+d-1)Y_{ja}$ gives
\[
 \begin{aligned}
 \Lambda_j&=2\bigl[d+2+j(j+d-1)\bigr]\lambda_j^2,\\
 \Lambda_{2k}&\asymp_d(1+k)^{-d-1},\qquad
 \Lambda_{2k+1}=0\quad(k\ge1).
 \end{aligned}
\]
Every even eigenvalue and the degree-one eigenvalue are positive.
The calculation applies also when $d=1$.

These eigenvalues bound the ${\mathcal H}^s(\mathbb S^d)$ norm by the spherical
RKHS norm; on even functions, the norms are equivalent. The even-extension
and weighted-restriction argument in the proof of
Lemma~\ref{lem:relu-feature-comparison}, now with $s=(d+1)/2$, therefore
gives ${\mathcal H}^s(\Omega)\subseteq\mathcal H_{K_{\mathrm{NTK}}}(\Omega)$ with
bounded norm, using \eqref{eq:relu-ntk-normalized}.
Conversely, restricting a spherical RKHS function to $z(\Omega)$,
changing variables back to $x$, and multiplying by the smooth positive
function $r$ bounds its ${\mathcal H}^s(\Omega)$ norm by its spherical RKHS norm.
Taking the infimum over spherical representatives of the same NTK
function gives the reverse norm inequality. Hence
\[
 \mathcal H_{K_{\mathrm{NTK}}}(\Omega)\simeq {\mathcal H}^s(\Omega).
\]

For the Laplace kernel with length scale $1$, the Mat\'ern spectral density
satisfies

\[
 \widehat k_{1/2,1}(\omega)\asymp_d
 (1+\|\omega\|_2^2)^{-s},
\]
so $\mathcal H_{K_{1/2,1}}(\Omega)\simeq {\mathcal H}^s(\Omega)$ as well.
The NTK and Laplace RKHS norms therefore bound each other by fixed
factors. By the definition

\[
 \|\mu\|_{\mathcal H_K^*}
 =\sup_{\|h\|_{\mathcal H_K}\le1}\left|\int h\,d\mu\right|,
\]
rescaling the functions in the supremum gives
\eqref{eq:relu-ntk-dual-comparison}.
\end{proof}

Since the Laplace kernel is the Mat\'ern kernel with $\nu=1/2$,
Theorem~\ref{thm:matern-rates} together with
Lemma~\ref{lem:relu-ntk-comparison} proves
Corollary~\ref{cor:laplace-neural}.

\section{Sliced Wasserstein distance}

\label{app:sliced-wasserstein}

Let $d\ge2$ and let $\tau$ be the uniform probability measure
on $\mathbb S^{d-1}$. Write $|\mathbb S^{d-1}|$ for the sphere's surface area.
For a direction $\theta$, project each record $x$ to the number $\theta^\top x$.
Write $\theta_\#p$ for the distribution of this number when $x\sim p$,
and use the same notation to project a signed measure.
For a compactly supported finite signed measure $\mu$ of total mass zero
on $\R^d$, define

\[
 H_\mu(\theta,t)=\mu\{x:\theta^\top x\le t\},\qquad
 \|\mu\|_{\mathrm{SW}}=\int |H_\mu(\theta,t)|\,dt\,d\tau(\theta).
\]

Here $dt$ is Lebesgue measure.
For $\mu=p-q$, $H_\mu(\theta,t)$ is the mass that $p$ puts below the
projected threshold $t$, minus the mass that $q$ puts there.
If $\mu$ is supported on $\Omega$, then
$H_\mu(\theta,t)=0$ for $|t|>\sqrt d$.
The one-dimensional cumulative-distribution formula for $W_1$
\cite{villani2009} gives $\SW_1(p,q)=\|p-q\|_{\mathrm{SW}}$.

For the upper bound in Theorem~\ref{thm:sliced-wasserstein}, we compare
$\SW_1$ with Laplace-kernel MMD and use the wavelet construction of
Algorithm~\ref{alg:features}.
For the lower bound, we move groups of records inside separate cells
and bound how accurately a private output can recover their positions.

\subsection{\texorpdfstring{{Query functions and upper bound}}{Query functions and upper bound}}

Let $\mathcal F_{\mathrm{SW}}$ consist of averages of one-dimensional
Lipschitz functions over projection directions:

\[
 h(x)=\int g_\theta(\theta^\top x)\,d\tau(\theta),
 \qquad \text{ where }g_\theta(0)=0,\quad \operatorname{Lip}(g_\theta)\le1,
\]
and assume that $g_\theta(t)$ is integrable over directions $\theta$
for each fixed $t$.
We will first show that the IPM over $\mathcal F_{\mathrm{SW}}$ is the sliced Wasserstein distance.

Since $|g_\theta(t)|\le|t|$, every $h\in\mathcal F_{\mathrm{SW}}$ is
bounded by $\sqrt d$ on $\Omega$. Its increments satisfy

\[
 |h(x)-h(y)|\le\int|\theta^\top(x-y)|\,d\tau(\theta)
 \le\|x-y\|_2,
\]
so every $h$ is $1$-Lipschitz and continuous.
Apply one-dimensional Kantorovich--Rubinstein duality to each
$g_\theta$ and integrate over directions to get

\[
 d_{\mathcal F_{\mathrm{SW}}}(p,q)\le\SW_1(p,q).
\]

For the reverse inequality, choose
\[
 g_\theta(t)=-\int_0^t\operatorname{sgn}H_{p-q}(\theta,u)\,du.
\]
These functions vanish at zero and are $1$-Lipschitz.
Their derivative is $-\operatorname{sgn}H_{p-q}(\theta,t)$ almost everywhere.
Integration by parts gives

\[
 \int g_\theta\,d(\theta_\#p-\theta_\#q)
 =-\int g_\theta'(t)H_{p-q}(\theta,t)\,dt
 =\int|H_{p-q}(\theta,t)|\,dt.
\]

Average this identity over $\theta$ to get the reverse inequality.
Hence $d_{\mathcal F_{\mathrm{SW}}}=\SW_1$.

\begin{lemma}[Sliced Wasserstein--Laplace comparison]
\label{lem:sw-laplace-comparison}
For every $p,q\in\cP(\Omega)$,
\[
 \SW_1(p,q)\le C_d\MMD_{K_{1/2,1}}(p,q).
\]
\end{lemma}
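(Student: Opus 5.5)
We will prove the comparison by showing that every query in $\mathcal F_{\mathrm{SW}}$, restricted to $\Omega$, lies in a fixed multiple of the Laplace RKHS unit ball. Since we have just shown $d_{\mathcal F_{\mathrm{SW}}}=\SW_1$, it suffices to prove
\[
 \|h|_\Omega\|_{\cH_{K_{1/2,1}}}\le C_d\qquad(h\in\mathcal F_{\mathrm{SW}}).
\]
Taking the supremum of $|\int h\,d(p-q)|$ then gives the claim. By the spectral comparison in the proof of Lemma~\ref{mat:wavelet-norm} (equivalently Lemma~\ref{lem:relu-ntk-comparison}), $\cH_{K_{1/2,1}}(\Omega)\simeq H^{(d+1)/2}(\Omega)$. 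So it is enough to bound each $h$ in $H^{(d+1)/2}$ of a neighbourhood of $\Omega$ by a constant depending only on $d$.

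A more robust route works dually, directly on signed measures, and avoids regularity of $h$. Let $\mu=p-q$. By Lemma~\ref{mat:wavelet-norm} and \eqref{eq:matern-explicit-spectral-comparison},
\[
 \MMD_{K_{1/2,1}}(p,q)^2\asymp_d\int|\widehat\mu(\omega)|^2(1+\|\omega\|_2^2)^{-(d+1)/2}\,d\omega .
\]
Write $\omega=r\theta$ in polar coordinates. The Fourier slice theorem gives $\widehat\mu(r\theta)=\widehat{\theta_\#\mu}(r)$. Since $\theta_\#\mu$ has zero mass, $H_\mu(\theta,\cdot)$ is its antiderivative, so $\widehat{H_\mu(\theta,\cdot)}(r)=\widehat{\theta_\#\mu}(r)/(ir)$. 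Hence
\[
 \int|\widehat\mu|^2(1+|\omega|^2)^{-(d+1)/2}d\omega
 =|\mathbb S^{d-1}|\int_{\mathbb S^{d-1}}\int_0^\infty r^{d+1}(1+r^2)^{-(d+1)/2}\bigl|\widehat{H_\mu(\theta,\cdot)}(r)\bigr|^2\,dr\,d\tau(\theta).
\]
For $r\ge1$ the weight $r^{d+1}(1+r^2)^{-(d+1)/2}$ is $\asymp1$. The weight is also bounded above by $1$ for all $r$, and it is the small-$r$ region that needs care.

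This gives the lower bound
\[
 \MMD^2\gtrsim_d\int_{\mathbb S^{d-1}}\int_{|r|\ge1}\bigl|\widehat{H_\mu(\theta,\cdot)}(r)\bigr|^2\,dr\,d\tau .
\]
For $|r|<1$, the bound $|\widehat{H}(r)-\widehat H(0)|\le |r|\int|t||H|\,dt$ shows that low frequencies are controlled by the smooth part. I would instead handle them with a compactly supported smooth cutoff. Each $H_\mu(\theta,\cdot)$ is supported in $[-\sqrt d,\sqrt d]$, so Cauchy--Schwarz gives
\[
 \SW_1(p,q)=\int\int|H_\mu|\,dt\,d\tau\le (2\sqrt d)^{1/2}\Bigl(\int\int|H_\mu|^2\,dt\,d\tau\Bigr)^{1/2}.
\]
By Plancherel, $\int|H_\mu|^2dt=(2\pi)^{-1}\int|\widehat H|^2dr$. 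It therefore remains to bound the low-frequency part $\int_{|r|<1}|\widehat H(r)|^2dr$ by the high-frequency part plus a term controlled by MMD.

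The main obstacle is this low-frequency step, because the weight vanishes like $r^{d+1}$ at $0$. I would try to bound the low-frequency part by a kernel-mean quantity at a fixed scale, as follows.
\begin{itemize}
\item Compactness of the support makes $\widehat{\theta_\#\mu}$ entire of exponential type $\sqrt d$, with $\widehat H(r)=\widehat{\theta_\#\mu}(r)/(ir)$.
\item A Paley--Wiener/Bernstein-type inequality for entire functions of bounded exponential type then bounds $\sup_{|r|<1}|\widehat H(r)|$ by the $L^2$ norm of $\widehat H$ on $[1,2]$, up to a constant depending on $d$.
\item Note that $|\widehat H(r)|\le 2\sqrt d$ uniformly, so $\widehat H$ has uniformly bounded type and size.
\end{itemize}
If the Paley--Wiener bound proves too delicate, the fallback is to prove the primal statement instead: the kernel of the map $\mu\mapsto$ (sliced CDF) has symbol comparable to $(1+|\omega|^2)^{-(d+1)/2}$ after localization by a cutoff $\chi$ equal to one on $\Omega$. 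The low-frequency defect then becomes a compact smoothing error, which is bounded by $\MMD$ on measures supported in $\Omega$.

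Combining the two frequency ranges and integrating over $\theta$ gives $\SW_1\le C_d\MMD_{K_{1/2,1}}$.
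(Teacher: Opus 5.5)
Your reduction matches the paper's up to the crux: the Fourier-slice identity $\widehat{H_\mu(\theta,\cdot)}(r)=\widehat\mu(r\theta)/(ir)$, the Cauchy--Schwarz bound $\SW_1(p,q)\le(2\sqrt d)^{1/2}\|H_\mu\|_2$, and the observation that the weight $r^{d+1}(1+r^2)^{-(d+1)/2}$ is harmless away from $r=0$. The low-frequency step, however, is not proved, and the tool you propose for it is false.

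There is no bound $\sup_{|r|<1}|F(r)|\le C_d\|F\|_{L^2([1,2])}$ for entire functions of fixed exponential type, even within the relevant class. Take a real $\chi\in C_c^\infty((-1,1))$ with $\int\chi\ne0$, let $T_N$ be the Chebyshev polynomial, and set $F_N(r)=T_N\bigl((2r^2-5)/3\bigr)\widehat\chi(r)$. The polynomial is even with real coefficients, so $F_N$ is the Fourier transform of a real smooth function $H_N$ supported in $(-1,1)$. Moreover, $H_N=H_\mu(e_1,\cdot)$ for the zero-mass measure $\mu=H_N'(x_1)\,dx_1\otimes\mathrm{Unif}([-1,1]^{d-1})$ on $\Omega$. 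Yet $\|F_N\|_{L^2([1,2])}\le\|\widehat\chi\|_{L^2([1,2])}$, while $|F_N(0)|=|T_N(-5/3)|\,|\widehat\chi(0)|\to\infty$.

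This ratio is invariant under rescaling $\mu$, so your absolute bound $|\widehat H|\le C_d$ does not rescue the step. At best it gives a Remez/two-constants estimate $\sup_{|r|<1}|F|\lesssim\|F\|_{L^2([1,2])}^{\alpha}$ with $\alpha<1$. That estimate is not homogeneous and cannot give a linear comparison. The fallback (``the low-frequency defect becomes a compact smoothing error bounded by MMD'') is not an argument either: a compact error term is not dominated by $\MMD$ without a separate absorption or injectivity argument, which you do not give.

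The missing idea is to control the low frequencies by $\|H_\mu(\theta,\cdot)\|_2$ itself and absorb them, rather than estimate them from the high frequencies. Since $H_\mu(\theta,\cdot)$ vanishes outside an interval of length $2\sqrt d$, Cauchy--Schwarz gives the homogeneous bound $|\widehat{H_\mu(\theta,\cdot)}(u)|^2\le 2\sqrt d\,\|H_\mu(\theta,\cdot)\|_2^2$ for all $u$. With $a=\pi/(4\sqrt d)$, the Fourier energy in $|u|<a$ is therefore at most $\pi\|H_\mu(\theta,\cdot)\|_2^2$. That is half of the Plancherel total $2\pi\|H_\mu(\theta,\cdot)\|_2^2$, so
\[
 \|H_\mu(\theta,\cdot)\|_2^2\le\frac1\pi\int_{|u|\ge a}\frac{|\widehat\mu(u\theta)|^2}{u^2}\,du .
\]
On $|u|\ge a$ your weight comparison holds with constants depending only on $d$. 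Averaging over $\theta$ and returning to polar coordinates then gives $\|H_\mu\|_2\le C_d\MMD_{K_{1/2,1}}(p,q)$, and together with your Cauchy--Schwarz step this is exactly the paper's proof. The window must shrink like $1/\sqrt d$: on your window $|r|<1$ the same estimate only gives $4\sqrt d\,\|H_\mu(\theta,\cdot)\|_2^2$, which exceeds the total energy once $d\ge3$.
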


\begin{proof}

Let $\mu=p-q$ with $p,q\in\cP(\Omega)$ and put $s=(d+1)/2$.
For each $\theta$, the function $H_\mu(\theta,\cdot)$ is bounded and
vanishes outside $[-\sqrt d,\sqrt d]$, so it belongs to
$L^1(\R)\cap L^2(\R)$.
Differentiating the cumulative mass gives the projected measure:
$\partial_tH_\mu=\theta_\#\mu$ in the sense of distributions.
Using the Fourier convention of Appendix~\ref{app:wavelet-proofs},
we obtain, for $u\ne0$,

\[
 \widehat{H_\mu(\theta,\cdot)}(u)
 =\frac{\widehat\mu(u\theta)}{iu}.
\]

Apply one-dimensional Plancherel in $t$ and average over $\theta$:

\[
 \|H_\mu\|_2^2
 =\frac1{2\pi}\int_{\mathbb S^{d-1}}\int_{\R}
       \frac{|\widehat\mu(u\theta)|^2}{u^2}\,du\,d\tau(\theta).
\]

The integrals over $u<0$ and $u>0$ agree after replacing $\theta$ by
$-\theta$. Converting the positive part to polar coordinates gives

\begin{equation}\label{eq:sw-cdf-energy}
 \|H_\mu\|_{L^2(\tau\otimes dt)}^2
 =\frac{1}{\pi|\mathbb S^{d-1}|}
   \int_{\R^d}|\widehat\mu(\omega)|^2
          \|\omega\|_2^{-d-1}\,d\omega.
\end{equation}

The threshold interval $[-\sqrt d,\sqrt d]$ has length $2\sqrt d$.
Since $H_\mu$ vanishes outside it, Cauchy--Schwarz gives
$\|\mu\|_{\mathrm{SW}}\le\sqrt{2\sqrt d}\|H_\mu\|_2$.

Compact support also lets us avoid the singular Fourier weight near zero.
For each $\theta$, Cauchy--Schwarz gives
\[
 \left|\widehat{H_\mu(\theta,\cdot)}(u)\right|^2
 \le2\sqrt d\,\|H_\mu(\theta,\cdot)\|_2^2.
\]
Set $a=\pi/(4\sqrt d)$. Integrating this bound over $|u|<a$ gives at most
$\pi\|H_\mu(\theta,\cdot)\|_2^2$, whereas Plancherel gives
$2\pi\|H_\mu(\theta,\cdot)\|_2^2$ over the whole line.
Thus at least half the Fourier energy lies in $|u|\ge a$, and
\[
 \|H_\mu(\theta,\cdot)\|_2^2
 \le\frac1\pi\int_{|u|\ge a}
       \frac{|\widehat\mu(u\theta)|^2}{u^2}\,du.
\]
Average over $\theta$ and use polar coordinates as above. Since
$2s=d+1$ and $a$ depends only on $d$,
\[
 \begin{aligned}
 \|H_\mu\|_2^2
 &\le\frac{2}{\pi|\mathbb S^{d-1}|}
    \int_{\|\omega\|_2\ge a}|\widehat\mu(\omega)|^2
                  \|\omega\|_2^{-d-1}\,d\omega\\
 &\le C_d\int_{\R^d}|\widehat\mu(\omega)|^2
                    (1+\|\omega\|_2^2)^{-s}\,d\omega
 \le C_d\MMD_{K_{1/2,1}}(p,q)^2.
 \end{aligned}
\]
The last inequality is the Laplace spectral formula
\eqref{eq:matern-spectrum}. Combining this with
$\|\mu\|_{\mathrm{SW}}\le\sqrt{2\sqrt d}\|H_\mu\|_2$ proves the lemma.
\end{proof}

\begin{proof}[Proof of the upper bound in Theorem~\ref{thm:sliced-wasserstein}]

Combining Lemma~\ref{lem:sw-laplace-comparison} with the internal-measure
bound in the proof of Theorem~\ref{thm:matern-rates}, specialized to
$\nu=1/2$ and $\ell=1$, gives
\[
 \sup_X\E\SW_1(p_X,q)
 \le C_{d,\mathcal W}\Neff^{-\frac{d+1}{2d}}.
\]
Since every one-dimensional projection is supported on
$[-\sqrt d,\sqrt d]$, the standard empirical-CDF bound gives
$s_{\mathcal F_{\mathrm{SW}}}(m)\le\sqrt{d/m}$. Equation
\eqref{eq:moment-dataset} now proves \eqref{eq:sw-upper}, and privacy follows
from Theorem~\ref{thm:matern-rates}.
\end{proof}

\subsection{Proof of the lower bound in Theorem~\ref{thm:sliced-wasserstein}}

To prove the lower bound, we will construct a random product family of databases $X^U$, parametrized by $U$ uniformly distributed
over $A^k$ parameter vectors, and a radius $\rho$ such that
\[
 \sup_q\Prob\{\SW_1(p_{X^U},q)\le\rho\}\le A^{-k/2}.
\]
Thus no fixed output can approximate more than an exponentially small
fraction of the family. Steps 1--5 establish this property by moving groups
of records inside disjoint cells, smoothing and linearizing the movements,
and controlling the geometry of their projected CDFs. In Step 6, group
privacy bounds the information that a private output $Z$ carries about $U$.
Combining this information bound with the displayed small-ball bound shows that
$\SW_1(p_{X^U},Z)>\rho$ with constant probability. Finally, choosing the
number and width of the cells at the privacy scale gives
$\rho\asymp_d(n\eps)^{-(d+1)/(2d)}$.

\paragraph{Step 1: Moving records inside cells.}

All constants below depend only on $d$. Let
$M=\lceil(d+2)/2\rceil$, so that $2M>d+1$, and fix an even smooth
probability density $\kappa$ supported in the ball of radius
$\min\{1/64,1/(4\pi\sqrt{d+8})\}$. Since
$\|\omega\|_2\le\pi\sqrt{d+8}$ on $[-\pi,\pi]^d+2\pi e_1$, we have
$\widehat\kappa(\omega)\ge\cos(1/4)>1/2$ there. Set

\[
 \psi=\frac{\partial_1^{M-1}\kappa}
 {\|\partial_1^{M-1}\kappa\|_{L^1}},
 \qquad \eta=-\partial_1\psi,\qquad \zeta=\partial_1^2\psi.
\]

The denominator is nonzero, since otherwise $\kappa$ would be a compactly
supported polynomial in its first coordinate. Hence
$\|\psi\|_{\mathrm{TV}}=1$. Moreover, integration by parts shows that
$\eta$ and $\zeta$ annihilate every polynomial of total degree less than
$M$, and all three are supported in the ball of radius $1/64$.

For an integer $L\ge2$ to be chosen later, partition
$[-1/2,1/2]^d$ into $k=L^d$ equal cells of side length $h=1/L$, and let
$a_i$ be their centers. Write $\psi_h=(x\mapsto hx)_\#\psi$. For
$\nu\in\{\psi,\eta,\zeta\}$ and $|u|\le1/16$, define
\[
 \nu_{i,u}=(x\mapsto a_i+h(x+ue_1))_\#\nu.
\]
Thus $u$ moves a record from $a_i$ to $a_i+hue_1$, and
$\delta_{a_i+hue_1}*\psi_h=\psi_{i,u}$. These measures have total
variation bounded only in terms of $d$, and their supports lie within
distance $h/4$ of $a_i$, hence in $\Omega$.

\paragraph{Step 2: Convolution contraction.}
Convolution with $\psi_h$ cannot increase sliced distance.
For every zero-mass $\mu$,

\begin{equation}\label{eq:sw-convolution}
 \|\mu*\psi_h\|_{\mathrm{SW}}\le\|\mu\|_{\mathrm{SW}}.
\end{equation}

For each direction, the projected CDF of $\mu*\psi_h$ is the convolution
of $H_\mu(\theta,\cdot)$ with $\theta_\#\psi_h$.
The $L^1$ convolution inequality gives

\[
 \|H_{\mu*\psi_h}(\theta,\cdot)\|_{L^1(dt)}
 \le\|H_\mu(\theta,\cdot)\|_{L^1(dt)}
       |\theta_\#\psi_h|(\R)
 \le\|H_\mu(\theta,\cdot)\|_{L^1(dt)},
\]
since projection does not increase total variation and
$|\psi_h|(\R^d)\le1$.
Integrating over $\theta$ proves \eqref{eq:sw-convolution}.

\paragraph{Step 3: CDF geometry.}
Put
\[
 H_i=H_{\eta_{i,0}},\qquad H=(H_i)_{i=1}^k,\qquad
 F(c)=\left\|\sum_i c_iH_i\right\|_1,
\]
with all CDF norms taken under $\tau\otimes dt$.
Thus $F(c)$ measures the sliced-distance separation of the linearized
perturbations with coefficients $c$.
Let $G=\int HH^\top$ be the Gram matrix of these CDFs, so its quadratic
form is the squared $L^2$ norm of their linear combination. The following
matrix and pointwise bounds will give the volume estimate
\eqref{eq:sw-volume}:

\begin{equation}\label{eq:sw-stencil-estimates}
 c_dhI_k\preceq G\preceq C_dhI_k,
 \qquad \sup_{\theta,t}\|H(\theta,t)\|_2^2\le C_dkh.
\end{equation}

For the lower matrix bound, write $a_i=a_0+hz_i$ with distinct
$z_i\in\mathbb Z^d$, and put
\[
 T_c(\xi)=\sum_i c_i e^{-i\xi^\top z_i}.
\]
The Fourier transform of $\sum_i c_i\eta_{i,0}$ is
\[
 e^{-i\omega^\top a_0}\widehat\eta(h\omega)T_c(h\omega).
\]
Changing variables $\xi=h\omega$ in \eqref{eq:sw-cdf-energy} gives

\[
 c^\top Gc
 =\frac{h}{\pi|\mathbb S^{d-1}|}
       \int_{\R^d}w(\xi)|T_c(\xi)|^2\,d\xi,\qquad
 w(\xi)=|\widehat\eta(\xi)|^2\|\xi\|_2^{-d-1}.
\]

Restrict the nonnegative integral to $[-\pi,\pi]^d+2\pi e_1$.
On this shifted cube,

\[
 |\widehat\eta(\omega)|
 =\frac{|\omega_1|^M|\widehat\kappa(\omega)|}
 {\|\partial_1^{M-1}\kappa\|_{L^1}},
 \qquad \omega_1\in[\pi,3\pi].
\]

Since $|\omega_1|\ge\pi$, $\widehat\kappa\ge1/2$, and the
normalizing denominator depends only on $d$, we have
$|\widehat\eta(\omega)|\ge c_d>0$ on this cube.
The frequencies in the cube are bounded, so
$w(\omega)\ge c_d>0$ there. Since $T_c$ is
$2\pi\mathbb Z^d$-periodic, Parseval's identity gives

\[
 \begin{aligned}
 c^\top Gc
 &\ge c_dh\int_{[-\pi,\pi]^d+2\pi e_1}|T_c(\xi)|^2\,d\xi=c_dh(2\pi)^d\|c\|_2^2.
 \end{aligned}
\]

This proves the lower matrix bound in \eqref{eq:sw-stencil-estimates}.
The upper bound follows from the uniform estimate for shifted CDFs below.

For the pointwise estimate, $|H_i|\le\|\eta\|_{\mathrm{TV}}$.
Since $\eta$ has zero mass and the support of $\eta_{i,0}$ is within
distance $h/4$ of $a_i$, its CDF vanishes unless
$|t-\theta^\top a_i|\le h/4$.
For fixed $\theta,t$, these centers lie in a slab of width $h/2$.
Their lattice cubes have disjoint interiors and volume $h^d$.
Their union lies in the intersection of a fixed bounded cube and a slab
of width $C_dh$. This intersection has volume at most $C_dh$.
There are therefore at most $C_dh/h^d=C_dL^{d-1}=C_dkh$ such centers.
Summing $|H_i|^2$ proves the pointwise bound.

The Taylor remainder contains the second derivative $\zeta$ at shifted
positions. We therefore bound sums of the shifted measures uniformly:
for $\nu=\eta$ or $\zeta$ and $|u_i|\le1/16$,

\begin{equation}\label{eq:sw-shifted-synthesis}
 \left\|\sum_i c_i\nu_{i,u_i}\right\|_{\mathrm{SW}}
 \le C_d\sqrt h\|c\|_2.
\end{equation}

Let $\rho$ be the density of $\sum_i c_i\nu_{i,u_i}$, and put
\[
 f(x)=\sum_i c_i h^{-d}\kappa\bigl((x-a_i)/h-u_i e_1\bigr).
\]
For $m=M$ when $\nu=\eta$ and $m=M+1$ when $\nu=\zeta$,
\[
 \rho=\frac{\pm h^m\partial_1^m f}
 {\|\partial_1^{M-1}\kappa\|_{L^1}},
\]
with a negative sign for $\eta$ and a positive sign for $\zeta$.
The shifted supports lie within distance $h/4$ of their respective
centers and are disjoint. Scaling therefore gives
\[
 \|f\|_2^2=h^{-d}\|\kappa\|_2^2\|c\|_2^2,
 \qquad
 \|\rho\|_2^2=h^{-d}\|\nu\|_2^2\|c\|_2^2.
\]
Since $\rho$ has zero mass, apply \eqref{eq:sw-cdf-energy} and split
its Fourier integral at $\|\omega\|_2=h^{-1}$:
\[
 \begin{aligned}
 \|H_\rho\|_2^2
 &\le C_dh^{2m}\int_{\|\omega\|_2\le h^{-1}}
       \|\omega\|_2^{2m-d-1}|\widehat f(\omega)|^2\,d\omega+C_dh^{d+1}\int_{\|\omega\|_2>h^{-1}}
       |\widehat\rho(\omega)|^2\,d\omega\\
 &\le C_dh^{d+1}(\|f\|_2^2+\|\rho\|_2^2)
 \le C_dh\|c\|_2^2.
 \end{aligned}
\]
Here $2m\ge d+1$ gives
$h^{2m}\|\omega\|_2^{2m-d-1}\le h^{d+1}$ on the first region;
on the second, $\|\omega\|_2^{-d-1}\le h^{d+1}$.
Plancherel gives the second inequality. All constants depend only on
$d$, uniformly over the allowed shifts.
Taking $\nu=\eta$ and all $u_i=0$ proves $G\preceq C_dhI_k$ in
\eqref{eq:sw-stencil-estimates}.
Cauchy--Schwarz over
$\mathbb S^{d-1}\times[-\sqrt d,\sqrt d]$ gives
\eqref{eq:sw-shifted-synthesis}.

\paragraph{Step 4: Taylor linearization.}
The first-order Taylor formula for each translated density gives,
uniformly for $u\in[-a,a]^k$, $0<a\le1/16$,

\begin{equation}\label{eq:sw-linearization}
 \sum_i\psi_{i,u_i}=\sum_i\psi_{i,0}+\sum_i u_i\eta_{i,0}+\mathcal R(u),
 \qquad \|\mathcal R(u)\|_{\mathrm{SW}}\le C_da^2\sqrt{kh}.
\end{equation}

Indeed, $\partial_u\psi_{i,u}=\eta_{i,u}$ and
$\partial_u^2\psi_{i,u}=\zeta_{i,u}$.
Taylor's integral formula gives
\[
 \mathcal R(u)=\int_0^1(1-t)\sum_i u_i^2\zeta_{i,tu_i}\,dt.
\]
Apply \eqref{eq:sw-shifted-synthesis} with coefficients $u_i^2$ and use
$\|(u_i^2)_i\|_2\le a^2\sqrt k$ to obtain the remainder bound.

\paragraph{Step 5: Probability of a small error.}

To count record positions that give small error, we first bound the
volume of coefficient vectors with small
$F(c)=\|\sum_i c_i\eta_{i,0}\|_{\mathrm{SW}}$.
A standard volume inequality for subspaces of $L^1$, applied below,
and \eqref{eq:sw-stencil-estimates} give
\begin{equation}\label{eq:sw-volume}
 \operatorname{vol}_k\{c:F(c)\le t\}
 \le\left(\frac{C_dt}{\sqrt{kh}}\right)^k.
\end{equation}

Normalize the CDF vector by setting $g=G^{-1/2}H$.
The bounds in \eqref{eq:sw-stencil-estimates} give
\[
 \int gg^\top\,d\tau\,dt=I_k,
 \qquad \|g\|_2\le C_d\sqrt k.
\]
For each nonzero $g$, split the weight
$\|g\|_2^2\,d\tau\,dt$ equally between the antipodal directions
$g/\|g\|_2$ and $-g/\|g\|_2$.
Points with $g=0$ contribute no weight.
This defines a nonnegative finite even Borel measure $\gamma$ on
$\mathbb S^{k-1}$ satisfying
\[
 \int vv^\top\,d\gamma(v)=I_k,
 \qquad \gamma(\mathbb S^{k-1})
 =\int\|g\|_2^2\,d\tau\,dt=\tr(I_k)=k.
\]
Thus $\gamma$ is isotropic. Put
$N(y)=\int|v^\top y|\,d\gamma(v)$.
Ball's volume inequality, in the form for even isotropic Borel measures
in \cite[Theorem~1, $p=1$]{lutwak2004volume}, states that
\[
 \operatorname{vol}_k\{y:N(y)\le t\}
 \le\frac{(2t)^k}{k!},\qquad t\ge0.
\]
Indeed, the unit ball defined by $N$ is the body $Z_1^*(\gamma)$ in
that theorem, whose volume is at most $2^k/k!$; homogeneity gives the
bound for every $t$.

By the definition of $\gamma$ and $g=G^{-1/2}H$,
\[
 N(G^{1/2}c)=\int\|g\|_2|c^\top H|\,d\tau\,dt
 \le C_d\sqrt k F(c).
\]
Hence the map $c\mapsto G^{1/2}c$ sends $\{c:F(c)\le t\}$ into
$\{y:N(y)\le C_d\sqrt k\,t\}$.
Accounting for the change in volume gives
\[
 \operatorname{vol}_k\{F\le t\}
 \le(\det G)^{-1/2}\frac{(2C_d\sqrt k\,t)^k}{k!}
 \le\left(\frac{C_dt}{\sqrt{kh}}\right)^k,
\]
where the last step uses $\det G\ge(c_dh)^k$ and $k!\ge(k/e)^k$.
This proves \eqref{eq:sw-volume}.
The upper Gram bound also gives $F(c)\le C_d\sqrt h\|c\|_2$.

Let $A\ge e^{16}$ be an integer and then choose $a\in(0,1/16]$.
The probability bound below determines how large $A$ must be; the
Taylor remainder determines how small $a$ must be after $A$ is fixed.
Both choices use only $d$.
Let the $U_i$ be independent and uniform on the midpoints of the $A$
equal subintervals of $[-a,a]$.
To keep $r\eps\le1$ while placing $r$ records in each of $k$ cells, take

\[
 r=\lfloor1/\eps\rfloor,\qquad
 L=\left\lfloor(n/r)^{1/d}\right\rfloor,\qquad k=L^d,
 \qquad h=1/L.
\]

There are $A^k$ equally likely vectors $U=(U_1,\ldots,U_k)$.
In cell $i$, move the point $a_i$ by $hU_i$ along the first coordinate
and put $r$ copies of $a_i+hU_ie_1$ in the database $X^U$.
Since $k=L^d\le n/r$, fill the remaining $n-kr$ rows with zero.
The database has exactly $n$ records, all in $\Omega$, and

\[
 p_{X^U}=\frac rn\sum_i\delta_{a_i+hU_ie_1}
                   +\left(1-\frac{kr}{n}\right)\delta_0.
\]

Put $\rho=(r/n)(a/A)\sqrt{kh}$.
For any fixed probability measure $q$, few vectors $U$ put the empirical
measure $p_{X^U}$ within sliced Wasserstein distance $\rho$ of $q$:

\begin{equation}\label{eq:sw-prior-ball}
 \sup_q\Prob\{\SW_1(p_{X^U},q)\le\rho\}
 \le(C_d/A)^k\le A^{-k/2}.
\end{equation}

For fixed $A$, choose $a\le1/(C_dA)$ so that the remainder in
\eqref{eq:sw-linearization} is at most $(a/A)\sqrt{kh}$.
Fix $q$. If the event in \eqref{eq:sw-prior-ball} is nonempty, choose
one vector $u_0$ in it.
Every other vector $u$ in the event satisfies
$\SW_1(p_{X^u},p_{X^{u_0}})\le2\rho$ by the triangle inequality.
The records at zero cancel in the difference of the empirical measures.
Since $\delta_{a_i+hu_ie_1}*\psi_h=\psi_{i,u_i}$,
\eqref{eq:sw-convolution} gives

\[
 \left\|\sum_i(\psi_{i,u_i}-\psi_{i,(u_0)_i})\right\|_{\mathrm{SW}}
 \le\frac{2n\rho}{r}=2(a/A)\sqrt{kh}.
\]

Apply \eqref{eq:sw-linearization} to $u$ and $u_0$.
The two remainder terms are each at most $(a/A)\sqrt{kh}$, so
$F(u-u_0)\le4(a/A)\sqrt{kh}$.

Place a cube of side $2a/A$ around every vector $u$ in this event.
These cubes have disjoint interiors.
Any point $\widetilde u$ in such a cube is within Euclidean distance
$(a/A)\sqrt k$ of its center $u$.
The upper Gram bound gives
\[
 F(\widetilde u-u)\le C_d(a/A)\sqrt{kh},
\]
and hence
\[
 F(\widetilde u-u_0)\le(4+C_d)(a/A)\sqrt{kh}.
\]
The volume bound \eqref{eq:sw-volume} therefore bounds the total volume
of these cubes. Each cube has volume $(2a/A)^k$, and each of its centers
has probability $A^{-k}$. Thus the probability in
\eqref{eq:sw-prior-ball} is the total volume divided by
$A^k(2a/A)^k=(2a)^k$. This gives the first inequality in
\eqref{eq:sw-prior-ball}.
Choose $A$ large enough that $C_d/A\le A^{-1/2}$, then choose $a$ as above.
Both constants depend only on $d$.

\paragraph{Step 6: Contradiction with privacy.}

Assume $0<\eps\le1$ and $0\le\delta\le\eps/100$.
Let $Z=\mathcal M(X^U)$ be the probability measure returned by any
$(\eps,\delta)$-DP mechanism.
Changing $U_i$ moves the $r$ copies of the record in cell $i$ and leaves
all other rows unchanged. Hence, when $u,u'$ agree except in coordinate
$i$, Lemma~\ref{lem:group-privacy} gives the two inequalities there for
the corresponding output laws with
\[
 \alpha=r\eps\le1,\qquad
 \beta=\delta\sum_{j=0}^{r-1}e^{j\eps}
       \le e\delta/\eps.
\]

Fix $U_1,\ldots,U_{i-1}$ and average over the later coordinates.
These later coordinates are independent of $U_i$, so both sides of the
privacy inequality are averaged with the same weights. The resulting
conditional output distributions satisfy the same inequality.
Given the earlier coordinates, $U_i$ is still uniform on $A$ values.
Write $P_u$ for the conditional law of $Z$ when $U_i=u$, and let
$Q=A^{-1}\sum_uP_u$ be their average. Since $P_u\le A Q$, the density
$t_u=dP_u/dQ$ exists and satisfies $0\le t_u\le A$.
Averaging the privacy inequality over the other value of $U_i$ gives
$P_u(B)\le e^\alpha Q(B)+\beta$ for every measurable $B$.
Apply this to $B=\{t_u>e^\alpha\}$ to obtain
\[
 \int(t_u-e^\alpha)_+\,dQ\le\beta.
\]
For $0\le t\le A$, with $0\log0=0$,
\[
 t\log t\le\alpha t+(1+\log A)(t-e^\alpha)_+.
\]
Indeed, this is immediate when $t\le e^\alpha$; above $e^\alpha$,
the function $t\log t-\alpha t$ starts at zero and has derivative
$1+\log t-\alpha\le1+\log A$.
Integrating, averaging over $u$, and then averaging over the earlier
coordinates gives
\[
 \begin{aligned}
 I(U_i;Z\mid U_1,\ldots,U_{i-1})
 &=\E\left[\frac1A\sum_u\int t_u\log t_u\,dQ\right]\le\alpha+\beta(1+\log A).
 \end{aligned}
\]
The calculation holds for every fixed choice of the earlier coordinates,
and directly applies to any measurable output space.
Since $\alpha\le1$, $\beta\le e/100$, and $\log A\ge16$,
\[
 \alpha+\beta(1+\log A)
 \le\left(\frac1{16}+\frac{17e}{1600}\right)\log A
 <\tfrac18\log A.
\]
Adding over $i$ by the chain rule \cite[Section~2.5]{cover2006} gives

\begin{equation}\label{eq:sw-information}
 I(U;Z)\le k[\alpha+\beta(1+\log A)]
           \le\tfrac18k\log A
 \quad\text{if }\delta\le\eps/100.
\end{equation}

Call the output successful if $\SW_1(p_{X^U},Z)\le\rho$, and write
$B=A^{-k/2}$.
Equation~\eqref{eq:sw-prior-ball} says that any fixed output is successful
for at most a fraction $B$ of the possible databases.
If $U$ and $Z$ were drawn independently, the success probability would
therefore be $p_0\le B$. Let $p$ be the success probability when $Z$
is the actual output on $X^U$.
Apply relative-entropy data processing to the indicator of success
\cite[Section~2.7]{cover2006}, with $D(P\|Q)$ denoting relative entropy:

\[
 \begin{aligned}
 I(U;Z)&=D(P_{U,Z}\|P_U\otimes P_Z)\\
 &\ge p\log\frac p{p_0}+(1-p)\log\frac{1-p}{1-p_0}\ge p\log(1/B)-\log2.
 \end{aligned}
\]

The last inequality uses the bound $\log2$ on binary entropy.
Combining this with \eqref{eq:sw-prior-ball} and
\eqref{eq:sw-information} gives

\[
 \Prob\{\SW_1(p_{X^U},Z)\le\rho\}
 \le\frac{I(U;Z)+\log2}{\log(1/B)}
 \le\frac14+\frac{2\log2}{k\log A}<\frac12,
\]
so the error exceeds $\rho$ with probability greater than $1/2$.

Since $1/(2\eps)\le r\le1/\eps$, we have
$n\eps\le n/r\le2n\eps$.
For $n\eps\ge T_d$ with $T_d$ sufficiently large,
$L\ge2$ and $\tfrac12(n/r)^{1/d}\le L\le(n/r)^{1/d}$.
Thus $k\asymp_d n\eps$ and $2^{-d}\le kr/n\le1$.
Using $h=k^{-1/d}$ in the definition of $\rho$ gives

\[
 \rho=(a/A)(kr/n)\sqrt{h/k}
      =(a/A)(kr/n)k^{-\frac{d+1}{2d}}
      \asymp_d(n\eps)^{-\frac{d+1}{2d}}.
\]

The expected error is at least $\rho/2$ for a random database $X^U$.
At least one of these databases therefore has expected error at least
$\rho/2$. This proves the lower bound in
Theorem~\ref{thm:sliced-wasserstein}.

\section{ReLU IPMs}

\label{app:neural-ipm}

We prove the upper bound in Theorem~\ref{thm:neural-ipm} using four private
releases to estimate ReLU averages, and the lower bound by comparison with
Mat\'ern MMD. All constants in this appendix depend only on $d$.

\begin{proof}[Proof of Lemma~\ref{lem:neural-single-neuron}]
For $x\in\Omega$ and $\norm{w}_1+|b|\le1$,
\[
 0\le\sigma(w^\top x+b)\le\norm{w}_1+|b|\le1.
\]
Hence all allowed neurons and networks are bounded and integrable with
respect to $p,q$, since $|h(x)|\le\sum_j|a_j|\le1$.
For
\[
 h(x)=\sum_{j=1}^r a_j\sigma(w_j^\top x+b_j)\in\mathcal F_{\mathrm{NN}},
\]
linearity, the triangle inequality, and $\sum_j|a_j|\le1$ give
\[
 \left|\int h\,d(p-q)\right|
 \le\sum_{j=1}^r|a_j|
     \left|\int\sigma(w_j^\top x+b_j)\,d(p-q)(x)\right|
 \le\sup_{\norm{w}_1+|b|\le1}
     \left|\int\sigma(w^\top x+b)\,d(p-q)(x)\right|.
\]
Taking the supremum over $h$ proves one inequality in
\eqref{eq:neural-ipm}. The reverse follows because each allowed neuron
belongs to $\mathcal F_{\mathrm{NN}}$ with $r=1$ and $a_1=1$.
\end{proof}

Put
\[
 R_d\coloneqq\sqrt{d+1},\qquad u_x\coloneqq(1,x),\qquad
 r_x\coloneqq\norm{u_x}_2\le R_d,\qquad
 z_x\coloneqq u_x/r_x\in\mathbb S^d.
\]
Let $\tau$ be uniform probability measure on $\mathbb S^d$ and write
$a_v(x)=\sigma(v^\top u_x)$; for $v=(b,w)$ this includes the bias as
$v^\top u_x=b+w^\top x$. For a finite signed measure $\mu$, write
$\mu a_v=\int a_v\,d\mu$, the difference of neuron averages when
$\mu=p-q$. Every allowed parameter has Euclidean norm at most one,
while $\|v\|_1\le R_d\|v\|_2$ makes $v/R_d$ allowed for every
$v\in\mathbb S^d$. Positive homogeneity of ReLU therefore gives
\[
 R_d^{-1}\sup_{v\in\mathbb S^d}|\mu a_v|
 \le d_{\mathrm{NN}}(p,q)
 \le\sup_{v\in\mathbb S^d}|\mu a_v|,
 \qquad \mu=p-q.
\]

\subsection{\texorpdfstring{{Estimating neuron averages privately}}{Estimating neuron averages privately}}

\paragraph{\texorpdfstring{{Four private releases.}}{Four private releases.}}

Set

\[
 \Lambda\coloneqq\log(e\Neff)\log(e/\delta),\qquad
 \mathcal L\coloneqq\Lambda^5.
\]

The factor $\mathcal L$ bounds logarithmic terms in all noise and sampling
errors. If $\Neff\le C_d\mathcal L$, return a fixed public dataset, whose
IPM loss is at most one; otherwise choose

\[
 h=(C_d\mathcal L/\Neff)^{1/\max\{d,3\}}.
\]

Here $h$ bounds the distance to a selected direction. Return a fixed
public dataset also when $h>c_d$, which again gives the claimed bound;
it remains to consider $h\le c_d$.

Fix in advance a public grid $G$ that rounds each record within distance
$\eta=\Neff^{-2}$ and a public net $V_0\subset\mathbb S^d$ of mesh $\eta$.
Both sets have $O_d(\eta^{-d})$ points, hence polynomial size in $\Neff$.
Rounding records separately preserves replacement adjacency and changes
each mean of $a_v$ by at most $\eta$.
Write $H_v(x)=\mathbf1_{\{v^\top u_x>0\}}$ for the indicator that
the ReLU with direction $v$ is active at $x$.
We choose the zero subgradient of ReLU at zero, so $u_xH_v(x)$ is a
subgradient of $a_v(x)$ with respect to $v$ for every $v,x$.
This convention allows zero inner products; no perturbation of the public
net is needed.
In Algorithm~\ref{alg:features}, use $f=(a_v)_{v\in V_0}$,
$\|\cdot\|_*=\|\cdot\|_\infty$, and $\eopt=\eta$: fit $q$ on $G$ so that
the largest absolute difference between its neuron averages and their
private estimates is at most $\eta$ above the minimum.
These moment functions are fixed and public; only the queries inside
$\mathcal A$ may depend on earlier private outputs.
Each of the four releases receives budget $(\eps/4,\delta/4)$.

\paragraph{\texorpdfstring{{PREM estimates and sampling errors.}}{PREM estimates and sampling errors.}}

PREM (Private Relative Error Multiplicative Weight Update)
\cite{ghazi2025prem} privately answers a finite collection of bounded nonnegative
statistical queries by constructing a weighted histogram. Its error has
a relative part, proportional to the true query mean, and an additive
floor. This is useful here because activation disagreements and
nonnegative ReLU approximation remainders have small means.
We use PREM twice: first to select representative neuron directions,
then to estimate the approximation remainders. We state its accuracy
guarantee below before specifying the privacy and noise parameters.

\smallskip
\noindent\textit{Estimation guarantee.}
For a finite list of $[0,1]$-valued query functions, use the binary-query
version of PREM \cite[Theorem~3.1]{ghazi2025prem} together with the exact
threshold decomposition below. PREM returns a nonnegative count histogram
on $G$; dividing weighted query counts by the input size estimates query
means.
For each of the two calls, draw a fresh uniform sample of size
$s=\lfloor c n\eps\rfloor$ without replacement, keep its indices hidden,
and run PREM with privacy $(1,\delta_0)$, where
\[
 \delta_0=\min\{\delta/(4s/n),1/4\}
\]
and $c>0$ is a small absolute constant. Subsampling gives privacy
\[
 (\log(1+(s/n)(e-1)),(s/n)\delta_0)
 \le(\eps/4,\delta/4)
\]
\cite[Theorem~9]{balle2018subsampling}.
Use indexed records in the subsampling theorem to allow repeated values;
PREM discards the indices and uses only $G$. Choose $c\le1/(4(e-1))$, so
\[
 \log(1+(s/n)(e-1))\le(s/n)(e-1)\le\eps/4,
\]
and the choice of $\delta_0$ gives $(s/n)\delta_0\le\delta/4$.
In the range considered, $s\asymp n\eps$ and $s\gtrsim\Neff$.
For the sample empirical measure $p_S$ and count histogram $\widehat\nu$,
PREM gives the following bound simultaneously for all queries, except
with probability $\beta=\Neff^{-2}$:

\[
 \left|s^{-1}\int w\,d\widehat\nu-\int w\,dp_S\right|
 \le C_d\left\{\zeta\int w\,dp_S
       +\frac{\mathcal L}{\zeta\Neff}\right\}.
\]

On the finite set $G$, the binary guarantee extends exactly to
$[0,1]$-valued queries as follows.
For a nonzero $w:G\to[0,1]$, list its distinct positive values as
$0=t_0<t_1<\cdots<t_m\le1$, with $m\le|G|$. Then
\[
 w(x)=\sum_{i=1}^m(t_i-t_{i-1})
       \mathbf1_{\{w(x)\ge t_i\}}.
\]
The zero query needs no thresholds. The coefficients are nonnegative
and sum to $t_m\le1$. Since all estimates use the same histogram,
a simultaneous binary-query error bound with relative part $\zeta$
and additive part $a$ implies
\[
 \begin{aligned}
 \left|s^{-1}\int w\,d\widehat\nu-\int w\,dp_S\right|
 &\le\sum_{i=1}^m(t_i-t_{i-1})
 \left|s^{-1}\widehat\nu\{w\ge t_i\}-p_S\{w\ge t_i\}\right|\le\zeta\int w\,dp_S+a.
 \end{aligned}
\]
Each original query therefore requires at most $|G|$ binary queries,
with no approximation error or additional privacy cost.
In the analysis below, $Q$ counts binary queries after this expansion.
For an adaptive call, first fix the previous private outputs; these fix
the original query functions and all their thresholds.

The following parameter choices give the logarithmic factor.

\smallskip
\noindent\textit{Privacy and noise parameters.}
For $Q$ binary queries, use PREM's schedule of $I$ outer iterations
and at most $T$ RangeMonitor calls per iteration, with
\[
 I=O(\log(es)),\qquad T=O(\log(e|G|)/\zeta^2).
\]
Set
\[
 \beta'=\beta/(2IT),\qquad \delta'=\delta_0/(4IT).
\]
Set the RangeMonitor noise parameter to

\[
 a_{\mathrm{RM}}=\frac{c}{\sqrt{IT}[\log(eIT/\delta_0)]^{3/2}}.
\]

Use this noise parameter in Algorithm~2 of \cite{ghazi2025prem}, with margin
\[
 \alpha_0=C a_{\mathrm{RM}}^{-1}\log(eQ^2/\beta')
\]
for a sufficiently large absolute constant $C$.
The lower threshold subtracts $\alpha_0/2$ from the current estimated
query count and divides by $1+\zeta$; the upper adds $\alpha_0/2$ and
divides by $1-\zeta$.
This margin dominates the simultaneous accuracy radius over at most $Q^2$
RangeMonitor queries, with failure probability at most $\beta'$.

Proposition~2.2 of \cite{ghazi2025prem} gives
\[
 \eps_{\mathrm{RM}}=O(a_{\mathrm{RM}}\log(eIT/\delta_0))
\]
per call. With slack $\delta_0/4$, advanced composition
\cite[Theorem~3.20]{dwork2014} bounds all $IT$ calls by
$(1/2,\delta_0/2)$ for sufficiently small $c$: its leading term is
at most
\[
 C a_{\mathrm{RM}}\sqrt{IT}[\log(eIT/\delta_0)]^{3/2},
\]
and its quadratic term is $O(c^2)$.
Algorithm~3 of \cite{ghazi2025prem} also estimates the active sample count
at each outer iteration, adding independent Laplace noise of scale
$a_{\mathrm{mass}}^{-1}$, where

\[
 a_{\mathrm{mass}}=
 \frac{1}{4\sqrt{2I}\log(I/\delta_0)}.
\]

Conditional on previous outputs, this count has replacement sensitivity
one; advanced composition bounds the privacy cost of these $I$ releases
by the remaining half of $(1,\delta_0)$. Their absolute noise is simultaneously at most
$a_{\mathrm{mass}}^{-1}\log(I/\beta')$ with the allocated failure probability.
Choose the termination parameter
\[
 \alpha=C\left\{I\alpha_0+
 \frac{a_{\mathrm{mass}}^{-1}}{\zeta}\log(I/\beta')\right\},
\]
with a sufficiently large absolute constant, stopping when the noisy
active count is at most $\alpha/4$.
The first term covers accumulated margin errors; the second bounds the
mass-estimation error by a sufficiently small constant times $\zeta$
times the true active count whenever the loop continues, as required by
the multiplicative-weights potential argument.
All thresholds use only public parameters and previous private outputs.
With this schedule, the PREM analysis gives relative error $\zeta$,
additive count error $O(\alpha)$, and total failure probability at most $\beta$.
The scheduled cap $T\gtrsim\zeta^{-2}$ and the two noise scales give
\[
 \frac{a_{\mathrm{mass}}^{-1}}{\zeta}
 \le C I a_{\mathrm{RM}}^{-1}.
\]
Consequently,
\[
 \alpha\le CIa_{\mathrm{RM}}^{-1}\log(eQIT/\beta).
\]
Substituting the bounds on $I,T,a_{\mathrm{RM}}$ gives
\[
 \alpha\le\frac{C}{\zeta}
 [\log(es)]^{3/2}
 \sqrt{\log(e|G|)}[\log(eIT/\delta_0)]^{3/2}
 \log(eQIT/\beta).
\]

Polynomial grid and query sizes and the bounds
$\zeta\ge c_d\Neff^{-1/2}$ for both choices below in this range give
$\log(e|G|)$, $\log(eQ)$, and $\log(1/\zeta)$ of order
$O_d(\log(e\Neff))$.
Also,
\[
 \log(e/\delta_0)=O(\log(e/\delta)),\qquad
 \log(es)=O(\log(e\Neff)+\log\log(e/\delta)).
\]
In the range $\Neff>C_d\Lambda^5$,
$\log(e/\delta)\le C_d\Neff^{1/5}$ and hence
$\log\log(e/\delta)=O_d(\log(e\Neff))$.
Each logarithmic factor is thus $O_d(\Lambda)$, giving
\[
 \alpha\le C_d\Lambda^{9/2}/\zeta
 \le C_d\mathcal L/\zeta.
\]
The original query lists and $G$ have polynomial size in $\Neff$,
so their threshold expansion also has polynomial size and leaves
$\log(eQ)=O_d(\log(e\Neff))$ unchanged in order.
Dividing by $s\gtrsim\Neff$ and applying the exact threshold decomposition
gives the stated query-mean guarantee with the same $\zeta$ and additive
bound. The finite expansion can be constructed by sorting query values
on $G$, so it also preserves polynomial arithmetic runtime.
On PREM failure, return the zero histogram; privacy is preserved and this
failure is included in $\beta$.

\smallskip
\noindent\textit{Fresh-sample concentration.}
For each later call, condition on earlier private outputs to fix its
query functions, and draw fresh sample indices independently of those outputs.
The difference between a subsample average and its database mean is at most

\begin{equation}\label{eq:neural-subsample-error}
 C_d\left\{\sqrt{\frac{\mu\log(e\Neff)}s}+
                 \frac{\log(e\Neff)}s\right\}
\end{equation}
simultaneously for all queries whose database mean is at most $\mu$,
except with probability $\Neff^{-2}$.
Indeed, for one query, the Bernstein bound for sampling without
replacement \cite[Proposition~1.4]{bardenet2015sampling} and
$\operatorname{Var}(f)\le\E f$ for $0\le f\le1$ give error
\[
 C\{\sqrt{\mu t/s}+t/s\}
\]
with failure probability at most $2e^{-t}$.
For $Q$ queries, take
\[
 t=\log(2Q\Neff^2)=O_d(\log(e\Neff))
\]
and sum the failure probabilities.

\paragraph{\texorpdfstring{{First release: selecting directions from a private histogram.}}{First release: selecting directions from a private histogram.}}

For $v,z\in V_0$, let $D_{v,z}=|H_v-H_z|$ indicate different activation
signs. Apply PREM with $\zeta_1=c_dh$ to all these binary queries and the
constant-one query, then normalize its histogram to a probability $q_0$
on $G$, using a fixed public probability if its mass is zero.
Write $X_G$ for the rounded database, $b=\widehat\nu/s$, and $M=b(G)$.
On the PREM and sampling event, the constant-one query gives
\[
 |M-1|\le C_d\{h+\mathcal L/(h\Neff)\},
\]
and, for $D=D_{v,z}$, PREM and \eqref{eq:neural-subsample-error} give
\[
 |bD-p_{X_G}D|
 \le C_d\{h+\mathcal L/(h\Neff)+\sqrt{\mathcal L/\Neff}\}.
\]
Since $h^2\ge C_d\mathcal L/\Neff$, both
$\mathcal L/(h\Neff)$ and $\sqrt{\mathcal L/\Neff}$ are $O_d(h)$.
Both errors are therefore $O_d(h)$, with $1/2\le M\le3/2$ for $h\le c_d$.
On this event $q_0=b/M$ and $0\le p_{X_G}D\le1$, giving

\[
 |q_0D-p_{X_G}D|
 \le M^{-1}\bigl(|bD-p_{X_G}D|+|M-1|p_{X_G}D\bigr)
 \le C_dh.
\]

Select directions using the sum of parameter distance and activation
disagreement under $q_0$:
\[
 \rho(v,z)=\norm{v-z}_2+q_0D_{v,z}.
\]
For the size bound, lift $q_0$ to the sphere and mix it with uniform measure:
\[
 \overline q_0=(z_\cdot)_\#q_0,\qquad
 \omega=\tfrac12\overline q_0+\tfrac12\tau.
\]
For the hemisphere $C_v=\{u\in\mathbb S^d:v^\top u>0\}$,
rotational invariance gives
$\tau(C_v\triangle C_z)=\angle(v,z)/\pi$. Since chord length is at
most angle and $v^\top z_g$ has the same sign as $v^\top u_g$,
\[
 \omega(C_v\triangle C_z)
 =\tfrac12q_0D_{v,z}+\frac{\angle(v,z)}{2\pi}
 \ge\frac{\rho(v,z)}{2\pi}.
\]
Hemisphere traces on $m$ sphere points number at most $C_dm^d$, since
the corresponding parameter-space great spheres have $O_d(m^d)$ faces
of all dimensions, including lower-dimensional boundary faces. Each
relative interior has a constant trace.
The sampling form of Haussler's packing lemma
\cite[Section~1]{mustafa2016packing} bounds an $\epsilon$-separated
family in $L^1(\omega)$ by twice its expected number of traces on
$O_d(1/\epsilon)$ sample points. Thus its size is at most
$C_d\epsilon^{-d}$: VC dimension $d+1$ affects the sample-size constant,
whereas the trace exponent $d$ determines the power of $\epsilon$.
To extend the finite uniform version to $\omega$, partition the sphere
by the hemispheres in the finite family, approximate cell masses by rational
weights, and realize them by repeated atoms. This preserves separation
up to a factor two and gives the same bound for $\omega$.

Scan $V_0$ in a fixed order, selecting a direction whenever its
$\rho$-distance from all previously selected directions exceeds $h$.
The resulting maximal $h$-separated set $V\subset V_0$ gives an
$h/(2\pi)$-separated family of hemispheres in $L^1(\omega)$, so
$K\coloneqq|V|\le C_dh^{-d}$.
For each $v\in V_0$, choose a nearby selected direction $z(v)\in V$
with $\rho(v,z(v))\le h$, breaking ties in a fixed order.
The comparison between $q_0D$ and $p_{X_G}D$ then gives

\begin{equation}\label{eq:neural-private-net}
 \norm{v-z(v)}_2\le h,\qquad
 p_{X_G}D_{v,z(v)}\le C_dh.
\end{equation}

The choices of $V,z(v)$ postprocess the private histogram, costing no
further privacy. The size bound and first inequality in
\eqref{eq:neural-private-net} hold for every histogram, including the
fallback; the second requires the PREM and sampling bounds.

\paragraph{\texorpdfstring{{Second and third releases: estimating neuron values and subgradients.}}{Second and third releases: estimating neuron values and subgradients.}}

For the selected directions $z\in V$, estimate neuron values and the
chosen subgradients with respect to the augmented parameter $v=(b,w)$ using
\[
 A_{z,x}=a_z(x),\qquad
 B_{(z,j),x}=u_{x,j}H_z(x),\qquad z\in V,\quad 0\le j\le d.
\]
Columns index $x\in G$; their averages over the records $g_i$ give
the mean outputs for $A$ and averages of the chosen subgradients at
$v=z$ for $B$.
Small column norms in a factorization bound sensitivity, while small row
norms bound Gaussian noise. We use
\[
 \begin{aligned}
 \gamma_2(W)&=\inf_{W=TC}
   \max_i\norm{T_{i,\cdot}}_2\max_j\norm{C_{\cdot,j}}_2,\\
 \operatorname{herdisc}(W)&=\max_I\min_{\chi\in\{-1,1\}^{|I|}}
   \norm{W_I\chi}_\infty,
 \end{aligned}
\]
where $I$ ranges over column subsets and $W_I$ contains those columns.
The norm $\gamma_2$ satisfies the triangle inequality and is unchanged
by transposition. The determinant lower bound is at most a constant
times hereditary discrepancy, so \cite[Theorem~3.1]{matousek2020factorization}
gives, for nonzero $W$,
\[
 \gamma_2(W)\le
 C\log(e\operatorname{rank}W)\operatorname{herdisc}(W).
\]
We will prove
\begin{equation}\label{eq:neural-factorization}
 \gamma_2(A)\le C_d\log(eK)\max\{1,K^{(d-3)/(2d)}\},\qquad
 \gamma_2(B)\le C_d\log(eK)K^{(d-1)/(2d)}.
\end{equation}

For $d\le3$, use the spherical harmonics $Y_{ja}$, multiplicities
$M_{j,d}$, and ReLU multipliers $\lambda_j$ from
Lemma~\ref{lem:relu-harmonic-multipliers}. For $0<\varrho<1$ and
$u,v,z\in\mathbb S^d$, define
\[
 \mathcal P_{\varrho}(v,u)
 =\frac{1-\varrho^2}
 {(1-2\varrho v^\top u+\varrho^2)^{(d+1)/2}},\qquad
 F_{\varrho}(v,z)
 =\int\mathcal P_{\varrho}(v,u)\sigma(u^\top z)\,d\tau(u).
\]
The Poisson kernel is nonnegative, integrates to one against $\tau$,
and multiplies each degree-$j$ harmonic by $\varrho^j$
\cite[Theorem~5.33]{axler2001harmonic}.
For $U$ with density $\mathcal P_{\varrho}(v,\cdot)$, the degree-one
multipliers give $\E U=\varrho v$ and
$\E\|U-v\|_2^2=2(1-\varrho)$. ReLU is one-Lipschitz, so
\[
 |F_{\varrho}(v,z)-\sigma(v^\top z)|
 \le\E\|U-v\|_2\le\sqrt{2(1-\varrho)}.
\]
Set $A_{\varrho}=(r_xF_{\varrho}(z,z_x))_{z\in V,x\in G}$.
The harmonic multipliers give
\[
 A_{\varrho}
 =\sum_{j\ge0}\varrho^j\lambda_j
 \left(r_x\sum_{a=1}^{M_{j,d}}Y_{ja}(z)Y_{ja}(z_x)\right)_{z,x}.
\]
This series converges uniformly: the addition identity and
Cauchy--Schwarz give $\sum_a|Y_{ja}(z)Y_{ja}(z_x)|\le M_{j,d}$,
and $\sum_j\varrho^j|\lambda_j|M_{j,d}<\infty$.
For each degree, the displayed matrix factors with row vectors
$(Y_{ja}(z))_a$ and column vectors $r_x(Y_{ja}(z_x))_a$, whose norms
are $\sqrt{M_{j,d}}$ and at most $R_d\sqrt{M_{j,d}}$.
Thus the triangle inequality bounds the factorization norm of each
partial sum by $R_d\sum_j\varrho^j|\lambda_j|M_{j,d}$.
For any matrix $E$ with $K$ rows, the factorization $E=I_KE$ gives
$\gamma_2(E)\le\sqrt K\max_{z,x}|E_{z,x}|$; hence uniform convergence
passes this bound to $A_{\varrho}$:
\[
 \begin{aligned}
 \gamma_2(A_{\varrho})
 &\le R_d\sum_{j\ge0}\varrho^j|\lambda_j|M_{j,d}\\
 &\le C_d\left(1+\sum_{j\ge2}\varrho^j j^{(d-5)/2}\right)
 \le C_d\begin{cases}
 1,&d=1,2,\\
 \log(e/(1-\varrho)),&d=3.
 \end{cases}
 \end{aligned}
\]
For $d=1,2$, the series is summable even at $\varrho=1$; for $d=3$,
use $\sum_{j\ge1}\varrho^j/j=-\log(1-\varrho)$.
The approximation bound and the same factorization of $A-A_{\varrho}$
give
\[
 \gamma_2(A)\le\gamma_2(A_{\varrho})
       +R_d\sqrt{2K(1-\varrho)}.
\]
Choose $\varrho=1-1/\max\{2,K\}$, so $K(1-\varrho)\le1$.
It follows that $\gamma_2(A)\le C_d$ for $d=1,2$ and
$\gamma_2(A)\le C_d\log(eK)$ for $d=3$, as required.
Poisson smoothing is used only to prove this bound; the algorithm
still estimates the original neuron averages.

For $d\ge4$, set $W_{x,z}=|z_x^\top z|$, with columns indexed by $z\in V$.
For every even subset $P\subset V$,
\cite[pp.~67--68]{matousek1996zonoids} gives signs
$\chi_z\in\{-1,1\}$ such that

\[
 \sup_{u\in\mathbb S^d}
 \left|\sum_{z\in P}\chi_z|u^\top z|\right|
 \le C_d|P|^{1/2-3/(2d)}.
\]

For odd $P$, remove one point and give it either sign, adding at most
one to the sum. Absorbing small subsets into $C_d$ and using
$\operatorname{rank}W\le K$ gives
\[
 \operatorname{herdisc}(W)\le C_dK^{(d-3)/(2d)},\qquad
 \gamma_2(W)\le C_d\log(eK)K^{(d-3)/(2d)}.
\]
Multiplying row $x$ by $r_x\le R_d$ and transposing bounds
$\gamma_2((|z^\top u_x|)_{z,x})$ by $R_d\gamma_2(W)$.
The matrix $(z^\top u_x)_{z,x}$ factors through $\mathbb R^{d+1}$ with
row norms at most one and column norms at most $R_d$.
Thus $\sigma(t)=(|t|+t)/2$ and the triangle inequality prove the bound for $A$.

For $B$, group the directions by the index and sign of their largest
absolute coordinate, breaking ties in a fixed order. In each of these
$2(d+1)$ groups,
\[
 \begin{aligned}
 v&=\varsigma(e_j+y)/\sqrt{1+\|y\|_2^2},
 \qquad \varsigma\in\{-1,1\},\quad y_j=0,\quad |y_i|\le1,\\
 H_v(x)&=\mathbf1_{\{\varsigma(u_{x,j}+\sum_{i\ne j}y_i u_{x,i})>0\}},
 \end{aligned}
\]
so $H_v(x)$ is a halfspace indicator in the $d$ coordinates of $y$.
For any subset of $k$ points in one group,
\cite{matousek1995halfspaces} gives signs whose sum in every
halfspace is at most $C_dk^{(d-1)/(2d)}$ in absolute value.
This also applies to strict halfspaces: on a finite point set, shifting
the boundary slightly represents the same trace by a closed halfspace.
For $d=1$, the halfspaces are rays; alternating signs along the
ordered points gives discrepancy at most one.
Combining the signs chosen separately in the $2(d+1)$ groups changes
only $C_d$. Apply the $\gamma_2$ bound to $(H_z(x))_{x,z}$, whose columns
index the directions, and transpose back.
Multiplying column $x$ by $u_{x,j}\in[-1,1]$ cannot increase
$\gamma_2$. Stack the $d+1$ subgradient-coordinate factors in orthogonal
blocks: column norms increase by at most $\sqrt{d+1}$, while row bounds
stay unchanged.
This proves the second part of \eqref{eq:neural-factorization}.

For each possible $V$, fix finite factorizations $W=T_WC_W$, $W=A,B$,
within a factor two of $\gamma_2(W)$. Normalize their column norms to
at most one; their row norms are then at most twice the respective
bounds in \eqref{eq:neural-factorization}.
Writing $c_W(x)$ for column $x$ of $C_W$, use independent Gaussian
vectors for the two releases:
\[
 \begin{aligned}
 N_{\eps/4,\delta/4}&\coloneqq
 \frac{n(\eps/4)}{\sqrt{2\log(5/\delta)}}\ge c\Neff,\\
 \widehat b_W&=\frac1n\sum_{i=1}^n c_W(g_i)
       +\frac{2}{N_{\eps/4,\delta/4}}\xi_W,
 \qquad \xi_W\sim\mathcal N(0,I).
 \end{aligned}
\]
For every fixed $V$ and every database in $G^n$, replacement sensitivity
is at most $2/n$, so each release is $(\eps/4,\delta/4)$-DP.
Multiplying by $T_W$ estimates neuron averages for $A$ and averages of
the chosen subgradients for $B$, with noise standard deviation
$2\|t\|_2/N_{\eps/4,\delta/4}$ in row $t$.
Let $e_A,e_B$ be the largest errors in estimating
$n^{-1}\sum_i a_z(g_i)$ and $n^{-1}\sum_i u_{g_i,j}H_z(g_i)$,
respectively. A Gaussian union bound over $K$ values and $(d+1)K$
subgradient coordinates, with $\log(eK)=O_d(\log(e\Neff))$, gives
\[
 e_A\le C_d\mathcal L\max\{1,K^{(d-3)/(2d)}\}/\Neff,\qquad
 e_B\le C_d\mathcal L K^{(d-1)/(2d)}/\Neff
\]
with failure probability at most $2\Neff^{-2}$.

\paragraph{\texorpdfstring{{Fourth release and recovery.}}{Fourth release and recovery.}}

For each $v\in V_0$, let $z=z(v)$ be its nearby selected direction.
Define the linearization error at $z$ by

\[
 R_v(x)=a_v(x)-a_z(x)-(v-z)^\top u_xH_z(x).
\]

The subgradient inequalities at $z$ and $v$, respectively, give
\[
 0\le R_v(x)\le(v-z)^\top u_x\,[H_v(x)-H_z(x)]
 =|(v-z)^\top u_x|D_{v,z}(x)\le R_dhD_{v,z}(x).
\]
The equality uses the monotonicity of $t\mapsto\mathbf1_{\{t>0\}}$;
the last bound uses $\|v-z\|_2\le h$ and $\|u_x\|_2\le R_d$.
The chosen subgradients make both inequalities valid also at zero.
Thus $r_v=R_v/(R_dh)$ takes values in $[0,1]$ for every first-stage
output, because the distance bound in \eqref{eq:neural-private-net}
always holds. On the first-stage accuracy event, the sign-disagreement
bound also gives $\int r_v\,dp_{X_G}\le C_dh$.
Apply PREM to the binary threshold queries representing all $r_v$
through the exact decomposition above, using a fresh subsample and
\[
 \zeta_2=c_d\sqrt{\mathcal L/(h\Neff)}.
\]
By \eqref{eq:neural-subsample-error}, the subsample averages of the
original functions $r_v$ are also $O_d(h)$ on the sampling accuracy event.
Applying the exact threshold decomposition to the output histogram
estimates each $r_v$; multiplying by $R_dh$ then gives maximum remainder error

\[
 e_R\le C_dh\left\{\zeta_2h+
 \frac{\mathcal L}{\zeta_2\Neff}
       +\sqrt{\frac{h\mathcal L}{\Neff}}
       +\frac{\mathcal L}{\Neff}\right\}
 \le C_dh^{3/2}\sqrt{\mathcal L/\Neff}.
\]

Here $\zeta_2h$ and $\mathcal L/(\zeta_2\Neff)$ are both constant
multiples of $\sqrt{h\mathcal L/\Neff}$, and
$\mathcal L/\Neff\le C_dh$ bounds the last term by the same quantity.
The constants in the nontrivial range ensure $0<\zeta_2<1/2$.

For every first-stage output, the selected directions, factorizations,
and functions $r_v$ depend only on that output and public objects.
The two Gaussian releases and the fresh-sample PREM release are therefore
private for every history, whether or not its accuracy bounds hold.
By adaptive composition \cite[Theorem~B.1]{dwork2014}, their budgets,
together with the first release, give $(\eps,\delta)$ privacy.

For each $v\in V_0$, sum the estimated average of $a_z$, the inner
product of $v-z$ with the estimated average of the chosen subgradient,
and the estimated average of $R_v$. This estimates $\int a_v\,dp_{X_G}$ by definition of
$R_v$. Clip the estimates to $[0,R_d]$, which contains every true
neuron average, and call the resulting vector $\widehat\theta$.
Clipping cannot increase the error. With
\[
 \theta(p)=(\int a_v\,dp)_{v\in V_0},
\]
the PREM, subsampling, and Gaussian accuracy bounds give
\[
 \|\widehat\theta-\theta(p_{X_G})\|_\infty
 \le e_A+R_dh e_B+e_R,
\]
because $\|v-z(v)\|_1\le R_dh$. Substituting $K\le C_dh^{-d}$,
the subgradient contribution satisfies
\[
 h e_B\le C_d\frac{\mathcal L}{\Neff}h^{-(d-3)/2}
 \le C_d\frac{\mathcal L}{\Neff}\max\{1,h^{-(d-3)/2}\}.
\]
Rounding adds at most $\eta$ to the error relative to $\theta(p_X)$.
All accuracy bounds hold except with probability $C_d\Neff^{-2}$;
on failure, clipping bounds the error by $R_d$.
Thus $\mathcal A(X)=\widehat\theta$ satisfies
\[
 \epriv
 \le C_d\left\{
 \frac{\mathcal L}{\Neff}\max\{1,h^{-(d-3)/2}\}
 +h^{3/2}\sqrt{\mathcal L/\Neff}+\eta+\Neff^{-2}\right\}.
\]

To verify the comparison condition, choose $v_0\in V_0$ within $\eta$
of any $v\in\mathbb S^d$. ReLU is one-Lipschitz, so
\[
 |a_v(x)-a_{v_0}(x)|\le R_d\eta\qquad\text{for every }x.
\]
Hence, for any probabilities $p,q$,

\[
 d_{\mathrm{NN}}(p,q)
 \le\|\theta(p)-\theta(q)\|_\infty+2R_d\eta.
\]

This is \eqref{eq:moment-comparison} with comparison constant $1$
and $\etail=2R_d\eta$. Rounding any probability $p$ to $G$ gives
$\theta(p^G)\in\mathcal C_G$ and
$\|\theta(p)-\theta(p^G)\|_\infty\le\eta$, proving $\egrid\le\eta$.
Use Algorithm~\ref{alg:features} with $\eopt=\eta=\Neff^{-2}$.
Theorem~\ref{thm:moment-synthesis} yields
\[
 \sup_X\E d_{\mathrm{NN}}(p_X,q)\le C_d(\epriv+\eta),
\]
where $C_d$ absorbs the numerical factor from fitting; sampling adds
at most $s_{\mathcal F_{\mathrm{NN}}}(m)$.

Substitute $h=(C_d\mathcal L/\Neff)^{1/\max\{d,3\}}$.
For $d\le3$, both error terms are $O_d(\mathcal L/\Neff)$.
For $d\ge4$, both have order
$(\mathcal L/\Neff)^{(d+3)/(2d)}$.
The terms $\eta$ and $\Neff^{-2}$ are smaller in the range considered.
Consequently,
\[
 \sup_X\E d_{\mathrm{NN}}(p_X,q)
 \le C_d(\mathcal L/\Neff)^{\min\{1,(d+3)/(2d)\}}.
\]
Since $\mathcal L=\Lambda^5$, $\Lambda\ge1$, and the exponent is at most
one, the logarithmic factor is at most $\Lambda^5$ in every dimension.

Finally, draw $Y_1,\ldots,Y_m$ independently from $q$.
Symmetrization and ReLU contraction
\cite[proof of Theorem~8 and Theorem~12(4)]{bartlett2002}
bound the expected sampling error
in terms of the largest absolute value among $d+1$ coordinate sums:

\[
 \sup_q\E d_{\mathrm{NN}}(q,p_Y)
 \le4\E\left\|m^{-1}\sum_{i=1}^m\varepsilon_i(1,Y_i)\right\|_\infty
 \le4\sqrt{2\log(2(d+1))/m}.
\]

The independent Rademacher signs $\varepsilon_i$ are independent of $Y$.
Symmetrization and contraction with $\sigma(0)=0$ each contribute at most
a factor two. Maximizing the linear sum over $\|(b,w)\|_1\le1$ gives
its largest absolute coordinate. Each coordinate of $(1,Y_i)$ is bounded
by one, so its signed average is sub-Gaussian with variance proxy $1/m$;
the maximum over $d+1$ coordinates gives the last inequality uniformly in $q$.
Adding the sampling error proves \eqref{eq:neural-upper}.

\subsection{Minimax lower bound}

\paragraph[Comparison with a Matern kernel.]{\texorpdfstring{{Comparison with a Mat\'ern kernel.}}{Comparison with a Matern kernel.}}

For the ReLU feature kernel in \eqref{eq:relu-feature-kernel}, squared
MMD is the average squared neuron discrepancy. For $\mu=p-q$, Fubini gives
\[
 \begin{aligned}
 \MMD_{K_{\mathrm{RF}}}(p,q)^2
 &=\int_{\mathbb S^d}\left|\int a_v(x)\,d\mu(x)\right|^2\,d\tau(v)\le R_d^2 d_{\mathrm{NN}}(p,q)^2.
 \end{aligned}
\]
For the last inequality, $v/R_d$ satisfies the neuron constraint
$\|v/R_d\|_1\le1$, and ReLU homogeneity gives
$|\mu a_v|=R_d|\mu a_{v/R_d}|\le R_d d_{\mathrm{NN}}(p,q)$.
Lemma~\ref{lem:relu-feature-comparison} now yields
\[
 \MMD_{K_{3/2,1}}(p,q)
 \le C_d\MMD_{K_{\mathrm{RF}}}(p,q)
 \le C_d d_{\mathrm{NN}}(p,q).
\]
Apply this comparison to any private output, take expectations and the
worst case over databases, and use Theorem~\ref{thm:matern-rates}.
Its lower-bound exponent is
\[
 \min\{1,1/2+3/(2d)\}=\min\{1,(d+3)/(2d)\}
\]
under the same privacy conditions. This proves \eqref{eq:neural-lower}.

\end{document}